\newtheorem{xmpl}{Example}   
\newtheorem{defn}{Definition}
\newtheorem{proposition}{Proposition}
\newtheorem{lemma}{Lemma}
\title{The Price of Local Fairness in Multistage Selection\ifthenelse{\boolean{full}}{}{\footnote{The full version of this paper is available at the following link: \url{https://hal.inria.fr/hal-02145071/document}}}}
\author{
\ifthenelse{\boolean{anon}}{
Anonymous Author(s)
}{
Vitalii Emelianov$^1$
\and George Arvanitakis$^1$
\and Nicolas Gast $^{1}$
\and Krishna Gummadi$^2$
\And Patrick Loiseau$^{1,2}$
\affiliations
$^1$Univ. Grenoble Alpes, Inria, CNRS, Grenoble INP, LIG\\
$^2$Max Planck Institute for Software Systems
\emails
\{vitalii.emelianov, george.arvanitakis, nicolas.gast, patrick.loiseau\}@inria.fr,
gummadi@mpi-sws.org
}
}
\begin{document}

\maketitle

\begin{abstract}

The rise of algorithmic decision making led to active researches on how to define and guarantee fairness, mostly focusing on one-shot decision making. In several important applications such as hiring, however, decisions are made in multiple stage with additional information at each stage. In such cases, fairness issues remain poorly understood. 

In this paper we study fairness in $k$-stage selection problems where additional features are observed at every stage. 
We first introduce two fairness notions, local (per stage) and global (final stage) fairness, that extend the classical fairness notions to the $k$-stage setting. We propose a simple model based on a probabilistic formulation and show that the locally and globally fair selections that maximize precision can be computed via a linear program. 
We then define the \emph{price of local fairness} to measure the loss of precision induced by local constraints; and investigate theoretically and empirically this quantity. %as the ratio of precisions between globally and locally fair selections. 
%We investigate it theoretically and empirically. 
In particular, our experiments show that the price of local fairness is generally smaller when the sensitive attribute is observed at the first stage; but globally fair selections are more locally fair when the sensitive attribute is observed at the second stage---hence in both cases it is often possible to have a selection that has a small price of local fairness and is close to locally fair.

\end{abstract}

\section{Introduction}
\label{section: introduction}

%Algorithmic decision making is used as a support or even as a replacement for human decision making in increasingly many applications from targeted advertisement to crime prediction \cite{Policing}. Notwithstanding its promises in particular in terms of scalability and prediction accuracy, algorithmic decision making also raises critical concerns regarding potential unfairness and discrimination against groups with certain traits supported by recent empirical evidences \cite{AdsDiscrimination,careerAds,Propublica}. This raised the question of what fairness in algorithmic decision making is and how to guarantee it. This has been the subject of an exponentially growing literature \cite{Zafar17a,Zafar17b,Chouldechova17a,Hardt:2016,Corbett-Davies:2017,Dwork11,KleinbergMR16,Kilbertus17} (see also related works below).

The rise of algorithmic decision making in applications ranging from hiring to crime prediction \cite{Policing} has raised critical concerns regarding potential unfairness towards groups with certain traits, supported by recent empirical evidences of discrimination \cite{careerAds,Propublica}. 
This led to a fast-growing body of literature on what fairness in algorithmic decision making is and how to guarantee it (see related works below). %\cite{Zafar17a,Zafar17b,Chouldechova17a,Hardt:2016,Corbett-Davies:2017,Dwork11,Kilbertus17} (see also related works below).

%The aforementioned literature considers one-shot decision processes whereby, from a set of features observed about an individual---one of them being a sensitive feature based on which discrimination is defined---, one needs to decide whether or not to ``select'' him/her. The problem in this setting is how to learn a decision rule from past data that respects certain fairness constraints. 
The existing literature typically considers one-shot decision processes whereby, from a set of features observed about an individual---one of them being a `sensitive feature' based on which discrimination is defined---, one needs to decide whether or not to ``select'' him/her (where select can mean hire, grant a loan or parole, etc. depending on the context). The problem in this setting is how to learn a decision rule from past data that respects certain fairness constraints. 
In many applications, however, decisions are made in multiple stages. In hiring for instance, a subset of candidates is first selected for interview based on resume (or high-level candidate's information) and a final selection is then made from the subset of interviewed candidates. 
%In online advertising,
%a company may preselect an audience based on high-level attributes and
%seek further information about the individuals in that audience to
%refine the targeting. 
In police practices, there are often multiple stages of decisions with increasingly high levels of investigation of the individuals not released at the previous stage; as for instance in the famous stop-question-and-frisk practice by the New-York City Police Department. 
%In all those
%applications, there is naturally a fixed limit (or budget) of
%candidates that can be selected at each stage;
%the problem is therefore a \emph{selection} problem rather than a
%\emph{classification} problem as studied in the aforementioned
%literature (where the number of individuals labeled 1 is not fixed).

A distinctive specificity of the multistage setting, besides the fact that decisions are made in multiple stages, is that in many cases additional features get known at later stages for the subset of individuals selected at earlier stages, but one needs to make the early-stage selection without observing those features. This raises a number of new questions that are fundamental to fair multistage selection. First, given that there are multiple layers of decisions, \emph{how should fairness be defined?} In particular, should it be defined at each individual stage, on the final decision, or otherwise? Second, given that one has to make decisions with only partial information at early stages, \emph{how to make an optimal selection?} Finally, given that the sensitive feature can be observed at different stages, \emph{is it better to observe the sensitive feature at earlier or later stages (for both fairness and utility)?} This last question intuitively relates to recurrent public debates such as ``should gender identification be removed from CVs?''. 

%may have potential practical relevance from a regulator's viewpoint; for instance in hiring, one might ask whether  

%\paragraph{Contributions}
In this paper, we study the $k$-stage selection problem, in which there is a fixed limit (or budget) of candidates that can be selected at each stage (as is natural in the applications discussed). To tackle the questions above, we propose a simple model based on a probabilistic formulation in which we assume perfect knowledge of the joint distribution of features at all stages and of the conditional probability of being a desirable candidate conditioned on feature values. Based on this model, we are then able to make the following contributions. 
 
We introduce two meaningful notions of fairness for the $k$-stage setting: \emph{local fairness} (the selection is fair at each stage) and \emph{global fairness} (only the final selection needs to be fair). These definitions extend classical group fairness notions for one-stage decision making (such as demographic parity or equal opportunity) to the multistage setting and they apply regardless of when the sensitive feature is observed (at first stage or later). We show that local fairness implies global fairness and we propose a linear formulation of the problem that allows us to compute the selection algorithm that maximizes precision while satisfying (local or global) fairness and per-stage budget constraints in expectation. As local fairness is a more restrictive condition, the precision of the optimal globally fair algorithm is naturally higher than for the locally fair algorithm. To capture this gap, we define the \emph{price of local fairness} ($PoLF$) as the ratio of the two and prove a simple upper bound---showing that imposing local fairness cannot be arbitrarily bad.
We also define the notion of violation of local fairness ($VoLF$) to capture how far from locally fair the optimal globally fair algorithm is.
%as the absolute value of the violation of the
%local fairness constraints by the optimal globally fair algorithm.

Finally, we conduct a numerical study in a two-stage setting using three classical datasets. Our results show that the $PoLF$ can be large (up to $1.6$ in some cases). This implies that in some cases, enforcing local fairness constraints can reduce the precision by 60\% compared to a globally fair algorithm. The $VoLF$ is also sometimes large (up to $0.6$ in our experiments), which means that imposing only a global fairness constraint can be highly unfair at intermediate stages. 
We finally compare what happens when the sensitive feature is observed at the first stage or at the second stage. % or cannot be observed. 
We find that the $PoLF$ is generally higher when the sensitive feature is observed at the second stage; while conversely the $VoLF$ is generally higher when the sensitive feature is observed at the first stage. 
These results show that, in most cases, it is possible to get at least approximate fairness at each stage and precision close to globally-fair optimal together; either by imposing local fairness if the sensitive feature is observed at first stage (where $PoLF$ is small) or by hiding the sensitive feature at first stage and using a globally fair algorithm (which is close to locally fair since $VoLF$ is then small). 

%i.e., imposing local fairness constraints hurts precision less if the sensitive feature is observed at first stage. 
%Conversely, $VoLF$ is generally higher when the sensitive feature is observed at the first stage; meaning that a globally fair algorithm  will tend to discriminate less if the sensitive feature is revealed only later.  
%These results show that, in most cases, a regulator can get at least approximate fairness at each stage without hurting precision too much either by imposing that the sensitive feature be hidden at first stage (if it is possible) or by imposing the local fairness constraint. 
%A possible explanation for this is that
%locally fair algorithm that cannot observe the sensitive feature at
%the first stage must take very conservative decision at first stage
%and therefore cannot perform well compared to a globally fair
%algorithm that will be able to compensate for the unfair decisions
%that have been taken at the first stage when the sensitive feature is
%observed.  

Overall, our results provide intuitive answers towards better understanding fairness in multistage selection. To that end, we intentionally used the simplest model that captures the main features of a multistage selection problem and how an optimal selection algorithm is affected by the fairness notion considered and the time at which the sensitive feature is observed---rather than using a more practical but complex model. We believe that it is a good abstraction to start with, but we elaborate further on our model's limitations in Section~\ref{section: conclusion}. 
\ifthenelse{\boolean{full}}
{Some details (proofs, additional formalization and experimental results) are omitted throughout the paper and deferred to the appendices.}
{Due to space constraints, some details (proofs, additional formalization and experimental results) are omitted and can be found in the appendices of the full version.}

\subsubsection{Related Works}
As mentioned earlier, there have been many recent works on defining fairness and constructing algorithms that respect those definitions for the case of one-stage decision making \cite{Pedreshi08a,Dwork11,Kleinberg17a,Hardt:2016,Zafar17a,Chouldechova17a,Corbett-Davies:2017,Kilbertus17,Lipton18a}. Most of those works focus on classification and propose definitions of fairness based on equating some combinations of the classification outcome (true positives, true negatives, etc.).  
%Some works also investigate scenarios different from classification such as ranking \cite{Biega18a} or different types of fairness notions based for instance on utility or welfare notions \cite{Zafar17b,Heidari18b}. 
In this work, we focus on two classical notions of fairness for the one-shot classification setting: demographic parity (or disparate impact) and equal opportunity (or disparate mistreatment) \cite{Hardt:2016,Zafar17a}. 
%All the aforementioned literature considers only the one-stage setting. 
There are also works on fairness in sequential learning \cite{Joseph16a,Jabbari17a,Heidari18a,Valera18a}. The model in those papers is to sequentially consider each individual and make decision for them, but there is no notion of refining selection through multiple stages by getting additional features.

Closer to our work, a few papers investigate multistage classification/selection without fairness considerations \cite{Senator05a,Trapeznikov12a}. \cite{Schumann2018a} model the interview decisions in hiring as a multi-armed bandit problem and consider getting extra features at a cost for a subset of candidates, but they do not have fairness constraints: they propose an algorithm for their bandit problem and show that it leads to higher diversity than other algorithms.

To the best of our knowledge, our paper is the first that proposes concrete fairness notions for multistage selection and algorithms to maximize utility under fairness constraints. The only other papers discussing fairness in the context of two-stage or composed decision making are \cite{Bower17a,Dwork19a}, but they do not model additional features becoming available at the second stage for the subselected individuals, which is the key element of our analysis. 

In recent work, \cite{Kleinberg18a} consider the problem of selecting a subset of candidates to interview and show that under some condition, imposing diversity may increase utility when there is implicit bias. Their model, however, assumes no statistical knowledge of the features revealed at second stage, and they only maximize the sum of values of subsected candidates (effectively reducing to one-stage). In contrast, we do not consider implicit bias but we do model the second-stage process. Interestingly, our optimal solution also introduces diversity at the first stage selection, but for different reasons. 

%\blue{In the recent work \cite{Kleinberg18a} look at implicit bias ... }

%\paragraph{Organization note} 
%\ifthenelse{\boolean{full}}
%{Some details (proofs, additional formalization and experimental results) are omitted throughout the paper and can be found in the appendices.}
%{Due to space constraints, some details (proofs, additional formalization and experimental results) are omitted throughout the paper and can be found in the appendices of the full version \cite{full_version}.}

%%% Local Variables:
%%% mode: latex
%%% TeX-master: "main"
%%% End:

\section{Multistage Selection Framework}
\label{section: problem setup}
%!TEX root = main.tex

\subsection{Basic Setting and Notation}

Assume that there are $n$ candidates,\footnote{We use the term
  candidates in a generic sense to refer to elements of the initial
  set that can be selected.} each described by $d$ features, and
consider the following $k$-stage selection process.  At the first
stage, we observe some of the features $x_1, \dots, x_{d_1}$ of the $n$
candidates where $d_1<d$. We then select $n_1$ of them that ``pass" to
the second stage. At the second stage, we observe some extra features
of these $n_1$ candidates $x_{d_1+1}, \dots, x_{d_2}$ ($d_1 < d_2$)
that were not known at the previous stage. Using the features of both
stages, we do a selection, from the $n_1$ that passed the first stage
of $n_2 \le n_1$ candidates that pass to the next stage, and so on. At
the last stage $k$, we observe all $d_k = d$
features of the $n_{k-1}$ candidates and select $n_k$ among those who
passed the stage $k-1$.

% We assume that feature domain is finite: $x_i \in \{x^1_{i}, \dots, x^{k_i}_i \}$, where $k_i$ denotes the number of possible values of feature $x_i$. 

We assume that each candidate is endowed with a \emph{label}
$y \in \{0, 1\}$, which encodes whether the candidate is ``good" or
``bad'' according to the purpose of the selection, i.e., if $y=1$ we
would like to have this candidate in our final selection, if $y=0$ we
would prefer not. The label
$y$ is not known until the end and is therefore not available to make
the selection.

We assume that the
decision maker knows the joint distribution of features and the
conditional probability that expresses the probability that the candidate
is ``good" given all its features.  We will denote by
$p_{x_1 \dots x_d} = P(x_1, \dots, x_d)$ the probability to observe a specific realization
of features and by $ p_{x_1 \dots x_d}^{y=1}=P(y=1 | x_1, \dots, x_d)$ the probability that a
candidate is good ($y=1$) given its features $x_1\dots x_d$.

%{\color{red} We should better sell the model, i.e., explain why it is a good model because it allows answering important questions nicely}
%
%\blue{We consider this model as a further step from the case when there is no knowledge about features appearing in the following stages available \cite{Kleinberg18a}. In our case the modelling represents an intellectual algorithm that is able to predict values of new features from past data that is more close to real case scenarios.}

\subsection{Probabilistic Selection and Budget Constraints} 
\label{sec:proba_algo}

In the following, we will consider a class of selection algorithms
that perform a probabilistic selection of candidates. Such an
algorithm takes as an input a list of probability values
$p^{(i|i-1)}_{x_1\dots x_{d_{i}}}$ for all stages $i\in\{1\dots k\}$
and all possible combination of features.  Then, for each candidate
that passed stage $i-1$ and has features $(x_1\dots x_{d_i})$, the
algorithm selects this candidate for the next stage with probability
$p^{(i|i-1)}_{x_1\dots x_{d_{i}}}$, with the convention that everyone
passes stage 0.

For each stage $i$, we define a binary predictor $\hat{y}_i$ that is
equal to $1$ if the candidate is selected at stage $i$ (by convention, $\hat{y}_0=1$ for all candidates).  We assume that, \emph{on average}, the number of
candidates that can be selected by the algorithm at stage $i$ is at
most $\alpha_i n$ and exactly $\alpha_k n$ for the last stage, with
$1\ge\alpha_1\ge\dots\ge\alpha_k$. 
We denote by $\bm{\alpha}_{-k} = (\alpha_1, \cdots, \alpha_{k-1})^T$ the selection sizes of the first $k\!-\!1$ stages. 

% Figure~\ref{fig: dependencies} summarizes the dependencies between our problem variables.
% \begin{figure}
% \centering
% \includegraphics[width=0.4\linewidth]{causality}
% \caption{Causality graph.}
% \label{fig: dependencies}
% \end{figure}

% \subsection{Notation}

% \textbf{Summary of main notation.} We summarize here some notations that will be useful throughout the paper:
% \begin{align*}
% p_{x_1 \dots  x_n} &= P(x_1, \dots, x_n)  , \\
% p_{x_1  \dots  x_n}^{y=1} &= P(y=1| x_1, \dots, x_n)  ,\\
% p^{(1)}_{x_1  \dots  x_{n_1}}&= P(\hat{y}_1 = 1 | x_1, \dots, x_{n_1}) , \\
% p^{(i)}_{x_1  \dots  x_{n_i}} &= P(\hat{y}_i = 1 |\hat{y}_{i-1} = 1, x_1, \dots, x_{n_i}),\; 1 < i \leq k.
% \end{align*}
% For instance, in a two-feature example ($n=2$), $p_{01}^{y=1}$ would mean the probability that a candidate is good ($y=1$) given that its first two features have values $0$ and $1$: $P(y=1|x_1=0, x_2=1)$.

\subsection{Performance Metric}

%Until now we described our $k$-stage selection problem without defining what this selection aims to serve for. In this subsection we will 
%We now present the main performance metrics that we will use in the rest of the paper.

We measure the performance of a given selection algorithm in terms of
precision.  The precision is the fraction of the selected candidates
that indeed were ``good" for selection:\\[-2mm]
\begin{equation*}
  \mathrm{precision} \!=\! \frac{\text{True Positive }}{\text{True
                       Positive} \!+ \! \text{False Positive }}
                     \!=\!P(y\!=\!1| \hat y_k \!=\! 1),
\end{equation*}
where the denominator is the number of selected candidates.
%$\text{True Positive} \!+\! \text{False Positive} = N\alpha_k$ is
%the number of selected candidates.

The choice of precision may seem arbitrary but it is in fact a very
natural metric when the size of the final selection is fixed as in our
setting. Indeed, maximizing precision is then equivalent to maximizing
most other meaningful metrics as formalized in the next proposition.
%Regarding the choice of utility function, we should mention that except the fact that PPV seems a natural way to measure the performance of per-defined fixed size selection, additionally it turns out that in those kind of problems (fixed size selections) most of the performance metrics are equivalents. The next proposition summaries this statement. 
% There are many different utility metrics that come to mind when we consider a classification or selection task.
% \begin{defn}[Performance metrics]
% \begin{align*}
% TP &= P(\hat y_k = 1, y=1) & FP &= P(\hat y_k = 1, y=0)\\
% TPR &= P(\hat y_k = 1 | y=1) & FPR &= P(\hat y_k = 1| y=0)\\
% TN &= P(\hat y_k = 0, y=0)& FN &= P(\hat y_k = 0, y=1)\\
% TNR &= P(\hat y_k = 0 | y=0)& FNR &= P(\hat y_k = 0| y=1)\\
% NPV &= P(y= 0 | \hat y_k=0)\\
% ACC &=  P(\hat y_k = y)\\
% \end{align*}
% \end{defn}
% However, since we consider the selection process: the selection size (or number of positives) is fixed, all of them become equivalent.
\begin{proposition}
Assume that the selection size  $P(\hat y_k = 1)$ is fixed (to $\alpha_k$). Then maximization of precision is equivalent to maximization of true positive rate, true negative rate,  accuracy and $f_1$-score; and to minimization of false positive rate and false negative rate.
%\begin{enumerate}
%\item maximization of true positive rate, true negative rate,  accuracy and $f_1$-score;
%\item minimization of false positive rate and false negative rate.
%\end{enumerate}
\label{proposition: metric equivalence}
\end{proposition}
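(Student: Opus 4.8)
The plan is to observe that, once both $P(\hat y_k = 1)$ and the base rate $P(y=1)$ are held fixed, the entire confusion matrix is parametrized by a single scalar---the true positive mass---and that each of the listed metrics is a strictly monotone function of that scalar, so that all the optimizers coincide.

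First I would fix notation. Write $t = P(y=1, \hat y_k = 1)$ for the true positive probability, $s = P(\hat y_k = 1)$ for the selection size (fixed to $\alpha_k$ by hypothesis), and $\pi = P(y=1)$ for the base rate. The decisive point is that $\pi$ is determined by the joint distribution of features and labels, which the selection algorithm cannot influence; only $\hat y_k$ depends on the algorithm. Hence both $s$ and $\pi$ are constants across all admissible selections. From the marginal identities I would then express all four confusion-matrix cells affinely in $t$:
\begin{align*}
\text{TP} &= t, & \text{FP} &= s - t,\\
\text{FN} &= \pi - t, & \text{TN} &= 1 - s - \pi + t.
\end{align*}

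Second I would substitute these into each metric to expose its dependence on $t$. This gives precision $= t/s$, true positive rate $= t/\pi$, true negative rate $= (1 - s - \pi + t)/(1-\pi)$, accuracy $= 2t + 1 - s - \pi$, and $f_1 = 2t/(s+\pi)$, each increasing in $t$; while false positive rate $= (s-t)/(1-\pi)$ and false negative rate $= (\pi - t)/\pi$ are decreasing in $t$. Since the denominators $s$, $\pi$, $1-\pi$, and $s+\pi$ are strictly positive constants, each of these dependencies is strictly monotone.

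Finally I would conclude that, because every metric is a strictly monotone function of the single free quantity $t$, any selection maximizing precision also maximizes $t$, hence simultaneously maximizes every increasing metric and minimizes the two decreasing ones, and conversely. The algebra here is entirely routine; the only step genuinely worth stating rather than computing is the justification that $\pi = P(y=1)$ is constant over the feasible set---that it is a property of the environment, not of the decision rule. I expect this to be the sole ``obstacle,'' in the sense that overlooking it would make the confusion matrix a two-parameter object and break the equivalence; once it is granted, the result is immediate.
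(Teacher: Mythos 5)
Your proposal is correct and follows essentially the same route as the paper: with $P(\hat y_k=1)$ and $P(y=1)$ fixed, every confusion-matrix cell is affine in the true-positive mass $P(y=1,\hat y_k=1)$, so each listed metric is strictly monotone in it and all optimizers coincide. The paper carries out this computation explicitly only for accuracy and states that the other metrics follow the same idea, whereas you verify all of them, including the (correct) remark that the base rate is a property of the environment rather than of the decision rule.
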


%Precision is a natural metric for selection problems with fixed size. 
Additionally, there are many realistic $k$-stage selection processes for
which precision can be used as a utility metric. % with some modifications.  
\begin{xmpl}
A bank decides to whom it will give
entrepreneur loans. The procedure is in two stages: at first, $n$
candidates fill in an application form, and the first $d_1$ features of
each candidate are obtained. Some candidates are then invited for an
interview which brings additional features of those candidate that the
bank can use for its final decision of selecting $n_2$ candidates. If the profit of giving a loan to
a trustworthy candidate is $c_p$ and if a candidate that does not pay
a loan costs $c_l$, then the average gain can be written as:
\begin{small}
\begin{align*}
U_{\text{bank}} &= (c_p + c_l) \cdot n_2 \cdot P(y=1|\hat y_2 =1) - n_2 \cdot c_l.%\\%[-6mm]
% &= N_2 \cdot (c_p \cdot P(y=1 | \hat y_2 = 1) - c_l \cdot ( 1- P(y=1|\hat y_2 = 1) ) )\\
\end{align*}
\end{small}
\end{xmpl}
In this example, $c_p$, $c_l$ and selection size $n_2$ are
fixed. Hence, maximizing precision or utility is equivalent.
% \end{xmpl}

%%% Local Variables:
%%% mode: latex
%%% TeX-master: "main"
%%% End:

\section{Fairness Notions in Multistage Setting}
\label{section: fairness notions}

In this section, we propose new notions of fairness for the multistage selection problem. 
We assume that there exists, amongst all
features that describe candidates, a sensitive feature $x_s$ that indicates
whether or not a candidate belongs to a sensitive group that should not be discriminated against.

The literature has introduced multiple definitions of fairness for the single-stage setting (and it is worth mentioning that in most of the cases those fairness criteria cannot be satisfied simultaneously
\cite{Chouldechova17a}). The most relevant notions in
the context of selection problems are \emph{Demographic Parity}
(DP) and \emph{Equal Opportunity} (EO). We first recall the definition of these fairness
criteria in the traditional setting of single-stage selection. We then extend them to the multistage setting by showing that there are essentially two relevant notions of fairness: \emph{local} and \emph{global} fairness.

\subsection{Classical Fairness Notions in Single-Stage}

% The metrics introduced above all consider only the selection performance. In reality, as the outcome of the selection directly affects humans, it is often necessary to also consider notions of fairness in the selection. We describe here the relevant notions we consider. 

Let $\hat y$ be a binary predictor that decides which candidates belong to the selection. The first fairness definition, widely known as demographic parity, states that the predictor $\hat y$ is fair if it is statistically independent from $x_s$.
\begin{defn}[Demographic Parity, DP]
The binary predictor $\hat y$ satisfies DP with respect to $x_s$ if $\hat y $ and $x_s$ are independent:
\begin{equation}
P(\hat y = 1 | x_s = 0) = P(\hat y = 1| x_s=1) .
\end{equation}
\label{DPdef}
\end{defn}\vspace{-.5cm}

DP does not take into account the actual label $y$. \cite{Hardt:2016,Zafar17a} argue that DP is not the most relevant notion of fairness in cases where we have ground truth on the quality of the candidates (which is our case since we assume statistical knowledge of the probabilities of labels). In such cases, one might want to be fair among the candidates that are worth selecting, a metric called Equal Opportunity \cite{Hardt:2016} (an equivalent notion called disparate mistreatment is proposed in \cite{Zafar17a}): 
\begin{defn}[Equal Opportunity, EO \cite{Hardt:2016}]
The binary predictor $\hat y$ satisfies EO with respect to $x_s$ if $\hat y $ and $x_s$ are independent given that $y=1$:
\begin{equation}
P(\hat y = 1 | y=1, x_s = 0) = P(\hat y = 1| y=1, x_s=1).
\end{equation}
\label{EOdef}
\end{defn}\vspace{-.4cm}

In the remainder, we systematically consider DP and EO.
%-- we consider DP as a baseline and believe that it can still be a useful notion in specific scenarios even with ground truth. 

\subsection{Local and Global Fairness in Multistage}

Existing fairness notions apply to single-stage selection, where we have
only one binary predictor $\hat y$. In the case of $k$-stage
selection, we have $k$ binary predictors
$\hat y = (\hat y_1, \dots, \hat y_k)$. In this section, we develop
different notions of fairness that extend existing notions
to the $k$-stage selection setting.

We propose three definitions that we believe correspond to three reasonable notions of fairness. The high-level idea of each definition is depicted on Figure \ref{fig:fairness_definitions}. For the sake of brevity of exposition, we present the formal definitions for the demographic parity criterion, the translation to EO (or to any other fairness notion) being straightforward. 

\begin{figure}
\centering
\includegraphics[width=0.7\linewidth]{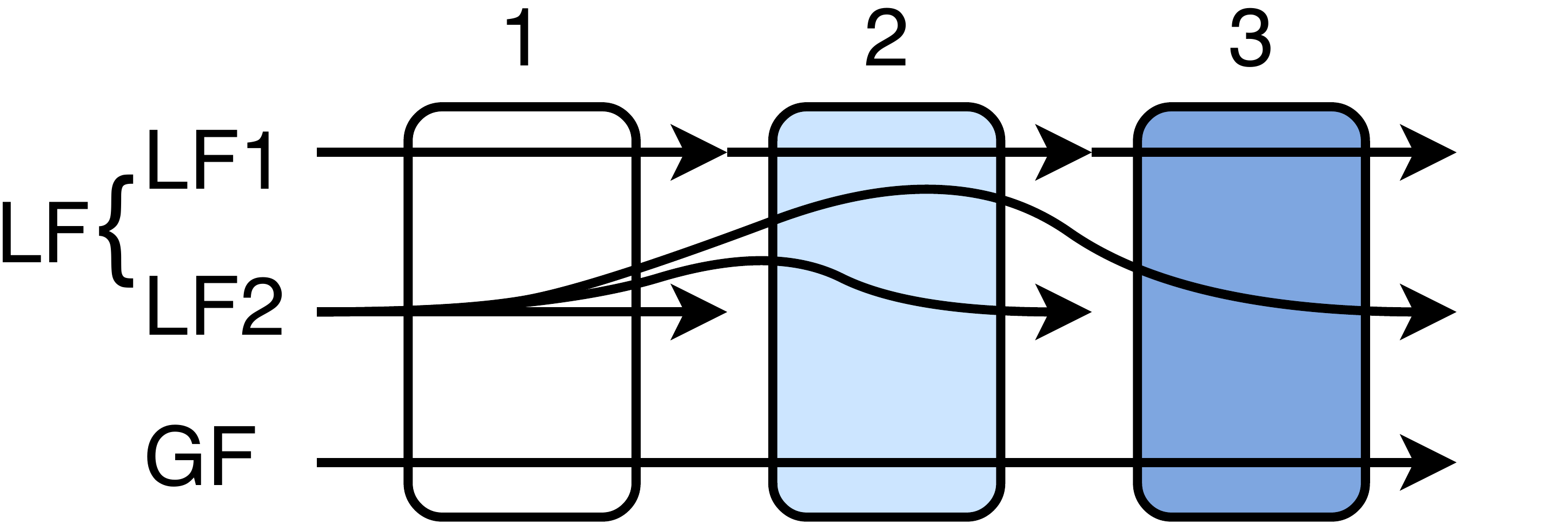}
\caption{Illustration of the different fairness definitions.}% in the 3-stage case.}
\label{fig:fairness_definitions}
\end{figure}

The first fairness notion, local fairness 1 (LF1), imposes that the selection be fair at every stage with respect to the set of candidates that reached that stage. In other words the selection of each stage $i$ is fair with respect to the population that ``passed" stage $i-1$.
\begin{defn}[Local Fairness 1, LF1]
  A $k$-stage selection algorithm satisfies LF1 if (for the case of
  DP), $\forall i \in \{ 1, \cdots, k\}$:\\[-3mm]
\begin{equation*}
%  P(\hat y_1\!=\!1 | x_s\!=\!0) &= P(\hat y_1\!=\!1 | x_s\!=\!1) \quad \textrm{ and } \\
  P(\hat y_i\!=\!1 | \hat y_{i-1} = 1, x_s\!=\!0) = P(\hat y_i\!=\!1 | \hat y_{i-1}\!=\!1, x_s\!=\!1).
\end{equation*}
\end{defn} 

The second fairness notion that we propose, local fairness 2 (LF2),
prescribes that the selection should be fair at each stage with
respect to the initial set of candidates.  
\begin{defn}[Local Fairness 2, LF2]
A $k$-stage selection algorithm satisfies LF2 if (for the case of DP),  $\forall i \in \{ 1, \cdots, k\}$:\\[-3mm]
$$P(\hat y_i=1 | x_s=0) = P(\hat y_i=1 | x_s=1).$$
\end{defn}

In the last definition, global fairness (GF), we allow the predictor $\hat y_i$ to be unfair at each stage before the last, but we require the final decision $\hat y_k$ to be fair with respect to the initial set of candidates.
\begin{defn}[Global Fairness, GF]
A $k$-stage selection algorithm satisfies GF if (for the case of DP):\\[-3mm]
\[
P(\hat y_k = 1| x_s=0) = P(\hat y_k =1 | x_s=1) .
\]
\end{defn}
Note that the above definitions can be adapted to EO by conditioning on $y=1$ in all formulas.

\subsection{Equivalence between LF1 and LF2}

In the following proposition, we show that both notions of local
fairness, LF1 and LF2 are equivalent. Therefore in the rest of the
paper, we will simply name a multistage selection algorithm that
satisfies LF1 (and thus LF2) as a being \emph{locally fair} (LF). An
algorithm satisfying the global fairness definition will be called
\emph{globally fair} (GF).
% We also show that global fairness is a weaker notion than local fairness since every locally fair algorithm is globally fair. In the next section, we will show how to maximize the precision of a multi-stage selection algorithm under the local or global constraints. 
\begin{proposition}[Relations between fairness notions]
  \label{proposition: fairness notions relation}
  % Let $ \hat y = (\hat y_1 \dots \hat y_k)$ be a $k$-stage selection
  % algorithm.
  For both DP and EO:
  \begin{enumerate}
  \item A selection algorithm satisfies LF1 if and only if it
    satisfies LF2. We call such an algorithm locally fair (LF). 
  \item A locally fair selection algorithm is globally fair (GF).
  \end{enumerate}
  \label{proposition: equivalence}
\end{proposition}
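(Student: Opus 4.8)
The plan is to exploit the nested structure of the selection: a candidate can pass stage $i$ only if it already passed stage $i-1$, so the event $\{\hat y_i = 1\}$ is contained in $\{\hat y_{i-1} = 1\}$. First I would record this nesting as $\{\hat y_k = 1\} \subseteq \dots \subseteq \{\hat y_1 = 1\} \subseteq \{\hat y_0 = 1\}$, which lets me rewrite the LF2 marginals in terms of the LF1 conditionals. Writing $a_i(s) = P(\hat y_i = 1 \mid x_s = s)$ and $b_i(s) = P(\hat y_i = 1 \mid \hat y_{i-1} = 1, x_s = s)$, the nesting gives $P(\hat y_i = 1 \mid x_s = s) = P(\hat y_i = 1, \hat y_{i-1} = 1 \mid x_s = s)$, and the chain rule then yields the recursion $a_i(s) = b_i(s)\, a_{i-1}(s)$ with $a_0(s) = 1$. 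Telescoping this recursion produces the product formula $a_i(s) = \prod_{j=1}^i b_j(s)$, which is the single identity that drives the whole proof.

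For the equivalence LF1 $\iff$ LF2, I would argue both directions from this product formula. If LF1 holds, then $b_j(0) = b_j(1)$ for every $j$, so the two products $\prod_{j=1}^i b_j(0)$ and $\prod_{j=1}^i b_j(1)$ agree term by term, giving $a_i(0) = a_i(1)$, i.e. LF2. Conversely, if LF2 holds then $a_i(0) = a_i(1)$ for all $i$, and I would recover the per-stage conditionals through the ratio $b_i(s) = a_i(s)/a_{i-1}(s)$; since both numerator and denominator agree across groups, so does the ratio, yielding LF1. An induction on $i$ makes the converse fully rigorous: assuming $b_j(0) = b_j(1)$ for $j < i$ forces $a_{i-1}(0) = a_{i-1}(1)$, and combined with $a_i(0) = a_i(1)$ this pins down $b_i(0) = b_i(1)$.

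The implication LF $\Rightarrow$ GF is then immediate, since GF is precisely the $i = k$ instance of the LF2 identity $a_k(0) = a_k(1)$, already contained in LF2. The EO versions of all three notions are obtained by conditioning additionally on $y = 1$ throughout; because the nesting $\{\hat y_i = 1\} \subseteq \{\hat y_{i-1} = 1\}$ is a statement about the predictors alone and survives this extra conditioning, the same recursion and telescoping product hold verbatim with $a_i(s)$ and $b_i(s)$ replaced by their $y=1$-conditioned counterparts, so no separate argument is needed. The main obstacle I anticipate is purely the degenerate case where $a_{i-1}(s) = 0$ for some group, i.e. no candidate of that group survives to stage $i-1$: then the LF1 conditional $b_i(s)$ is undefined and the ratio step breaks down. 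I would handle this by assuming the per-stage selection probabilities keep a strictly positive mass of each group alive at every stage, or equivalently by stating the equivalence under the convention that conditionals on probability-zero events are excluded; under this mild nondegeneracy the division in the converse direction is always legitimate.
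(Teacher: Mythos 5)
Your proposal is correct and follows essentially the same route as the paper's proof: both rest on the nesting $\{\hat y_i=1\}\subseteq\{\hat y_{i-1}=1\}$, which turns the LF1 conditional into the ratio $P(\hat y_i=1\mid x_s)/P(\hat y_{i-1}=1\mid x_s)$, and then argue by induction (your telescoping product is just the closed form of the paper's inductive step), with GF obtained as the $i=k$ case of LF2. Your explicit handling of the degenerate case $P(\hat y_{i-1}=1\mid x_s)=0$ is a small point of added care that the paper leaves implicit.
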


%%% Local Variables:
%%% mode: latex
%%% TeX-master: "main"
%%% End:

\section{Utility Maximization as a Linear Program}
\label{section: linear utility maximization}
%!TEX root = main.tex

% Our main goal is to design a $k$-stage selection mechanisms that
% maximize PPV while satisfying size constraints and potential
% fairness constraints.  In this section we show that the $k$-stage
% selection problem with fairness constraints (local and global) can
% be transformed into a linear program.

Our goal is to find the binary predictors
$(\hat y_1, \dots, \hat y_k)$ corresponding to stages from 1 to $k$,
respectively, that maximize precision while respecting budget and
fairness constraints:%\\[-2mm]
% Following this, we formulate an optimization problem as
\begin{align}
  \begin{array}{cll}
    \displaystyle\max_{\hat y_1, \dots, \hat y_k}
    &P(y=1|\hat y_k = 1)\\
    &P(\hat y_i=1)\leq \alpha_i,&i \leq k-1\\
    &P(\hat y_k=1)=\alpha_k\\
    &f_j(\hat y_1, \dots, \hat y_k)= 0,&j \leq t 
  \end{array}
  \label{pb: non-fair1}
\end{align}
% \begin{small}
% \begin{maxi}[2]
% {\hat y_1, \dots, \hat y_k}{P(y=1|\hat y_k = 1)}{}{}
% \addConstraint{P(\hat y_i=1)}{\leq \alpha_i,}{i \leq k-1}
% \addConstraint{P(\hat y_k=1)}{=\alpha_k}
% \addConstraint{f_j(\hat y_1, \dots, \hat y_k)}{= 0,\;\;}{j \leq t }
% \label{pb: non-fair1},
% \end{maxi}
where functions $f_j(\cdot)$ of the binary predictors correspond to the fairness constraints we impose. For instance, for a globally fair algorithm (DP) we have only one fairness constraint: $f(\hat y_1, \dots, \hat y_k) = P(\hat y_k = 1| x_s=0) - P(\hat y_k =1 | x_s=1) .$

Using the assumption that the final stage size constraint is $P(\hat y_k = 1) = \alpha_k$ we can
write the precision as follows:\\[-4mm]
\begin{equation}
  P(y\!=\!1 | \hat y_k \!=\! 1) \!=\! \frac{1}{\alpha_k} \! \sum \limits_{x_1 \dots x_d
  } \!\!\! p_{x_1 \dots x_d}^{y=1} p_{x_1 \dots x_d} \! \prod_{j=1}^k \! p_{x_1 \dots
    x_{d_j}}^{(j|j-1)}.%\\[-1mm]
\label{eqppv}
\end{equation}
Using the notation introduced in Section \ref{sec:proba_algo}, the
probability $P(\hat{y}_{i} = 1)$ that candidate passes stage $i$ is\\[-4mm]
\begin{equation}
  P(\hat y_i = 1) = \sum_{x_1 \dots x_{d}} p_{x_1 \dots x_{d}}
  \prod_{j=1}^i p_{x_1 \dots x_{d_j}}^{(j|j-1)}. %\\[-1mm]
  \label{eq1selection} 
\end{equation}
Hence, the constraints on the selection size 
$P(\hat y_i = 1) \leq \alpha_i$ for $i<k$ and
$P(\hat y_k = 1)  =  \alpha_k$ can be expressed using \eqref{eq1selection}.

The fairness constraints can be developed in the same manner, e.g., for the globally fair case (DP):
$$f(\hat y_1, \dots, \hat y_k) = P(\hat y_k = 1 | x_s=0) - P(\hat y_k = 1 | x_s=1),$$
where $\forall a \in \{0, 1\}$,\\[-5mm]
\begin{equation}
P(\hat y_k \!=\! 1 | x_s\!=\!a)  \!=\! \frac {\sum \limits_{x_i, i \not = s}\!\!\prod_{j=1}^k p_{x_1 \dots x_{d_j}}^{(j|j-1)} \!\! \cdot \! p_{x_1 \dots  x_s=a \dots x_d}}{\sum \limits_{x_i, i\not = s} p_{x_1 \dots x_s=a \dots x_d}}.\!%
\label{gf}
\end{equation}

From \eqref{eqppv}, we see that the objective is not linear in the
variables $p^{(j|j-1)}_{x_1 \dots x_{d_j}}$ due to the product of
probabilities. Similarly, we observe from \eqref{eq1selection} and \eqref{gf} that
the constraints are also not linear in these variables. However, we can show that by using the change of variables
$\tilde p^{(i|i-1)}_{x_1 \dots x_{d_i}}=\prod_{j=1}^i p_{x_1 \dots
  x_{d_j}}^{(j|j-1)}$, it can be made
linear. % as formalized in the next proposition.
% \begin{proposition}[Utility maximization as a linear program]
% Let for all $i \in \{1, \cdots, k\}$ and all $x_1 \dots x_{n_i}$,
% \begin{align*}
%     p^{(i)}_{x_1 \dots x_{n_i}} = 
%     \begin{cases}
%     \tilde p^{(i)}_{x_1 \dots x_{n_i}} / \tilde p^{(i-1)}_{x_1 \dots x_{n_{i-1}}}, &  \textrm{ if } \tilde p^{(i-1)}_{x_1 \dots x_{n_{i-1}}} \not = 0\\
%     0,& \textrm{ otherwise},
%     \end{cases}
% \end{align*} 
% where the variables $\tilde p^{(i)}_{x_1 \dots x_{n_i}}$ are solutions of the linear program (LP) (see supplementary material for details).
% Then the $p^{(i)}_{x_1 \dots x_{n_i}}$ are solutions of \eqref{pb: non-fair1}.
% %A set of selection probabilities that belongs to the solution space of problem \eqref{pb: non-fair1} are
% \label{proposition: lp}
% \end{proposition}
This shows that it is possible to compute the variables
$p^{(j|j-1)}_{x_1\dots x_{d_j}}$ that maximize precision \eqref{pb: non-fair1} using a
linear program (LP) (see details in Appendix A\ifthenelse{\boolean{full}}{}{ of the full version}), which is key to applicability.  It should be noted, however, that the number of variables in (LP) grows exponentially with the number of features.

To distinguish between the different notions of fairness, we will denote by $U^*_{LF}( \bm{\alpha}_{-k}, \alpha_k)$
and $U^*_{GF}( \bm{\alpha}_{-k}, \alpha_k)$ the value of the problem
(LP)---i.e., the maximum utility---when the fairness constraints correspond to local and global
fairness, respectively. Similarly, we will denote by
$U^*_{un}( \bm{\alpha}_{-k}, \alpha_k)$ the optimal precision value when no
fairness constraint are imposed (we call it the \emph{un}fair
case).

% 2) Equal Opportunity:
% \begin{small}
% \begin{align*}
% % P(\hat y_1 = 1 | y=1, x_s)  = \frac{\sum \limits_{x_i, i\not = s}   p^{(1)}_{x_1 \dots x_m} \cdot p_{x_1 \dots x_n}^{y=1} \cdot p_{x_1 \dots x_n}}{\sum \limits_{x_i, i \not = s} p^{y=1}_{x_1 \dots x_n} \cdot p_{x_1 \dots x_n}} ,\\
% P(\hat y_i = 1 | y=1, x_s) = \frac{\sum \limits_{x_i, i \not = s} \tilde p^{(i)}_{x_1 \dots x_{n_i}} \cdot p_{x_1 \dots x_n}^{y=1} \cdot p_{x_1 \dots x_n}}{\sum \limits_{x_i, i \not = s} p^{y=1}_{x_1 \dots x_n} \cdot p_{x_1 \dots x_n}}.
% \end{align*}
% \end{small}

% We see that all the constraints above are linear with respect to decision variables $\tilde p^{(i)}_{x_1 \dots x_{n_i}}$, so the maximization of utility with respect to fairness and size constraints still could be expressed as a linear program. 

%\subsection{\red{concavity?}}

\subsection{Solution Properties wrt Budget Constraints}

The selection sizes may be related to some budget or to some physical
resources of our problem and are crucial parameters. As
we show in the next proposition, the optimal utility values are monotonic and concave as functions of
budget sizes $\alpha_1, \dots, \alpha_{k-1}$. This property can be useful for budget optimization  and is illustrated as well on \autoref{fig: optimal u}. 
% In this section, we analyze the properties of the optimal algorithms with respect to $(\bm{\alpha}_{-k}, \alpha_k)$. The summary of results is given in the proposition below.
\begin{proposition}[Monotonicity and concavity]
\label{proposition: monotonicity and concavity}
%If $\bm{\alpha}_{-k} = (\alpha_1, \dots, \alpha_{k-1})^T$ are selection sizes for the first $k-1$ stages and $\alpha_k$ is final stage selection size, then 
For $U^*\in\{U^*_{LF},U^*_{GF},U^*_{un}\}$ and any fairness constraints that can be expressed as linear homogeneous equations\footnote{See details in Lemma 1 in Appendix A\ifthenelse{\boolean{full}}{}{ of the full version}.} (such as DP and EO), we have that $U^*(\bm{\alpha}_{-k}, \alpha_k)$ is
\begin{enumerate}
\item  non-decreasing and concave with respect to $ \bm{\alpha}_{-k}$;
\item non-increasing with respect
  to $\alpha_k$.
\end{enumerate}
\end{proposition}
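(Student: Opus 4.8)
The plan is to work throughout with the linear program obtained after the change of variables $\tilde p^{(i|i-1)}_{x_1\dots x_{d_i}}=\prod_{j=1}^i p^{(j|j-1)}_{x_1\dots x_{d_j}}$, collecting all decision variables into a single vector $z$ that is partitioned into blocks, one per stage. In this formulation the objective is $\tfrac{1}{\alpha_k}c^T z$, where $c^T z=P(y=1,\hat y_k=1)$ (cf. \eqref{eqppv}) depends only on the last-stage block $\tilde p^{(k|k-1)}$; the budget constraints read $A_i z\le\alpha_i$ for $i<k$ and $a_k^T z=\alpha_k$ (cf. \eqref{eq1selection}); the fairness constraints form a homogeneous linear system $Hz=0$ by the cited Lemma~1, in which each equation involves the block of a single stage (this is why we use the equivalent LF2 form rather than the bilinear LF1 form); and the remaining probability/consistency constraints $z\in Z$ (nonnegativity and $\tilde p^{(k)}\le\tilde p^{(k-1)}\le\cdots\le 1$) define a convex set that does not depend on $\bm{\alpha}$. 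I treat the two claims separately.

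For the first claim I fix $\alpha_k$, so $1/\alpha_k$ is a positive constant and only the inequalities $A_i z\le\alpha_i$ depend on $\bm{\alpha}_{-k}$. Monotonicity is then immediate: increasing any $\alpha_i$ relaxes one constraint, enlarging the feasible polytope, so the maximum cannot decrease. For concavity I use the standard fact that the optimal value of a maximization LP is concave in the right-hand sides: given optimal solutions $z^{(1)},z^{(2)}$ for budget vectors $\bm{\alpha}^{(1)}_{-k},\bm{\alpha}^{(2)}_{-k}$ and $\lambda\in[0,1]$, the point $z^\lambda=\lambda z^{(1)}+(1-\lambda)z^{(2)}$ is feasible for $\lambda\bm{\alpha}^{(1)}_{-k}+(1-\lambda)\bm{\alpha}^{(2)}_{-k}$ — the budget inequalities combine linearly, the equality $a_k^T z^\lambda=\alpha_k$ and the homogeneous fairness equalities $Hz^\lambda=0$ are preserved by convex combinations of feasible points, and $z^\lambda\in Z$ by convexity of $Z$. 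Since the objective is linear in $z$, evaluating it at $z^\lambda$ yields $U^*(\lambda\bm{\alpha}^{(1)}_{-k}+(1-\lambda)\bm{\alpha}^{(2)}_{-k})\ge\lambda U^*(\bm{\alpha}^{(1)}_{-k})+(1-\lambda)U^*(\bm{\alpha}^{(2)}_{-k})$.

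For the second claim I fix $\bm{\alpha}_{-k}$ and take $\alpha_k^{(1)}<\alpha_k^{(2)}$. The key idea is a uniform rescaling of the last stage: starting from an optimal solution $z^{(2)}$ for $\alpha_k^{(2)}$, I multiply only its last-stage block by $\beta=\alpha_k^{(1)}/\alpha_k^{(2)}<1$, which corresponds to reducing the final acceptance probability $p^{(k|k-1)}$ uniformly by $\beta$. This leaves the stage-$(<k)$ blocks untouched, so the constraints $A_i z\le\alpha_i$ and the earlier-stage fairness equations still hold; it sends $a_k^T z$ to $\beta\alpha_k^{(2)}=\alpha_k^{(1)}$; and since each fairness equation is homogeneous and involves a single stage, the only one affected is the last-stage equation, which scaling by $\beta$ maps from $0$ to $\beta\cdot 0=0$. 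The scaled vector remains in $Z$ because $0\le\beta\tilde p^{(k)}\le\tilde p^{(k)}\le\tilde p^{(k-1)}$. Finally the objective is invariant: $\tfrac{1}{\alpha_k^{(1)}}\big(\beta\, c^T z^{(2)}\big)=\tfrac{1}{\alpha_k^{(2)}}c^T z^{(2)}=U^*(\alpha_k^{(2)})$, since numerator and denominator scale together. Thus this feasible point for $\alpha_k^{(1)}$ certifies $U^*(\alpha_k^{(1)})\ge U^*(\alpha_k^{(2)})$.

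I expect the second claim to be the main obstacle, since one must verify that a single rescaling simultaneously preserves every constraint: the crux is that the fairness constraints are homogeneous and stage-separable, so scaling only the last block keeps the last-stage fairness equality satisfied, while precision is invariant because numerator and denominator scale by the same $\beta$. The first claim is essentially the textbook concavity of an LP value function in its right-hand side, the only points requiring care being that $\alpha_k$ (hence the $1/\alpha_k$ prefactor) is held fixed and that the homogeneous fairness equalities survive convex combinations.
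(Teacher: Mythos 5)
Your proposal is correct and follows essentially the same route as the paper: part~1 is the standard concavity-in-the-right-hand-side argument for the LP value function after the change of variables, and part~2 rests on the same homogeneity observation that lets a rescaled optimal solution certify feasibility at a smaller $\alpha_k$ with unchanged precision. The only (immaterial) difference is that you rescale just the last-stage block by $\alpha_k^{(1)}/\alpha_k^{(2)}$, exploiting stage-separability of the fairness equations, whereas the paper rescales the entire vector $\mathbf{p}$ by $1/\gamma$; both work because every constraint is either homogeneous or an upper bound by a nonnegative quantity.
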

Note that $U^*$ can be concave or convex or none of the two with
respect to $\alpha_k$, depending on the problem's parameters.

\subsection{The Price of Local Fairness}

% To conclude this section, we give a simple relation between the
% maximum utility obtained by locally fair and globally fair
% algorithms. The proof is direct as

We are now ready to define our central notion---the \emph{price of
  local fairness}---that represents the price to pay for being fair
at intermediate stages compared to a globally fair solution.
\begin{defn}[Price of Local Fairness, $PoLF$]
%Let $U^*_{GF}(\bm{\alpha}_{-k}, \alpha_k)$ be the maximum utility of a globally fair algorithm and $U^*_{LF}(\bm{\alpha}_{-k}, \alpha_k)$ be the maximum for the corresponding locally fair algorithm, then\\[-2mm]
Let
$$\text{PoLF}(\bm{\alpha}_{-k}, \alpha_k) = \frac{U_{GF}^*(\bm{\alpha}_{-k}, \alpha_k)}{U_{LF}^*(\bm{\alpha}_{-k}, \alpha_k)}.$$
\end{defn}
It should be clear that the locally fair algorithm is more constrained
than the globally fair. Thus, we have:\\[-4mm] % hence, if
% $U^*_{LF}(\bm{\alpha}_{-k}, \alpha_k)$ is the maximum utility of a
% globally fair algorithm and $U^*_{LF}(\bm{\alpha}_{-k}, \alpha_k)$ is
% the maximum for the corresponding locally fair algorithm, then
\begin{align*}
  U^*_{LF}(\bm{\alpha}_{-k}, \alpha_k) \leq U^*_{GF}(\bm{\alpha}_{-k}, \alpha_k)  \leq U^*_{un}(\bm{\alpha}_{-k}, \alpha_k).
\end{align*}
This implies that the values of $PoLF(\bm{\alpha}_{-k}, \alpha_k)$ are always larger than or equal to 1.
Using only the final selection size $\alpha_k$, it is also possible to
compute an upper bound as follows.
\begin{proposition}[$PoLF$ bound]
\label{proposition: polf bounds}
%Let $U^*_{LF}(\bm{\alpha}_{-k}, \alpha_k)$ be the maximum utility of a globally fair algorithm and $U^*_{LF}(\bm{\alpha}_{-k}, \alpha_k)$ be the maximum for the corresponding locally fair algorithm, then
For all $(\bm{\alpha}_{-k}, \alpha_k)$, we have: \\[-3mm]
$$1 \leq \text{PoLF} (\bm{\alpha}_{-k}, \alpha_k)  \leq \min\left(\frac{1}{\alpha_k}, \frac{1}{P(y=1)}\right).$$
\end{proposition}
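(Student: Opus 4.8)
The lower bound $PoLF \ge 1$ has already been established from $U^*_{LF} \le U^*_{GF}$, so all the work is in the upper bound. The plan is to bound the numerator $U^*_{GF}$ from above and the denominator $U^*_{LF}$ from below by quantities depending only on $\alpha_k$ and $P(y=1)$, and then divide.

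For the numerator, I would write the precision of \emph{any} feasible selection as $P(y=1\mid \hat y_k=1) = P(y=1,\hat y_k=1)/P(\hat y_k=1)$ and use the final-stage budget $P(\hat y_k=1)=\alpha_k$. Since the joint mass satisfies $P(y=1,\hat y_k=1) \le \min\bigl(P(y=1),\,P(\hat y_k=1)\bigr) = \min\bigl(P(y=1),\,\alpha_k\bigr)$, every feasible (hence in particular the globally fair optimal) selection obeys $U^*_{GF} \le \min\bigl(P(y=1),\alpha_k\bigr)/\alpha_k = \min\bigl(1,\,P(y=1)/\alpha_k\bigr)$. This step uses no fairness structure at all; it is merely a counting bound on how many good candidates can appear among the $\alpha_k$ selected.

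For the denominator, the key idea is to exhibit one concrete locally fair feasible selection and use its precision as a lower bound on the optimum $U^*_{LF}$. I would take the \emph{feature-independent} selection that, at each stage $i$, passes every surviving candidate independently with the constant probability $\beta_i = \alpha_i/\alpha_{i-1}$ (with the convention $\alpha_0 = 1$). Because $\alpha_1 \ge \dots \ge \alpha_k$, each $\beta_i \in [0,1]$ is a valid probability, and the telescoping product in \eqref{eq1selection} gives $P(\hat y_i = 1) = \alpha_i$, so all budget constraints hold (with equality at the last stage). Since these probabilities depend on no feature, each $\hat y_i$ is independent of $x_s$ and of $y$, so both LF1 and LF2 hold for DP and, after conditioning on $y=1$, for EO as well. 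Its precision is exactly $P(y=1\mid \hat y_k=1)=P(y=1)$, because $\hat y_k \perp y$. Hence $U^*_{LF} \ge P(y=1)$.

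Combining the two bounds yields $PoLF = U^*_{GF}/U^*_{LF} \le \min\bigl(1,\,P(y=1)/\alpha_k\bigr)/P(y=1) = \min\bigl(1/P(y=1),\,1/\alpha_k\bigr)$, which is exactly the claim. The only subtle point, and the one I would verify most carefully, is the feasibility and fairness of the constructed selection: one must confirm that the constant per-stage probabilities simultaneously meet every intermediate budget constraint and that feature-independence really does deliver all the required fairness conditions (LF1 and LF2, for both DP and EO) so that it is a legitimate feasible point for the locally fair program. Everything else reduces to elementary probability and a single division.
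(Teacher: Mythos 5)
Your proposal is correct and follows essentially the same route as the paper: the denominator is bounded below by exhibiting the feature-independent random selection with per-stage rates $\alpha_i/\alpha_{i-1}$ (precision $P(y=1)$, trivially locally fair), and the numerator is capped by $\min\left(1,\,P(y=1)/\alpha_k\right)$. The only cosmetic difference is that you obtain the upper bound directly from $P(y=1,\hat y_k=1)\leq\min\left(P(y=1),\alpha_k\right)$ for any feasible selection, whereas the paper first relaxes to the unconstrained problem with all intermediate budgets set to $1$ before invoking the same counting inequality.
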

For instance, if the final stage selection size is $\alpha_k=0.3$ (as in
our numerical examples), the globally fair algorithm can outperform
the locally fair one by a factor at most 3.33. While this bound is
probably loose, we will see in our numerical example that the $PoLF$ can
be as large as $1.6$ on real data.

%%% Local Variables:
%%% mode: latex
%%% TeX-master: "main"
%%% End:

\section{Empirical Analysis}
\label{section: empirical analysis}

In this section we implement\footnote{All codes are available at \url{https://github.com/vitaly-emelianov/multistage_fairness/}} the optimization algorithms in order to capture tendencies on real datasets and to provide general insights. We consider the two-stage selection process, since it is the most easily interpretable. Thus, $\bm{\alpha}_{-k} = \alpha_1$ and $\alpha_k = \alpha_2$. In our experiments we use three datasets: \texttt{Adult} \cite{uci}, \texttt{COMPAS} \cite{Propublica} and \texttt{German Credit Data} \cite{uci}. 
We adapt these datasets to our two stage fair selection problem by leaving 6 features, binarizing  them (see details in Appendix D\ifthenelse{\boolean{full}}{}{ of the full version}) and artificially separating  in two stages. We estimate the statistics $p_{x_1 \dots x_d}$ and
$p^{y=1}_{x_1 \dots x_d}$ from data. We then use a linear solver
for the linear program (LP) that gives us the
optimal utility $U^*(\alpha_1, \alpha_2)$ for the fair and 
unfair cases. 
% and the selection probabilities $p^{(i-1)}_{x_1\dots x_{n_i}}$,
% $1 \leq i \leq k$.

\subsection{Analysis of the Price of Local Fairness}
\label{subsection: influence polf}

We consider three different scenarios: i) the sensitive attribute $x_s$
is observed at the first stage; ii) at the second stage; iii) never used
in the selection process. We distinguish these three cases since it
could happen that the use of the sensitive attribute $x_s$ in decision
making is forbidden at some stages or even at all (by law or other
conventions). Our aim is to compare how the price of local fairness 
behaves in every case.

Let us start with a simple example. We leave 5 features from the
\texttt{Adult} dataset: \textit{sex}, \textit{age},
 \textit{education}, \textit{relationship} and \textit{native country} and consider the
attribute \textit{sex} as sensitive. \autoref{fig:
  optimal u} then shows the values of
$U_{\{un,\; GF,\; LF\}}^*(\alpha_1, \alpha_2)$ as a function of
$\alpha_1$ for fixed $\alpha_2=0.3$ when using the features displayed on top of each subfigure at first stage and the rest at second stage.  
We make two important
observations from this figure.  \emph{First}, the value of $PoLF$ can be significant. From \autoref{fig: optimal u}-(right), we see that for
$\alpha_1 \approx 0.33$, the value of $PoLF$ is about $1.3$, meaning that
the globally fair algorithm achieves $30\%$ larger value of precision
than the locally fair.  \emph{Second}, the gap between LF and GF
algorithms is significantly larger when the sensitive attribute $x_s$
is observed at the second stage.

%\begin{figure}
%  \begin{tabular}{@{}c@{}}
%    {\small (sex, age, relationship, education)}\quad
%    {\small (education, age, relationship, sex)}\\
%    {\small sex (sensitive) at first stage\qquad
%    sex (sensitive) at second stage}\\
%    \includegraphics[width=\linewidth]{u_gap.pdf}\\[-.3cm]
%    \hspace{0.5cm} $\alpha_1$ \hspace{3cm}  $\alpha_1$\\
%  \end{tabular}
%  \caption{Utility $U^*(\alpha_1 , \alpha_2=0.3)$ for \texttt{Adult}  dataset (DP).}
%  \label{fig: optimal u}
%\end{figure}
\begin{figure}
\begin{tabular}{c@{\hskip 0in}c}
{\small (sex, age, education)} & {\small (age, education)}\\
{\small sex (sensitive) at first stage}& \hspace{0.5cm} {\small sex (sensitive) at second stage}\\
\includegraphics[width=0.45\linewidth]{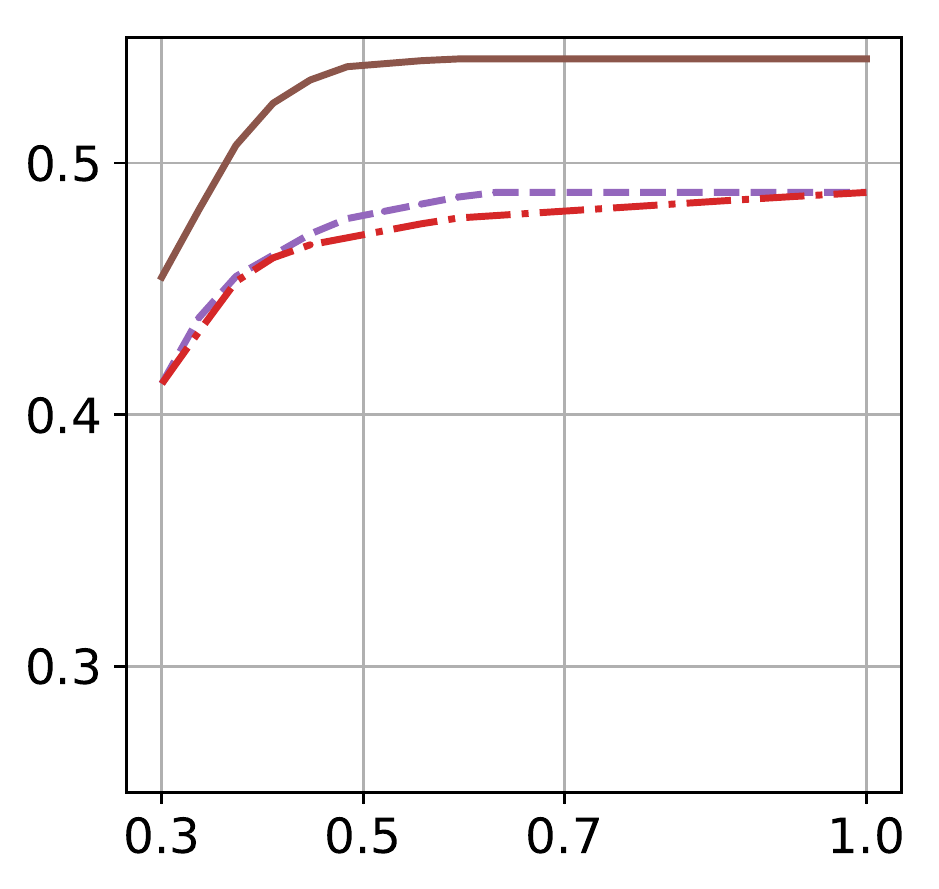} & \includegraphics[width=0.45\linewidth]{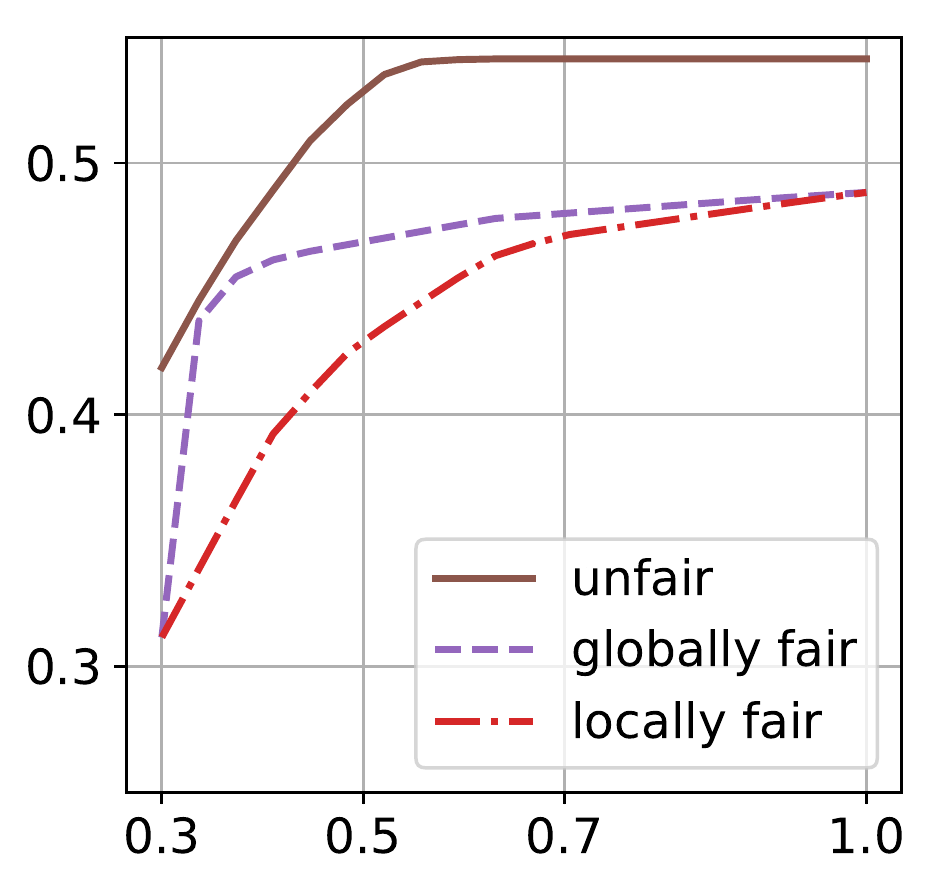} \\[-3mm]
$\alpha_1$ &  $\alpha_1$\\[-3mm]
\end{tabular}
\caption{Utility $U^*(\alpha_1 , \alpha_2=0.3)$ for \texttt{Adult}  dataset (DP).}\label{fig: optimal u}\vspace{-5mm}
\end{figure}

To show that this behavior is significant we calculate the values of
$U_{\{un, \; GF,\; LF\}}^*(\alpha_1, \alpha_2)$ for every possible combination $X = \{x_1, \dots, x_5\}$ of 5 features out of 6 as
decision variables ($x_1, x_2$ at first stage and $x_3, x_4$ at second stage), with one sensitive attribute $x_s = x_5$ that can be
observed at the first stage or at the second stage or not observed at all, and for every possible (discretized) value of $\alpha_1\ge \alpha_2$.  Due to space
constraints we present our results only for the DP definition of fairness; we emphasize that the observations are
robust among the three datasets and the two fairness notions (DP and EO) (see Appendix C for additional results).  
%The obtained empirical cumulative distribution
%function $\hat F_{PoLF}(x)$ is shown on the left panel of
%\autoref{fig: cdf polf}.  The empirical CDF of
%$PoLF^{max}(\alpha_2) := \max_{\alpha_1 \in [\alpha_2, 1]}
%PoLF(\alpha_1, \alpha_2)$ is given on the right panel of the same
%figure.
%\begin{figure}[t]
%  \begin{tabular}{c}
%    \includegraphics[width=\linewidth]{polf.pdf}\\%[-.3cm]
%    % $z$ \hspace{4cm} $z$\\
%    (a) CDF of $PoLF$ \hspace{1cm} (b) CDF of ${PoLF^{max}}$ 
%  \end{tabular}
%  \caption{Empirical CDFs of $PoLF$ and $PoLF^{max}$ for \texttt{Adult} dataset (DP, $\alpha_2=0.3$). }
%  \label{fig: cdf polf}
%\end{figure}
\autoref{fig: polf-all} shows the empirical cumulative distribution
functions $\hat F_{PoLF}(x)$ of the values of $PoLF$ obtained.
We observe that the \emph{price of local fairness
  is significantly lower when the sensitive attribute $x_s$ is
  revealed at the first stage} compared to the case where it is revealed later. This is consistent with
the observation made on
\autoref{fig: optimal u}. 
A possible interpretation is that the LF algorithm has to make a conservative decision at the first stage
and therefore cannot perform well compared to the GF algorithm that is able to compensate (when the sensitive feature $x_s$ is observed) for the unfair decisions that have been made at the first stage.  
It is worth mentioning that we have the same observation for a three-stage algorithm: the later we reveal the sensitive attribute, the higher the values of $PoLF$ we obtain (see Appendix C.4).

%\begin{figure}
%\begin{tabular}{c}
%\hspace{0.3cm} Adult \hspace{1.8cm} COMPAS \hspace{1.5cm} German\\
%\includegraphics[width=1\linewidth]{polf-all}\\
%\end{tabular}
%\caption{Empirical CDFs of $PoLF$ for all datasets  (DP, $\alpha_2=0.3$).}
%\label{fig: polf-all}
%\end{figure}
\begin{figure}
\begin{center}
\begin{tabular}{c@{\hskip 0in}c@{\hskip 0in}c}
Adult & COMPAS & German\\
\includegraphics[width=0.33\linewidth]{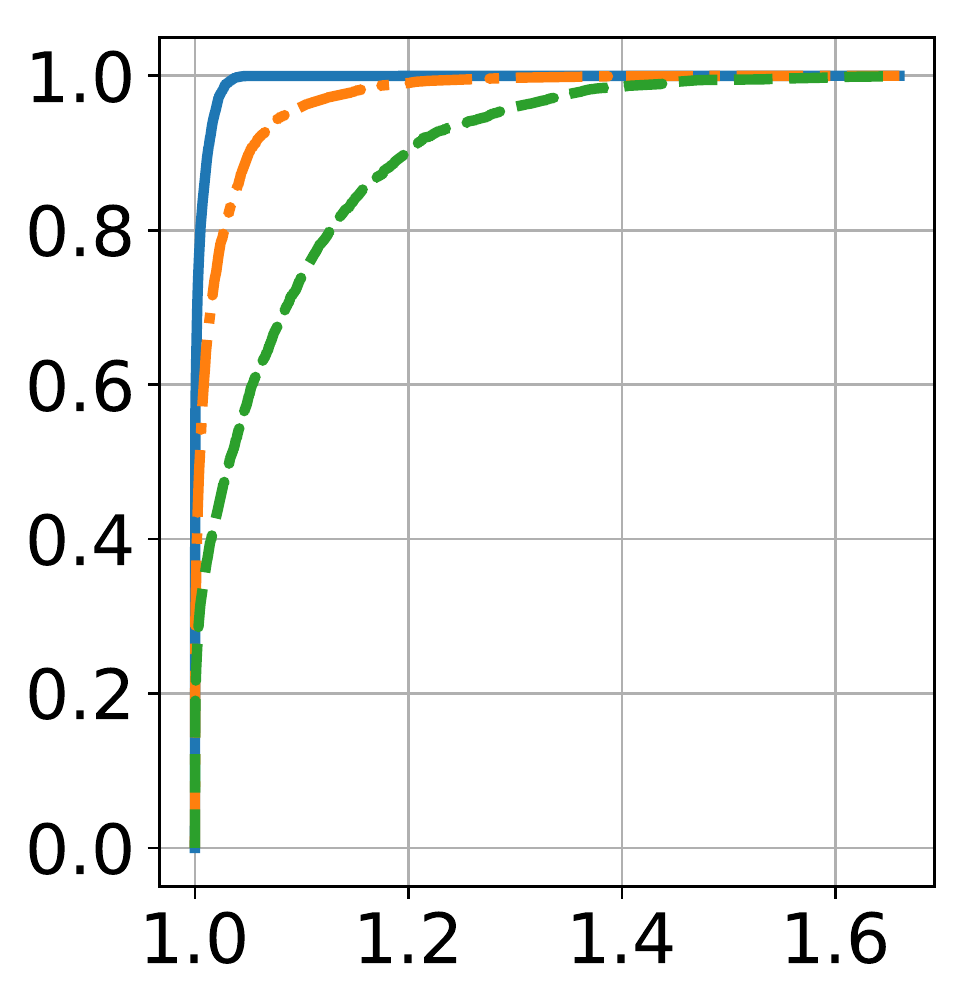} & 
\includegraphics[width=0.33\linewidth]{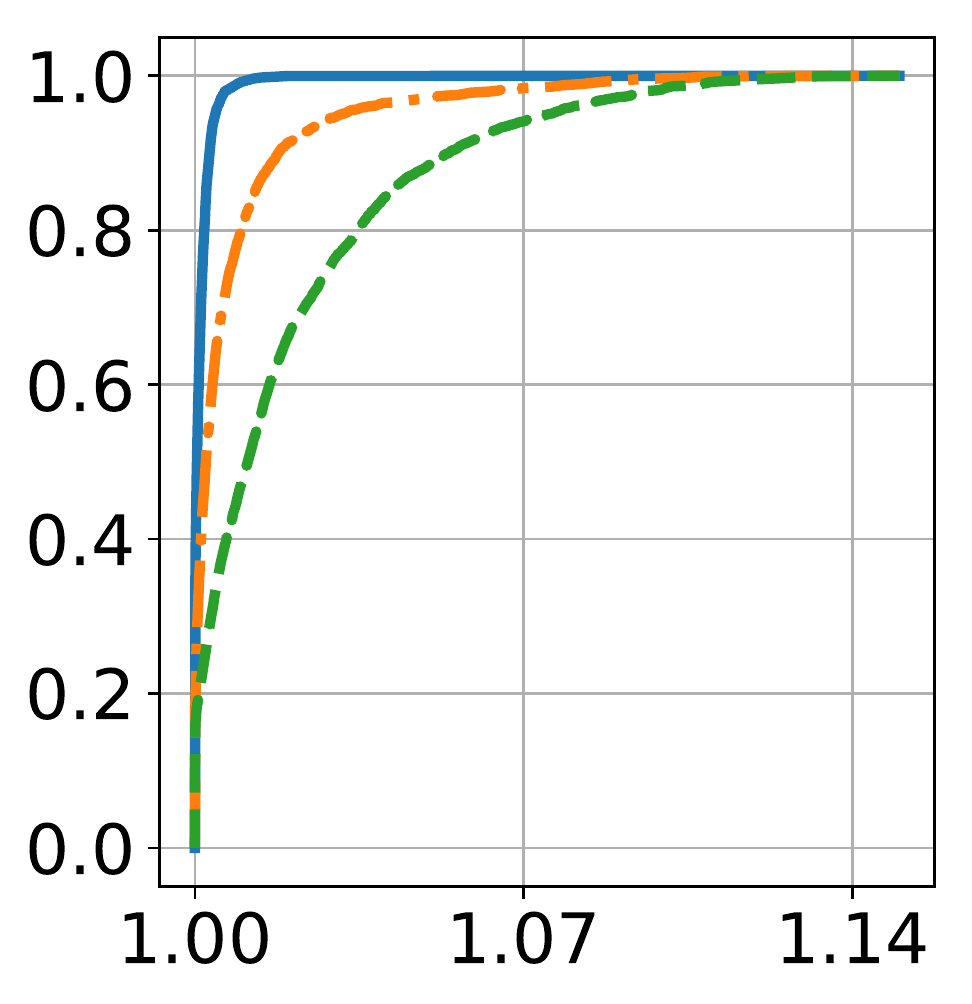} &
\includegraphics[width=0.33\linewidth]{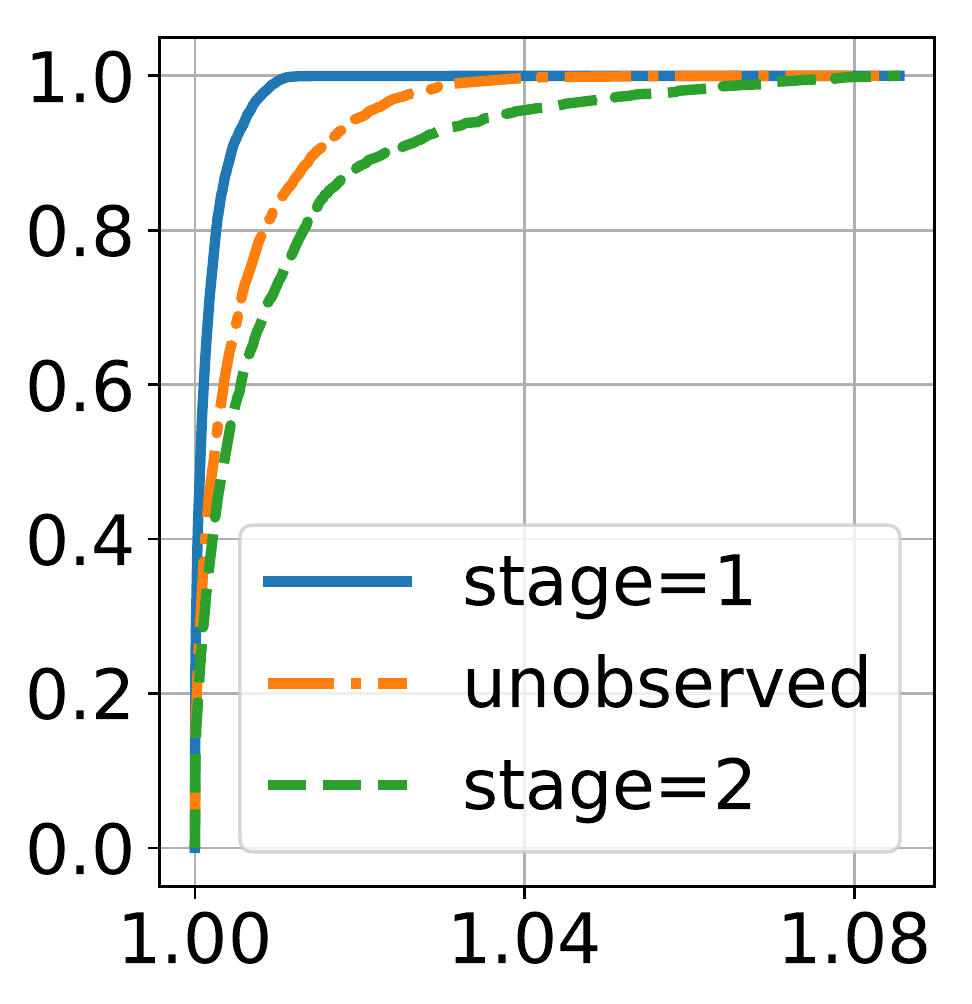}\\[-6mm]
\end{tabular}
\end{center}
\caption{Empirical CDFs of $PoLF$ for all datasets  (DP, $\alpha_2=0.3$).\vspace{-2mm}}
\label{fig: polf-all}
\end{figure}

\subsection{Violation of Local Fairness}

By definition, a globally fair algorithm can violate fairness
constraints at intermediate stages. For a given
budget constraints $\alpha_1,\alpha_2$, we define the violation of local
fairness ($VoLF$) as the absolute value of the fairness constraint violation at the first stage for
the optimal globally fair algorithm. For instance, for DP, this quantity equals:
\begin{align*}
  VoLF(\alpha_1,\alpha_2) = \left|  P(\hat y_1=1 | x_s=0) - P(\hat
  y_1=1 | x_s=1)\right|.
\end{align*}

\autoref{fig: cdf volf-all} shows the empirical cumulative
distribution function of violation of fairness $\hat F_{VoLF}(x)$ for
every value of $\alpha_1\in[\alpha_2;1]$ and for every feature
combination. 
We observe that \emph{the later the sensitive feature $x_s$ is revealed (or
  even not revealed), the more fair at intermediate stages the
  globally fair algorithm is}. One possible explanation is that
an algorithm that cannot observe the sensitive feature $x_s$ at the first stage has
to be more ``cautious'' at every stage to be able to satisfy global
fairness since the exact value of sensitive attribute $x_s$ is not
available. This observation is again robust among different datasets 
and notions of fairness.

\begin{figure}
\begin{center}
\begin{tabular}{c@{\hskip 0in}c@{\hskip 0in}c}
Adult & COMPAS & German\\
\includegraphics[width=0.33\linewidth]{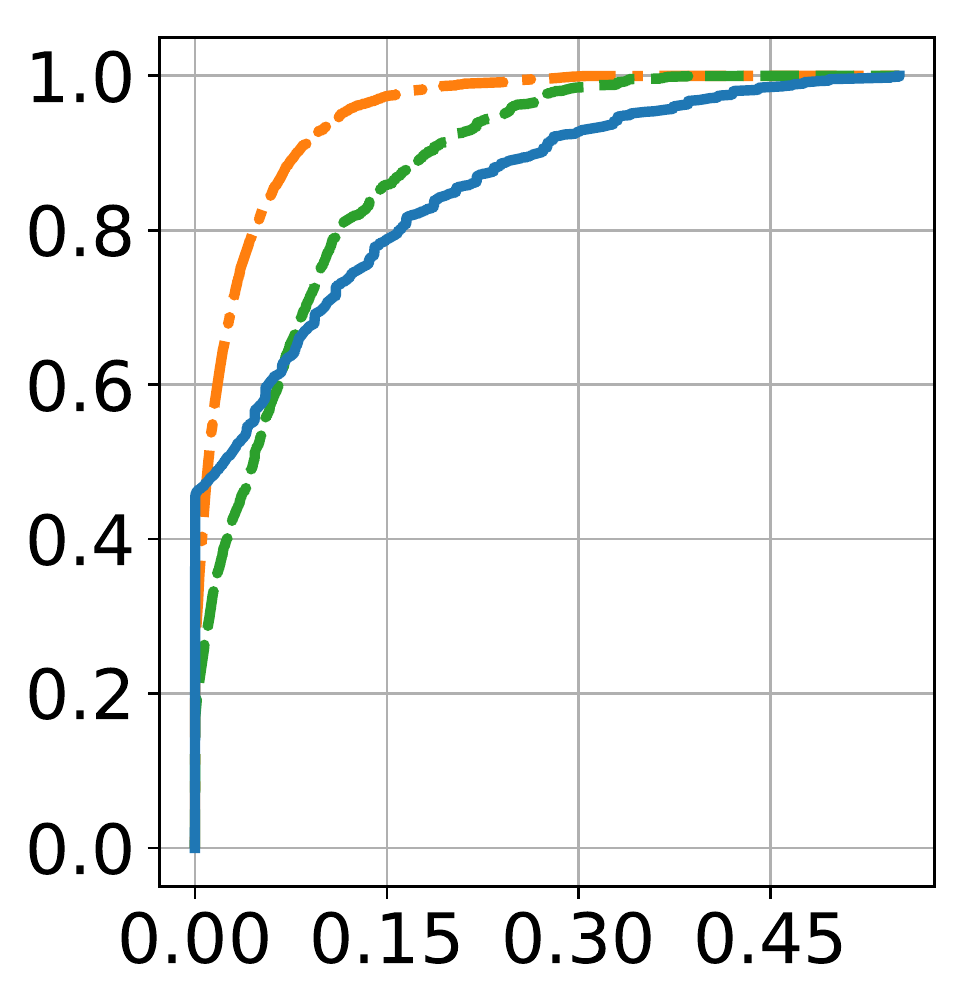} & 
\includegraphics[width=0.33\linewidth]{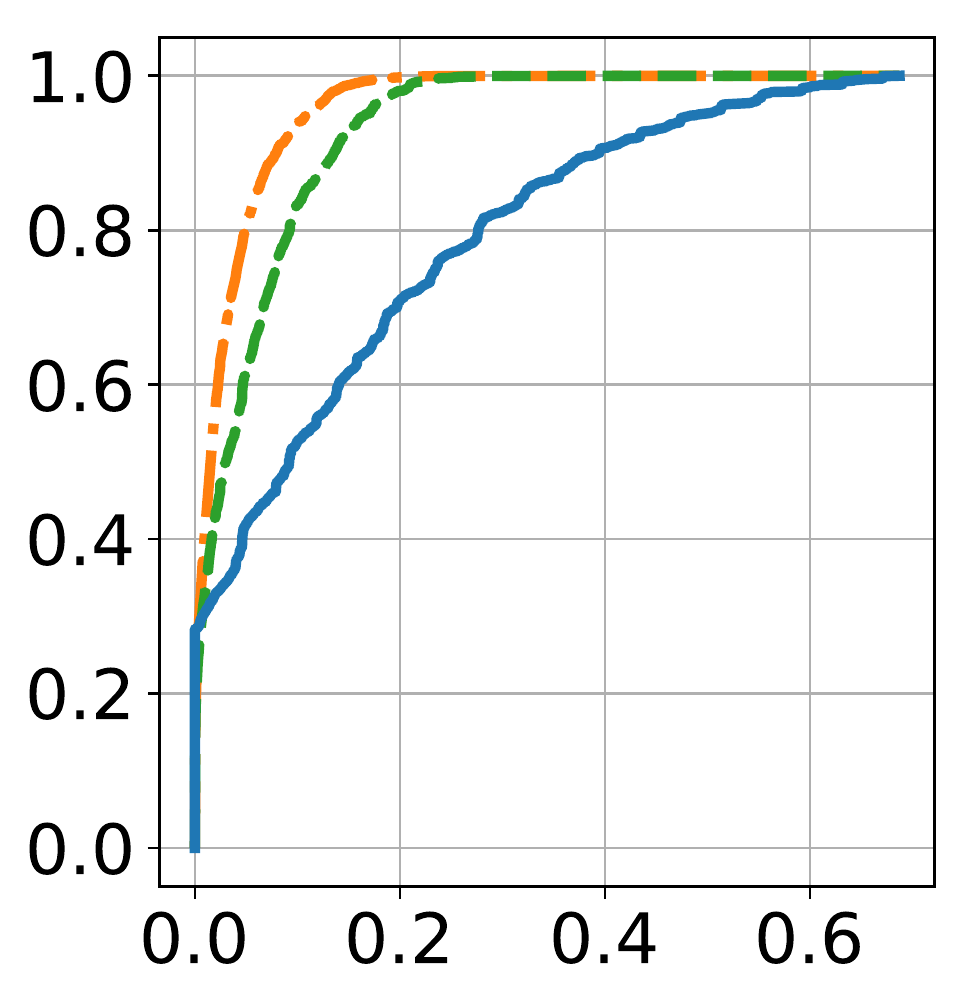} &
\includegraphics[width=0.33\linewidth]{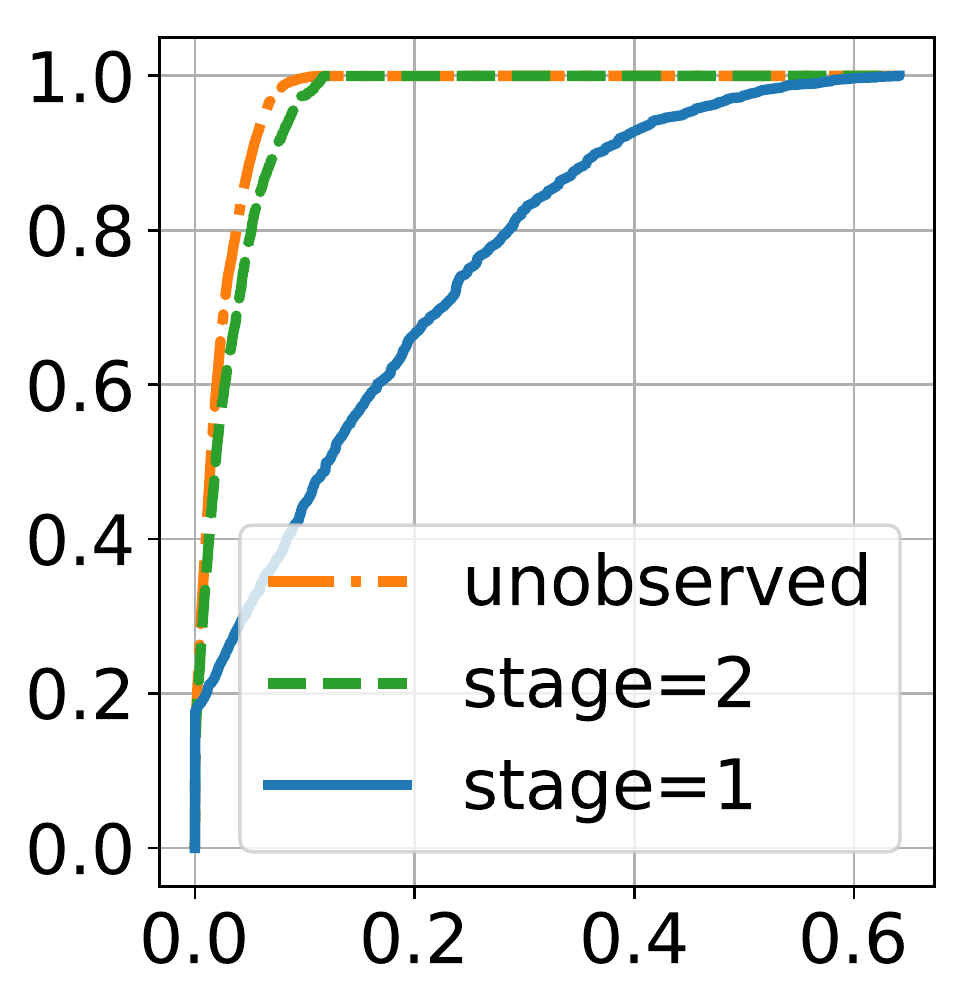}\\[-6mm]
\end{tabular}
\end{center}
\caption{Empirical CDFs of $VoLF$ for all datasets  (DP, $\alpha_2=0.3$).\vspace{-3mm}}
\label{fig: cdf volf-all}
\end{figure}

Finally, on \autoref{fig: polf-volf} we represent the joint distribution of
$PoLF$ and $VoLF$.  As mentioned before, the globally fair algorithm is more
unfair at the intermediate stages when the sensitive feature $x_s$ is
observed from the beginning (left panel), however the price of local
fairness we pay in this case is the smallest one. When the sensitive
feature $x_s$ is observed at the second stage (middle panel) the
globally fair algorithm is more locally fair compared to the previous
case, but the value of $PoLF$ is way larger. Finally, when $x_s$ is
never observed (right panel) the globally fair algorithm is the ``most
locally fair'' among all three settings. % , but in terms of PoLF it is in
% the middle.
We finally observe that, while most points have either $PoLF$ small (i.e., using a LF algorithm does not lose much) or $VoLF$ small (i.e., the GF algorithm is almost locally fair), there exist some points---when the sensitive feature is observed at the second stage---where both $PoLF$ and $VoLF$ are large; i.e., imposing local fairness even approximately comes at a significant cost. 

\begin{figure}
\begin{center}
\begin{tabular}{c@{\hskip 0in}c@{\hskip 0in}c}
$x_s$ at first stage & $x_s$ at second stage & $x_s$ unobserved\\
\includegraphics[width=0.33\linewidth]{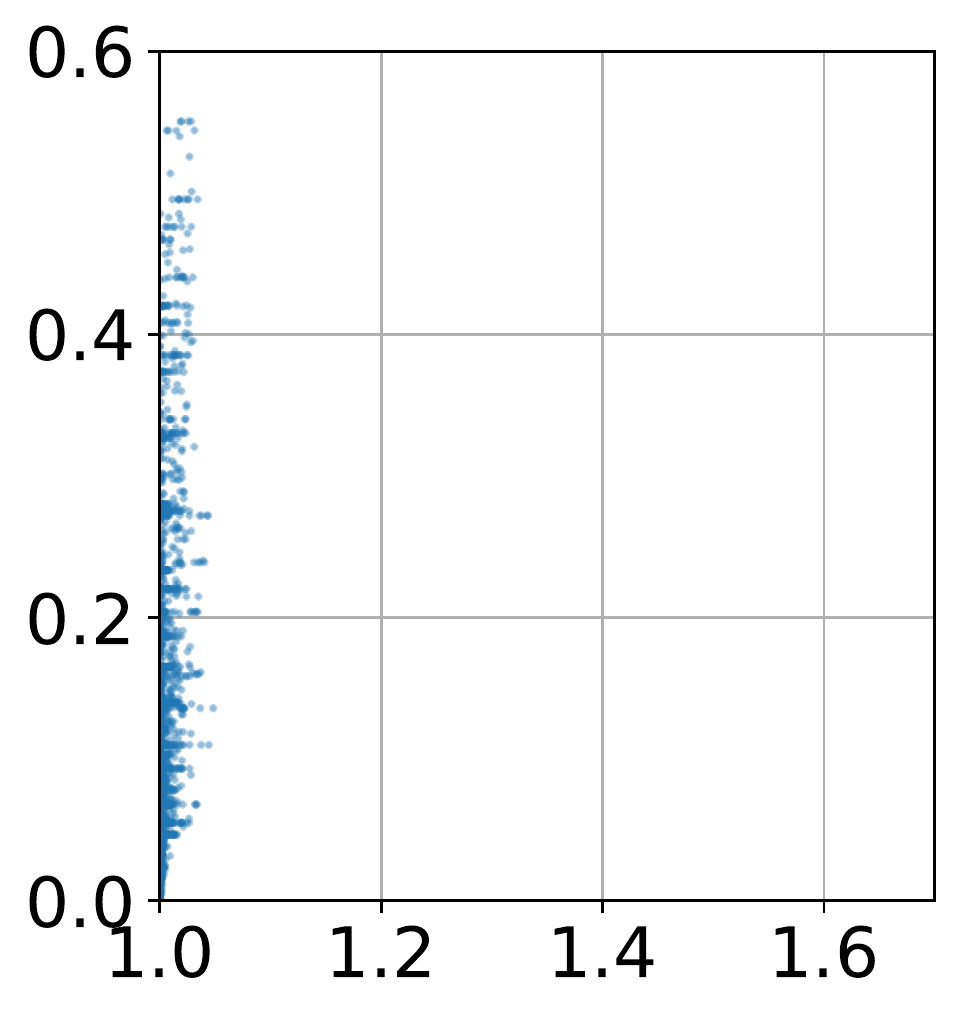} & 
\includegraphics[width=0.33\linewidth]{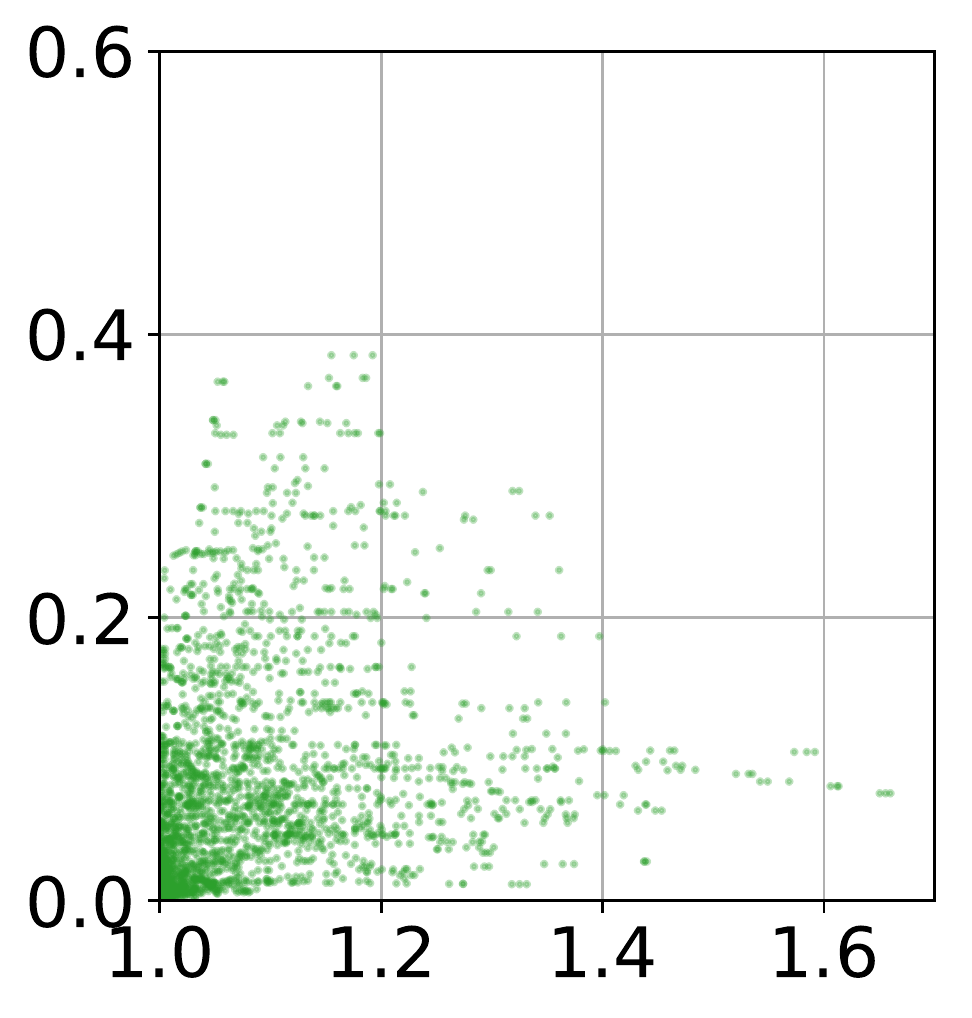} &
\includegraphics[width=0.33\linewidth]{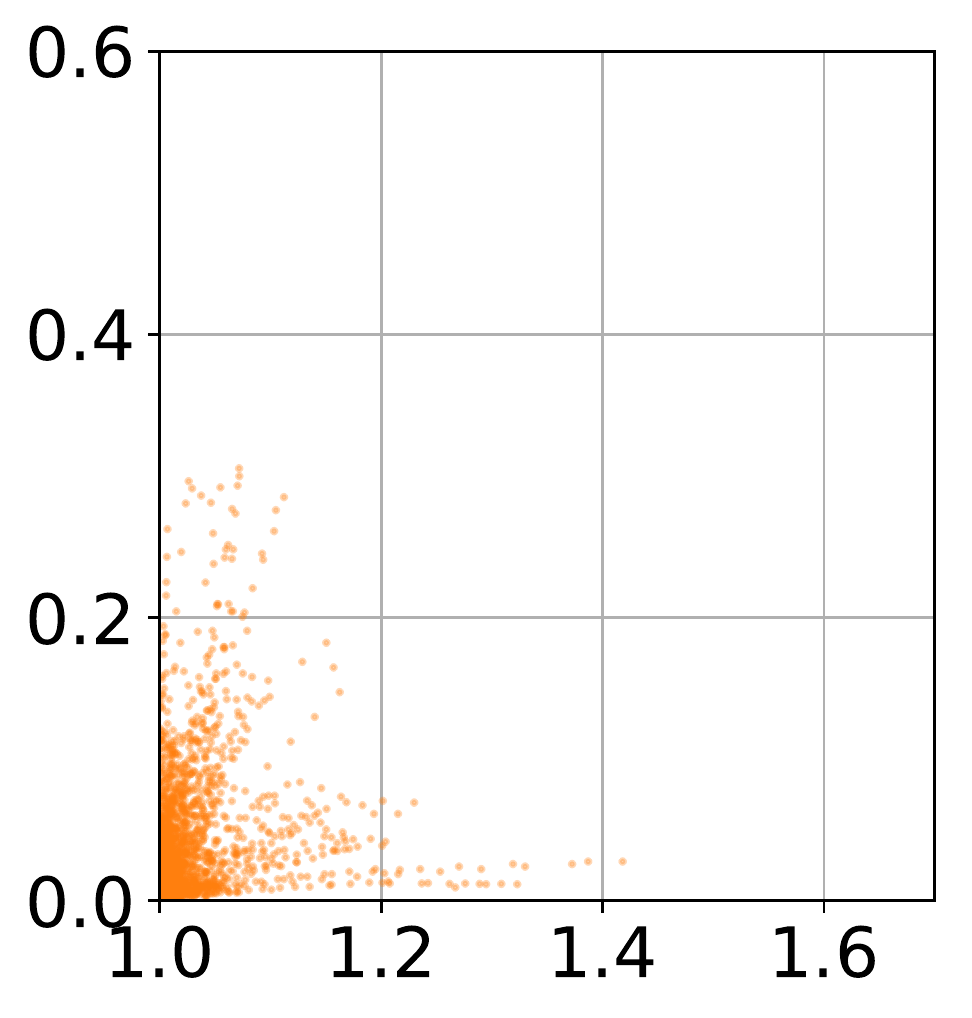}\\[-6mm]
\end{tabular}
\end{center}
\caption{$VoLF$ ($y$-axis) vs $PoLF$ ($x$-axis) for \texttt{Adult}  dataset (DP, $\alpha_2=0.3$).}\vspace{-3mm}
\label{fig: polf-volf}
\end{figure}

%%% Local Variables:
%%% mode: latex
%%% TeX-master: "main"
%%% End:

\section{Conclusion}
\label{section: conclusion}

In this work we tackle the problem of multistage selection and the
fairness issues it entails. We propose a stylized model based on a probabilistic formulation of
the $k$-stage selection problem with constraints on the
number of selected individuals at each stage that should hold in expectation. We introduce two
different notions of fairness for the multistage setting: local
(under two equivalent variants) and global fairness. Thanks to this
framework, we show that maximizing precision under budget and fairness
constraints can be done via linear programming, which enables for
efficient computation as well as theoretical investigation. In
particular, we analyze theoretically and empirically how the utility
of locally and globally fair algorithms vary with selection budgets,
and we find that globally fair algorithms can lead to
non-negligible performance increases compared to locally fair ones.

One of the main findings of our work is that the stage
at which the sensitive attribute is revealed greatly affects the
difference between the performance of locally and globally fair
algorithms: hiding the sensitive feature at early stages tends to make
globally fair algorithm more fair at intermediate stages. 
While locally fair algorithms may be desirable, our results show that
local fairness does not come for free. They also show that if a
decision maker would like to encourage locally fair selection algorithms, 
there are essentially two choices: either hide the sensitive
feature at the first stage or impose by rules the first stage to be
fair.

%where the constraints on the number of selected individuals at each
%stage are relaxed to hold only in expectation; then, we show that the
%previous maximization problem can be transformed into a linear
%program, which can be solved efficiently. After, we propose three new
%meaningful notions of fairness for the $k$-stage setting. We analyze
%theoretical relationship between these three notions of fairness,
%show that two of them are equivalent and prove that maximizing the
%PPV under fairness constraints can be written as a linear program. We
%analyze the obtained optimal $U^*$ as a function of selection sizes
%$\alpha_1,\dots, \alpha_k$. Finally, we provide a number of empirical
%results on two stage fair selection problem. 

% Our probabilistic selection algorithm $(\hat y_1, \dots, \hat y_k)$
% requires specification of selection probabilities for each feature
% value combination. As a result, the number of variables in our problem
% grows exponentially with the number of features $n$. In future work,
% we plan to investigate how to represent the selection algorithm more
% compactly, i.e., using a smaller number of parameters or
% dimensionality reduction techniques.  

Our model allows us to provide elegant insights into the fairness questions related to multistage selection, yet it does a number of simplifying assumptions that naturally restrict its direct applicability. 
\emph{First}, our model ignores the issue that the selection probability at a stage depends on which candidates got selected at the previous stages; i.e., it implicitly makes the approximation that at each stage the number of candidates selected for each feature combination is equal to its expectation. In Appendix E \ifthenelse{\boolean{full}}{}{ of the full version}, we show that this approximation becomes exact as $n$ tends to infinity. \emph{Second}, we assume perfect statistical knowledge of the joint distribution of features and label values, without bias. \emph{Third}, we consider only discrete features and use a non-compact representation of the selection probabilities---this allows us to solve the exact selection problem by using an LP formulation. Relaxing these assumptions, in particular using a more compact representation of the selection algorithm (at the cost of a loss of precision) is an interesting direction of future work. %We believe, however, that if the more compact model is rich enough to approximate the optimal algorithm, then the insights we have in the experimental section will not change.

\section*{Acknowledgments}

This work was supported in part by the French National Research Agency (ANR) through the ``Investissements d’avenir'' program (ANR-15-IDEX-02) and through grant ANR-16-TERC0012; by the Alexander von Humboldt Foundation; and by a European Research Council (ERC) Advanced Grant for the project ``Foundations for Fair Social Computing'' funded under the European Union's Horizon 2020 Framework Programme (grant agreement no. 789373). The authors also thank Roland Hildebrand for helpful technical suggestions.

%%% Local Variables:
%%% mode: latex
%%% TeX-master: "main"
%%% End:

\footnotesize
\bibliographystyle{named}
\bibliography{bibliography}

\begin{thebibliography}{}

\bibitem[\protect\citeauthoryear{Bower \bgroup \em et al.\egroup
  }{2017}]{Bower17a}
Amanda Bower, Sarah~N. Kitchen, Laura Niss, Martin~J. Strauss, Alex Vargo, and
  Suresh Venkatasubramanian.
\newblock Fair pipelines.
\newblock In {\em Proceedings of the 4th Workshop on Fairness, Accountability,
  and Transparency in Machine Learning (FAT-ML)}, 2017.

\bibitem[\protect\citeauthoryear{Chouldechova}{2017}]{Chouldechova17a}
Alexandra Chouldechova.
\newblock Fair prediction with disparate impact: A study of bias in recidivism
  prediction instruments.
\newblock {\em Big Data}, 5(2):153--163, 2017.

\bibitem[\protect\citeauthoryear{Corbett-Davies \bgroup \em et al.\egroup
  }{2017}]{Corbett-Davies:2017}
Sam Corbett-Davies, Emma Pierson, Avi Feller, Sharad Goel, and Aziz Huq.
\newblock Algorithmic decision making and the cost of fairness.
\newblock In {\em Proceedings of the 23rd ACM SIGKDD International Conference
  on Knowledge Discovery and Data Mining (KDD)}, pages 797--806, 2017.

\bibitem[\protect\citeauthoryear{Dua and Graff}{2017}]{uci}
Dheeru Dua and Casey Graff.
\newblock {UCI} machine learning repository, 2017.

\bibitem[\protect\citeauthoryear{Dwork and Ilvento}{2019}]{Dwork19a}
Cynthia Dwork and Christina Ilvento.
\newblock Fairness under composition.
\newblock In {\em Proceedings of the 10th conference on Innovations in
  Theoretical Computer Science (ITCS)}, pages 33:1--33:20, 2019.

\bibitem[\protect\citeauthoryear{Dwork \bgroup \em et al.\egroup
  }{2012}]{Dwork11}
Cynthia Dwork, Moritz Hardt, Toniann Pitassi, Omer Reingold, and Richard Zemel.
\newblock Fairness through awareness.
\newblock In {\em Proceedings of the 3rd conference on Innovations in
  Theoretical Computer Science Conference (ITCS)}, pages 214--226, 2012.

\bibitem[\protect\citeauthoryear{Hardt \bgroup \em et al.\egroup
  }{2016}]{Hardt:2016}
Moritz Hardt, Eric Price, and Nathan Srebro.
\newblock Equality of opportunity in supervised learning.
\newblock In {\em Proceedings of the 30th International Conference on Neural
  Information Processing Systems (NIPS)}, pages 3323--3331, 2016.

\bibitem[\protect\citeauthoryear{Heidari and Krause}{2018}]{Heidari18a}
Hoda Heidari and Andreas Krause.
\newblock Preventing disparate treatment in sequential decision making.
\newblock In {\em Proceedings of the 27th International Joint Conference on
  Artificial Intelligence (IJCAI)}, pages 2248--2254, 2018.

\bibitem[\protect\citeauthoryear{Jabbari \bgroup \em et al.\egroup
  }{2017}]{Jabbari17a}
Shahin Jabbari, Matthew Joseph, Michael Kearns, Jamie Morgenstern, and Aaron
  Roth.
\newblock Fairness in reinforcement learning.
\newblock In {\em Proceedings of the 34th International Conference on Machine
  Learning (ICML)}, pages 1617--1626, 2017.

\bibitem[\protect\citeauthoryear{Joseph \bgroup \em et al.\egroup
  }{2016}]{Joseph16a}
Matthew Joseph, Michael Kearns, Jamie Morgenstern, and Aaron Roth.
\newblock Fairness in learning: Classic and contextual bandits.
\newblock In {\em Proceedings of the 30th International Conference on Neural
  Information Processing Systems (NIPS)}, pages 325--333, 2016.

\bibitem[\protect\citeauthoryear{Kilbertus \bgroup \em et al.\egroup
  }{2017}]{Kilbertus17}
Niki Kilbertus, Mateo Rojas~Carulla, Giambattista Parascandolo, Moritz Hardt,
  Dominik Janzing, and Bernhard Sch\"{o}lkopf.
\newblock Avoiding discrimination through causal reasoning.
\newblock In {\em Proceedings of the 31st International Conference on Neural
  Information Processing Systems (NIPS)}, pages 656--666, 2017.

\bibitem[\protect\citeauthoryear{Kleinberg and Raghavan}{2018}]{Kleinberg18a}
Jon Kleinberg and Manish Raghavan.
\newblock Selection problems in the presence of implicit bias.
\newblock In {\em Proceedings of the 9th conference on Innovations in
  Theoretical Computer Science (ITCS)}, pages 33:1--33:17, 2018.

\bibitem[\protect\citeauthoryear{Kleinberg \bgroup \em et al.\egroup
  }{2017}]{Kleinberg17a}
Jon Kleinberg, Sendhil Mullainathan, and Manish Raghavan.
\newblock Inherent trade-offs in the fair determination of risk scores.
\newblock In {\em Proceedings of the 8th conference on Innovations in
  Theoretical Computer Science (ITCS)}, pages 43:1--43:23, 2017.

\bibitem[\protect\citeauthoryear{Lambrecht and Tucker}{2018}]{careerAds}
Anja Lambrecht and E.~Tucker, Catherine.
\newblock Algorithmic bias? an empirical study into apparent gender-based
  discrimination in the display of stem career ads, March 2018.
\newblock Available at SSRN: \url{https://ssrn.com/abstract=2852260}.

\bibitem[\protect\citeauthoryear{Larson \bgroup \em et al.\egroup
  }{2016}]{Propublica}
Jeff Larson, Surya Mattu, Lauren Kirchner, and Julia Angwin.
\newblock {How We Analyzed the COMPAS Recidivism Algorithm}, 2016.
\newblock ProPublica,
  \url{https://www.propublica.org/article/how-we-analyzed-the-compas-recidivism-algorithm}.

\bibitem[\protect\citeauthoryear{Lipton \bgroup \em et al.\egroup
  }{2018}]{Lipton18a}
Zachary Lipton, Julian McAuley, and Alexandra Chouldechova.
\newblock Does mitigating ml's impact disparity require treatment disparity?
\newblock In {\em Proceedings of the 32nd International Conference on Neural
  Information Processing Systems (NIPS)}, pages 8125--8135, 2018.

\bibitem[\protect\citeauthoryear{Pedreshi \bgroup \em et al.\egroup
  }{2008}]{Pedreshi08a}
Dino Pedreshi, Salvatore Ruggieri, and Franco Turini.
\newblock Discrimination-aware data mining.
\newblock In {\em Proceedings of the 14th ACM SIGKDD International Conference
  on Knowledge Discovery and Data Mining (KDD)}, pages 560--568, 2008.

\bibitem[\protect\citeauthoryear{Perry \bgroup \em et al.\egroup
  }{2013}]{Policing}
Walter~L. Perry, Brian McInnis, Carter~C. Price, Susan~C. Smith, and John~S.
  Hollywood.
\newblock {\em Predictive Policing: The Role of Crime Forecasting in Law
  Enforcement Operations}.
\newblock Rand Corporation, 2013.

\bibitem[\protect\citeauthoryear{Rohatgi and Saleh}{2015}]{rohatgi}
Vijay~K. Rohatgi and A.K.M.E. Saleh.
\newblock {\em An Introduction to Probability and Statistics}.
\newblock Wiley Series in Probability and Statistics. Wiley, 2015.

\bibitem[\protect\citeauthoryear{Schumann \bgroup \em et al.\egroup
  }{2019}]{Schumann2018a}
Candice Schumann, Samsara~N. Counts, Jeffrey~S. Foster, and John~P. Dickerson.
\newblock The diverse cohort selection problem.
\newblock In {\em Proceedings of the International Conference on Autonomous
  Agents and Multi-Agent Systems (AAMAS)}, pages 601--609, 2019.

\bibitem[\protect\citeauthoryear{Senator}{2005}]{Senator05a}
Ted~E. Senator.
\newblock Multi-stage classification.
\newblock In {\em Proceedings of the Fifth IEEE International Conference on
  Data Mining (ICDM)}, pages 386--393, 2005.

\bibitem[\protect\citeauthoryear{Trapeznikov \bgroup \em et al.\egroup
  }{2012}]{Trapeznikov12a}
Kirill Trapeznikov, Venkatesh Saligrama, and David Casta\~{n}\'on.
\newblock Multi-stage classifier design.
\newblock In {\em Proceedings of the Asian Conference on Machine Learning},
  pages 459--474, 2012.

\bibitem[\protect\citeauthoryear{Valera \bgroup \em et al.\egroup
  }{2018}]{Valera18a}
Isabel Valera, Adish Singla, and Manuel Gomez~Rodriguez.
\newblock Enhancing the accuracy and fairness of human decision making.
\newblock In {\em Proceedings of the 32nd International Conference on Neural
  Information Processing Systems (NIPS)}, pages 1774--1783, 2018.

\bibitem[\protect\citeauthoryear{Zafar \bgroup \em et al.\egroup
  }{2017}]{Zafar17a}
Muhammad~Bilal Zafar, Isabel Valera, Manuel Gomez~Rodriguez, and Krishna~P.
  Gummadi.
\newblock Fairness beyond disparate treatment \& disparate impact: Learning
  classification without disparate mistreatment.
\newblock In {\em Proceedings of the 26th International Conference on World
  Wide Web (WWW)}, pages 1171--1180, 2017.

\end{thebibliography}

\ifthenelse{\boolean{full}}{
\pagebreak
\newpage
\clearpage
\begin{appendices}
\label{section: appendix}

\section{Utility Maximization via Linear Programming}

In this appendix, we formally justify that maximizing precision under budget constraints and fairness constraints can be done via linear programming. 
The following proposition shows how to do that with only budget constraints:
\begin{proposition}[Utility maximization as a linear program]\label{Prop:unfairOpt}
Let, for all $i \in \{1, \cdots, k\}$ and all $x_1 \dots x_{d_i}$,
\begin{align*}
    p^{(i|i-1)}_{x_1 \dots x_{d_i}} = 
    \begin{cases}
    \tilde p^{(i|i-1)}_{x_1 \dots x_{d_i}} / \tilde p^{(i-1|i-2)}_{x_1 \dots x_{d_{i-1}}}, &  \textrm{ if } \tilde p^{(i-1|i-2)}_{x_1 \dots x_{d_{i-1}}} \not = 0\\
    0,& \textrm{ otherwise},
    \end{cases}
\end{align*} 
where the variables $\tilde p^{(i|i-1)}_{x_1 \dots x_{d_i}}$ are solutions of the linear program
\begin{maxi}[2]
{\!\tilde p^{(i|i-1)}_{x_1 \dots x_{d_i}}}{\frac{1}{\alpha_k}\sum_{x_1 \dots x_d} p_{x_1 \dots x_d}^{y=1}\cdot p_{x_1 \dots x_d} \cdot \tilde p^{(k|k-1)}_{x_1 \dots x_{d_k}}}{}{}
\addConstraint{\!\sum_{x_1 \dots x_d} \!p_{x_1 \dots x_d} \cdot \tilde p^{(i|i-1)}_{x_1 \dots x_{d_i}}}{\leq \alpha_i,}{\; i < k}
\addConstraint{\!\sum_{x_1 \dots x_d} \!p_{x_1 \dots x_d} \cdot \tilde p^{(k|k-1)}_{x_1 \dots x_{d_k}}}{=\alpha_k}
\addConstraint{\!0 \leq \tilde p^{(1|0)}_{x_1 \dots x_{d_1}}}{\leq 1}
\addConstraint{\!0 \leq \tilde p^{(i|i-1)}_{x_1 \dots x_{d_i}}}{\leq \tilde p^{(i-1|i-2)}_{x_1 \dots x_{d_{i-1}}},}{\;1 < i \leq k}.
\label{unfairOpt}
\end{maxi}
Then the $p^{(i|i-1)}_{x_1 \dots x_{d_i}}$ are solutions of \eqref{pb: non-fair1} without any fairness constraint.
%A set of selection probabilities that belongs to the solution space of problem \eqref{pb: non-fair1} are
\end{proposition}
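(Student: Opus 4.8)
The plan is to make rigorous the change of variables $\tilde p^{(i|i-1)}_{x_1 \dots x_{d_i}} = \prod_{j=1}^i p^{(j|j-1)}_{x_1 \dots x_{d_j}}$ already announced in the main text, and to show that it induces a correspondence between the feasible points of \eqref{pb: non-fair1} (without fairness constraints) and those of \eqref{unfairOpt} that preserves the objective. First I would substitute this definition into the precision expression \eqref{eqppv} and the selection-size expression \eqref{eq1selection}. Since $\prod_{j=1}^k p^{(j|j-1)}_{x_1 \dots x_{d_j}} = \tilde p^{(k|k-1)}_{x_1 \dots x_{d_k}}$ and $\prod_{j=1}^i p^{(j|j-1)}_{x_1 \dots x_{d_j}} = \tilde p^{(i|i-1)}_{x_1 \dots x_{d_i}}$, the nonlinear objective and the nonlinear budget constraints of \eqref{pb: non-fair1} turn into exactly the linear objective and linear budget constraints of \eqref{unfairOpt}. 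This step is a direct algebraic rewriting.

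The core of the argument is to verify that the two feasible sets match. For the \emph{forward} direction, I would take any probabilities $p^{(i|i-1)}_{x_1 \dots x_{d_i}} \in [0,1]$ feasible for \eqref{pb: non-fair1} and check that the induced $\tilde p$ satisfy the constraints of \eqref{unfairOpt}. The key observation is the recursive identity $\tilde p^{(i|i-1)}_{x_1 \dots x_{d_i}} = \tilde p^{(i-1|i-2)}_{x_1 \dots x_{d_{i-1}}} \cdot p^{(i|i-1)}_{x_1 \dots x_{d_i}}$; because each factor lies in $[0,1]$, this immediately gives $\tilde p^{(1|0)} \in [0,1]$ and the nesting $0 \le \tilde p^{(i|i-1)}_{x_1 \dots x_{d_i}} \le \tilde p^{(i-1|i-2)}_{x_1 \dots x_{d_{i-1}}}$ for $i>1$, which are precisely the last two constraint families of \eqref{unfairOpt}.

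For the \emph{backward} direction, I would take $\tilde p$ feasible for \eqref{unfairOpt} and recover $p^{(i|i-1)}$ by the division formula in the statement. When $\tilde p^{(i-1|i-2)}_{x_1 \dots x_{d_{i-1}}} \neq 0$, the nesting constraint $0 \le \tilde p^{(i|i-1)} \le \tilde p^{(i-1|i-2)}$ guarantees $p^{(i|i-1)} \in [0,1]$, so the recovered quantities are genuine selection probabilities. An induction on $i$ then shows $\prod_{j=1}^i p^{(j|j-1)}_{x_1 \dots x_{d_j}} = \tilde p^{(i|i-1)}_{x_1 \dots x_{d_i}}$, so that the reconstructed $p$ reproduces the very same $\tilde p$, hence the same objective value and the same budget-constraint values. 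The only delicate point is the case $\tilde p^{(i-1|i-2)}_{x_1 \dots x_{d_{i-1}}}=0$: there the nesting constraint forces $\tilde p^{(i|i-1)}_{x_1 \dots x_{d_i}}=0$ as well, so both sides of the product identity vanish irrespective of the conventional value $p^{(i|i-1)}=0$, which I must verify in order to close the induction.

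Combining the two directions shows that the optimal values of \eqref{pb: non-fair1} (without fairness) and \eqref{unfairOpt} coincide, and that reconstructing $p$ from an optimal $\tilde p$ yields an optimal selection algorithm for \eqref{pb: non-fair1}. I expect the main obstacle not to be any hard inequality but the careful bookkeeping around the zero-denominator case: one must confirm that the forward map $p \mapsto \tilde p$ and the reconstruction $\tilde p \mapsto p$ are mutually consistent on the objective and constraints even though they need not be exact inverses (the reconstruction zeros out selection probabilities on feature combinations that are never reached, since those affect neither the objective nor the constraints), so that optimality genuinely transfers between the two problems.
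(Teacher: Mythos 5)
Your proposal is correct and follows the same route as the paper's proof: the change of variables $\tilde p^{(i|i-1)}_{x_1 \dots x_{d_i}} = \prod_{j=1}^i p^{(j|j-1)}_{x_1 \dots x_{d_j}}$, substitution into \eqref{eqppv}--\eqref{eq1selection}, and reconstruction of the $p^{(i|i-1)}$ by division. The paper's proof states the substitution and asserts the reconstruction works; you additionally verify the feasibility correspondence in both directions and the zero-denominator case, which is exactly the bookkeeping the paper leaves implicit.
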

\begin{proof}

Using \eqref{eqppv}--\eqref{eq1selection}, we can rewrite problem \eqref{pb: non-fair1} as 
%\begin{small}
\begin{maxi}[2]
{p^{(i|i-1)}_{x_1 \dots x_{d_i}}}{\frac{1}{\alpha_k} \sum \limits_{x_1 \dots x_d } p_{x_1 \dots x_d}^{y=1} \cdot p_{x_1 \dots x_d} \cdot \prod_{i=1}^k p_{x_1 \dots x_{d_i}}^{(i|i-1)}}{}{}
\addConstraint{\sum_{x_1 \dots x_{d}} p_{x_1 \dots x_d} \cdot \prod_{j=1}^i p_{x_1 \dots x_{d_j}}^{(j|j-1)}}{\leq \alpha_i,\quad  i < k}{}
\addConstraint{\sum_{x_1 \dots x_{d}} p_{x_1 \dots x_d} \cdot \prod_{j=1}^k p_{x_1 \dots x_{d_j}}^{(j|j-1)}}{=\alpha_k}
\addConstraint{0 \leq p^{(i|i-1)}_{x_1 \dots x_{d_i}}}{\leq 1,\quad 1\leq  i \leq k.}{}{}
\label{pb: non-fair1_expanded}
\end{maxi}
%\end{small}

%We observe that the decision variables are coupled, so we define 
Let us define the new variables 
\begin{equation}\label{eq.pipitilde}
\tilde p^{(i|i-1)}_{x_1 \dots x_{d_i}} = \prod_{j=1}^i p^{(j|j-1)}_{x_1 \dots x_{d_j}}.
\end{equation} 
By substitution, we get the linear program~\eqref{unfairOpt}. Hence, assuming that $\tilde p^{(i|i-1)}_{x_1 \dots x_{d_i}}$ are solutions of~\eqref{unfairOpt}, any $p^{(i|i-1)}_{x_1 \dots x_{d_i}}$ such that \eqref{eq.pipitilde} is satisfied (as is the case for the $p^{(i|i-1)}_{x_1 \dots x_{d_i}}$ defined in the proposition) is a solution of \eqref{pb: non-fair1}.
\end{proof}

In the following lemma, we then show that fairness constraints can also be written as linear homogeneous equations in terms of the transformed variables $\mathbf{p} = (\tilde p^{(i|i-1)}_{x_1 \dots x_{d_i}})^T$.
\begin{lemma}[Linearity of fairness constraints]
\label{proposition: owf linearity}
For both local and global fairness, and for both EO and DP, there exists a matrix $\mathbf{F}$ such that the fairness constraint can be expressed as 
\begin{equation*}
\mathbf{F} \mathbf{p} =  \mathbf{0}.
\end{equation*}
\end{lemma}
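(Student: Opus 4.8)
The plan is to exploit the change of variables $\tilde p^{(i|i-1)}_{x_1 \dots x_{d_i}} = \prod_{j=1}^i p^{(j|j-1)}_{x_1 \dots x_{d_j}}$ from \eqref{eq.pipitilde}, under which every conditional selection probability appearing in a fairness constraint becomes a \emph{ratio} whose numerator is linear in $\mathbf{p}$ and whose denominator is a constant fixed by the (known) joint distribution of the features. Since each fairness constraint equates two such ratios, clearing the constant denominators turns it into a single homogeneous linear equation in $\mathbf{p}$; stacking these equations row by row produces the desired matrix $\mathbf{F}$.

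Concretely, I would start with global fairness under DP. Using \eqref{gf} and substituting $\prod_{j=1}^k p^{(j|j-1)}_{x_1\dots x_{d_j}} = \tilde p^{(k|k-1)}_{x_1\dots x_{d_k}}$, we have for $a\in\{0,1\}$ that $P(\hat y_k=1\mid x_s=a) = N_a/D_a$, where $N_a = \sum_{x_i,\, i\neq s} \tilde p^{(k|k-1)}_{x_1\dots x_{d_k}}\, p_{x_1\dots x_s=a\dots x_d}$ is linear and homogeneous of degree one in $\mathbf{p}$, and $D_a = \sum_{x_i,\, i\neq s} p_{x_1\dots x_s=a\dots x_d} = P(x_s=a)$ is a constant independent of the selection probabilities. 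The DP constraint $N_0/D_0 = N_1/D_1$ is then equivalent, after cross-multiplication, to $D_1 N_0 - D_0 N_1 = 0$, which is of the form $\mathbf{f}^\top\mathbf{p}=0$ for a suitable row vector $\mathbf{f}$ read off from the coefficients; this gives the single row of $\mathbf{F}$ for global DP.

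The remaining three cases follow the same template. For \emph{local} fairness I would use the LF2 form of the constraint together with \eqref{eq1selection}: at each stage $i$ the quantity $P(\hat y_i=1\mid x_s=a)$ equals a ratio with numerator $\sum_{x_i,\, i\neq s}\tilde p^{(i|i-1)}_{x_1\dots x_{d_i}}\, p_{x_1\dots x_s=a\dots x_d}$ (again linear and homogeneous in $\mathbf{p}$) over the same constant $P(x_s=a)$, so each stage contributes one homogeneous row and I simply stack the $k$ rows to form $\mathbf{F}$. For \emph{EO} the only change is that all sums are reweighted by the conditional $p^{y=1}_{x_1\dots x_d}$: the numerators become $\sum_{x_i,\, i\neq s}\tilde p^{(\cdot)} \, p^{y=1}_{x_1\dots x_s=a\dots x_d}\, p_{x_1\dots x_s=a\dots x_d}$ and the denominators become the constants $P(y=1,x_s=a)$, leaving the argument otherwise identical.

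The computation is largely mechanical, so the one point that genuinely needs care---and the reason the lemma is stated as it is---is \emph{homogeneity}: I must check that clearing denominators leaves \emph{no} constant term, i.e.\ the right-hand side is exactly $\mathbf{0}$. This is what makes the constraint of the ``linear homogeneous'' type required by Proposition \ref{proposition: monotonicity and concavity}, and it holds precisely because each $N_a$ is linear in $\mathbf{p}$ with zero intercept while each $D_a$ is a nonnegative constant. The only delicate step is handling degenerate denominators: if $P(x_s=a)=0$ (or $P(y=1,x_s=a)=0$ for EO) the corresponding group is empty and the constraint is vacuous, so that row can be dropped; otherwise the denominators are strictly positive and cross-multiplication is a genuine equivalence.
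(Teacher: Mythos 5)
Your proposal is correct and follows essentially the same route as the paper's proof: expand each conditional selection probability via the change of variables so that the numerator is linear and homogeneous in $\mathbf{p}$ while the denominator is a constant determined by the feature distribution, then clear denominators to obtain one homogeneous row of $\mathbf{F}$ per constraint. Your explicit treatment of the EO reweighting and of degenerate (zero-probability) groups is slightly more careful than the paper's, which only sketches these points, but the underlying argument is identical.
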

\begin{proof}
We present the proof for demographic parity, the idea is the same for equal opportunity. Let us consider the fairness constraint corresponding to stage $i$, $1 \leq i \leq k$:
$$
P(\hat y_i = 1 | x_s=0) = P(\hat y_i = 1 | x_s=1) .
$$
By expanding the left side, we obtain:
\begin{small}
\begin{align*}
P(\hat y_i = 1 | x_s=0) = \frac {\sum \limits_{x_i, i \not = s}\tilde p^{(i|i-1)}_{x_1 \dots x_s=0 \dots x_{d_i}}  \cdot p_{x_1 \dots  x_s=0 \dots x_d}}{\sum \limits_{x_i, i\not = s} p_{x_1 \dots x_s=0 \dots x_d}}.%
\end{align*}%
\end{small}%
Recall that $p_{x_1 \dots  x_s=0 \dots x_d}$ is a fixed parameter and not a decision variable. 
Thus, for both local and global fairness, the fairness constraint (equality of the probabilities for $x_s=0$ and $x_s=1$) can be represented in the form $\mathbf{F} \mathbf{p} =  \mathbf{0}$ for an appropriate $\mathbf{F}$ simply by moving all terms on the left side of the equality. 
\end{proof}

\section{Missing Proofs}

In this appendix, we provide the proofs of all results stated in the paper. We start by introducing notation that will be used throughout the proofs. 

\subsection*{Notation}

To ease the exposition, we introduce the following matrix notation for the problem \eqref{unfairOpt}.

$\bullet$~ \emph{Selection probabilities $\mathbf{p}$}. We concatenate all the selection probabilities in a single vector:
$$\mathbf{p} = (\tilde p^{(i|i-1)}_{x_1 \dots x_{d_i}})^T,$$
where $\tilde p^{(i|i-1)}_{x_1 \dots x_{d_i}}$ is the vector of selection probabilities at stage $i$ for all possible values of $x_1 \dots x_{d_i}$ (whose size depends on $i$).
    
$\bullet$~ \emph{Constraints set $C_{\bm{\alpha}_{-k}, \alpha_k}$}. We have two types of constraints in problem \eqref{unfairOpt}.
 
\begin{enumerate}
\item The constraints that correspond to selection sizes $\alpha_i$, $i=1,\dots,k$. We separate them such that $\mathbf{A} \mathbf{p} \leq \bm{\alpha}_{-k}$ corresponds to selection at first $k-1$ stages, so $$\bm{\alpha}_{-k} = (\alpha_1, \dots, \alpha_{k-1})^T.$$ The constraint $\mathbf{b}^T \mathbf{p} = \bm{\alpha}_k$ corresponds to selection at the last stage, where we require a strict equality.

\item The constraints that do not depend on selection sizes are written in a form of $\mathbf{D} \mathbf{p} \leq \bm{\delta}$ for an appropriate \textbf{D}, where $\bm{\delta} = (1, \dots, 1, 0, \dots, 0)^T$: 1's in $\bm{\delta}$ correspond to constraints $0 \leq \tilde p^{(1|0)}_{x_1 \dots x_{d_1}} \leq 1$ and 0's correspond to constraints $0 \leq \tilde p^{(i|i-1)}_{x_1 \dots x_{d_i}} \leq \tilde p^{(i-1|i-2)}_{x_1 \dots x_{d_{i-1}}}$.
\end{enumerate}

Thus, we write every constraint in matrix form and introduce the following compactly formed constraint set:
$$
    C_{\bm{\alpha}_{-k}, \alpha_k} = \{ \mathbf{p} \in [0, 1]^d:\;   \mathbf{A}  \mathbf{p} \leq \bm{\alpha}_{-k},\;
        \mathbf{b}^T  \mathbf{p}  = \alpha_k ,\;
         \mathbf{D}  \mathbf{p} \leq  \bm{\delta}
        \}.
$$
    
$\bullet$~ \emph{Utility function $U_{\bm{\alpha}_{-k}, \alpha_k}(\mathbf{p})$}, can be written as
$$
U_{ \bm{\alpha}_{-k}, \alpha_k}( \mathbf{p}) = \frac{1}{\alpha_k} \mathbf{c}^T \mathbf{p},
$$
where $\mathbf{c} = (p^{y=1}_{x_1\dots x_d} \cdot p_{x_1\dots x_d})^T$.

Proposition~\ref{Prop:unfairOpt} and Lemma~\ref{proposition: owf linearity} show that the problem of maximizing precision (or equivalently, other metrics, see Proposition~\ref{proposition: metric equivalence}) can be solved through a linear program when the selection sizes sizes $(\bm{\alpha}_{-k}, \alpha_k)$ are given constants. 
This shows that the utility maximization problem in general form can be written as:
\begin{align}
\label{pb.u_star}
{U^*( \bm{\alpha}_{-k}, \alpha_k)} = \max_{\mathbf{p} \in C_{ \bm{\alpha}_{-k}, \alpha_k} \cap C_{f}}{\frac{1}{\alpha_k} \mathbf{c}^T \mathbf{p}}{},
\end{align}
where
\begin{align*}
    C_{\bm{\alpha}_{-k}, \alpha_k} &= \{\mathbf{p} \in [0, 1]^d \;:\;  \mathbf{A} \mathbf{p} \leq \bm{\alpha}_{-k},\;
        \mathbf{b}^T \mathbf{p}  = \alpha_k ,\;
        \mathbf{D} \mathbf{p} \leq \bm{\delta}
        \},\\
    C_f &= \{\mathbf{p} \in [0, 1]^d \;:\;  \mathbf{F} \mathbf{p}  = \mathbf{0}\}.
\end{align*}

\subsection*{Proof of Proposition \ref{proposition: metric equivalence}}

We prove the equivalence only for accuracy (denoted $ACC$); the proof for other metrics follows the same idea. By expanding $ACC$, we obtain:
\begin{small}
\begin{align*}
    ACC &= P(\hat y_k = y) = P(\hat y_k=1, y=1) + P(\hat y_k=0, y=0) \\
    &= P(\hat y_k = 1, y=1) + \left( P(y=0) - P(\hat y_k=1, y=0) \right)\\
    &= 2 \cdot P(\hat y_k = 1, y=1) + P(y=0) - P(\hat y_k =1)\\
    & = 2 \cdot P(\hat y_k=1) \cdot P( y=1 | \hat y_k = 1)  \\
    &+ P(y=0) - P(\hat y_k =1).
\end{align*}
\end{small}
Since the terms $P(\hat y_k=1)$ and $P(y=0)$ are constant, maximization of precision is equivalent to maximization of $ACC$.

\subsection*{Proof of Proposition \ref{proposition: fairness notions relation}}

(1) We present the proof only for demographic parity, the proof for equal opportunity follows the same idea. We do the proof by induction. 
First consider a 2-stage selection algorithm $\hat y = (\hat y_1, \hat y_2)$.
By considering the following quantity:
\begin{small}
\begin{align*}
P(\hat y_2& =1 |\hat y_1=1, x_s=0) = \frac{P(\hat y_2=1, \hat y_1=1, x_s=0)}{P(\hat y_1=1, x_s=0)} \\
&= \frac{P(\hat y_2=1, \hat y_1=1, x_s=0) + \overbrace{P(\hat y_2=1, \hat y_1=0, x_s=0)}^{=0}}{P(\hat y_1=1, x_s=0)} \\
&= \frac{P(\hat y_2=1, x_s=0)}{P(\hat y_1=1|x_s=0) P(x_s=0)} = \frac{P(\hat y_2=1|x_s=0)}{P(\hat y_1=1|x_s=0)},%
\end{align*}%
\end{small}%
the fairness constraint for LF1 at the second stage is:
\begin{small}
\begin{align*}
\frac{P(\hat y_2=1|x_s=0)}{P(\hat y_1=1|x_s=0)} = \frac{P(\hat y_2=1|x_s=1)}{P(\hat y_1=1|x_s=1)}.
\end{align*}
\end{small}
Since we impose fairness at the first stage, then $P(\hat y_1=1|x_s=0)=P(\hat y_1=1|x_s=1)$, so the condition above is equivalent to
\begin{small}
\begin{align*}
P(\hat y_2=1|x_s=0) = P(\hat y_2=1|x_s=1),%
\end{align*}%
\end{small}%
that is exactly the second constraint for the LF2 notion. Thus, the statement is true for a 2 stage selection algorithm.

Second, assuming that the statement is true for $\hat y = (\hat y_1,\dots ,\hat y_i)$, $i > 2$, let us consider the $(i + 1)$-stage selection algorithm. By analogy, considering the quantity
\begin{small}
\begin{align*}
P(\hat y_{i+1}&=1 |\hat y_i=1, x_s=0) = \frac{P(\hat y_{i+1}=1, \hat y_i=1, x_s=0)}{P(\hat y_i=1, x_s=0)} \\
&= \frac{P(\hat y_{i+1}=1, x_s=0)}{P(\hat y_i=1|x_s=0) P(x_s=0)} = \frac{P(\hat y_{i+1}=1|x_s=0)}{P(\hat y_i=1|x_s=0)}%
\end{align*}%
\end{small}
we obtain that the fairness constraint for LF1 at the stage $i+1$ is
\begin{small}
\begin{align*}
\frac{P(\hat y_{i+1}=1|x_s=0)}{P(\hat y_i=1|x_s=0)} = \frac{P(\hat y_{i+1}=1|x_s=1)}{P(\hat y_i=1|x_s=1)}.%
\end{align*}%
\end{small}%
As $P(\hat y_i=1|x_s=0)=P(\hat y_i=1|x_s=1)$ by the assumption of induction, we have $P(\hat y_{i+1}=1|x_s=0) = P(\hat y_{i+1}=1|x_s=1)$.

The point (2) follows from the definitions of LF2 and GF, since the problem GF is less constrained than LF2.

\subsection*{Proof of Proposition \ref{proposition: monotonicity and concavity}}

(1) For a given $\bm{\alpha}_{-k}$, assume that the program attains its maximum $U^*(\bm{\alpha}_{-k}, \alpha_k)$ at point $\mathbf{p}$ and let $\bm{\alpha}^{\prime}_{-k} \geq \bm{\alpha}_{-k}$, where $\geq$ is meant component-wise. 

By setting $\mathbf{p}^{\prime} = \mathbf{p}$, we obtain that $\mathbf{p}^{\prime} \in C_{\bm{\alpha}^{\prime}_{-k}, \alpha_k}$ and thus:
$$U^*(\bm{\alpha}_{-k}, \alpha_k) = U_{\bm{\alpha}_{-k}, \alpha_k}(\mathbf{p}) = U_{\bm{\alpha}^{\prime}_{-k}, \alpha_k}(\mathbf{p}^{\prime})\leq U^*(\bm{\alpha}^{\prime}_{-k}, \alpha_k).$$
Let us consider a problem, when $ \bm{\alpha}_{-k} =  \bm{\alpha'}_{-k}$, it attains its maximum $U^*(\bm{\alpha'}_{-k}, \alpha_k)$ at the point $\mathbf{p'}$. Analogously, let for the second problem $\bm{\alpha}_{-k} =  \bm{\alpha''}_{-k}$, and it attains its maximum $U^*(\bm{\alpha''}_{-k}, \alpha_k)$ at the point $\mathbf{p''}$. Then for any $\lambda \in [0, 1]$ the point $\lambda \mathbf{p'} + (1 - \lambda)\mathbf{p''} \in C_{\lambda \bm{\alpha'}_{-k} + (1-\lambda) \bm{\alpha''}_{-k}, \alpha_k}$ and:
\begin{small}
\begin{align*}
    \lambda  U^*( \bm{\alpha'}, \alpha_k) &+ (1-\lambda) U^*( \bm{\alpha''}_{-k}, \alpha_k) = \lambda \frac{1}{\alpha_k}  \mathbf{c}^T \mathbf{p'} \\
    &+ (1 - \lambda) \frac{1}{\alpha_k}  \mathbf{c}^T \mathbf{p''} \\
    &= \frac{1}{\alpha_k} \mathbf{c}^T(\lambda \mathbf{p'} + (1-\lambda ) \mathbf{p''}) \\
    & =U_{\lambda \bm{\alpha'}_{-k} + (1-\lambda)  \bm{\alpha''}_{-k}, \alpha_k}(\lambda \mathbf{p'} + (1-\lambda ) \mathbf{p''}) \\
    &\leq U^*(\lambda \bm{\alpha'}_{-k} + (1-\lambda)  \bm{\alpha''}_{-k}, \alpha_k).
\end{align*}
\end{small}
(2) For a given $\alpha_k$, assume that the program attains its maximum $U^*(\bm{\alpha}_{-k}, \alpha_k)$ at point $\mathbf{p}$.
Let $\alpha_k^{\prime} = \alpha_k / \gamma$, where $\gamma \in [1, +\infty)$. Then consider $\mathbf{p}^{\prime} = \mathbf{p} / \gamma$. We have $\mathbf{p}^{\prime} \in C_{ \bm{\alpha}_{-k}, \alpha_k^{\prime}}$ and $\mathbf{p}^{\prime}\in C_f$ and:
\begin{align*}
U^*( \bm{\alpha}_{-k}, \alpha_k) &= U_{\bm{\alpha}_{-k}, \alpha_k}(\mathbf{p}) = \frac{1}{\alpha_k} \mathbf{c}^T \mathbf{p} \\
&= \frac{1}{\alpha_k / \gamma} \mathbf{c}^T \mathbf{p}/\gamma =  U_{\bm{\alpha}_{-k}, \alpha_k^{\prime}}(\mathbf{p}^{\prime}) 
 \leq U^*(\bm{\alpha}_{-k}, \alpha_k^{\prime}).%
\end{align*}%

\subsection*{Proof of Proposition \ref{proposition: polf bounds}}

Let us consider the trivial locally fair algorithm. It selects candidates randomly with probability $\alpha_1$ at the first stage and with probability $\alpha_i / \alpha_{i-1}$, $\forall 1 < i  \leq k$. The utility of such random algorithm is equal to $U_{random} (\bm{\alpha}_{-k}, \alpha_k)= P(y=1)$. It is obvious by definition that $$U_{random}(\bm{\alpha}_{-k}, \alpha_k) \leq U_{LF}^*(\bm{\alpha}_{-k}, \alpha_k) \leq  U_{GF}^*(\bm{\alpha}_{-k}, \alpha_k).$$
To obtain an upper bound of $U_{GF}^*(\bm{\alpha}_{-k}, \alpha_k)$ we suppose that all features are available for the selection, meaning that $\alpha_i = 1$, $\forall i < k$. Then $U_{GF}^*(\bm{\alpha}_{-k}, \alpha_k) \leq U_{un}^*(\alpha_1=1, \dots, \alpha_{k-1}=1, \alpha_k) \leq \min(P(y=1)/\alpha_k, 1).$ Thus,
\begin{align*}
PoLF(\bm{\alpha}_{-k}, \alpha_k) &\leq  \frac{\min(P(y=1)/\alpha_k, 1)}{P(y=1)} \\
&= \min\left(\frac{1}{P(y=1)},\;\frac{1}{\alpha_k}\right).
\end{align*}

\section{Additional experimental results}

In this appendix, we provide additional experimental results that support the claims in the paper---in particular by reproducing the curves in the paper for other datasets and other fairness metrics. 

\subsection{The Price of Local Fairness ($PoLF$)}

Figure~\ref{fig: polf-all-def} displays the CDF of $PoLF$ as in Figure \ref{fig: polf-all} but including the results for EO as well. 
\begin{figure}
\centering
\includegraphics[width=\linewidth]{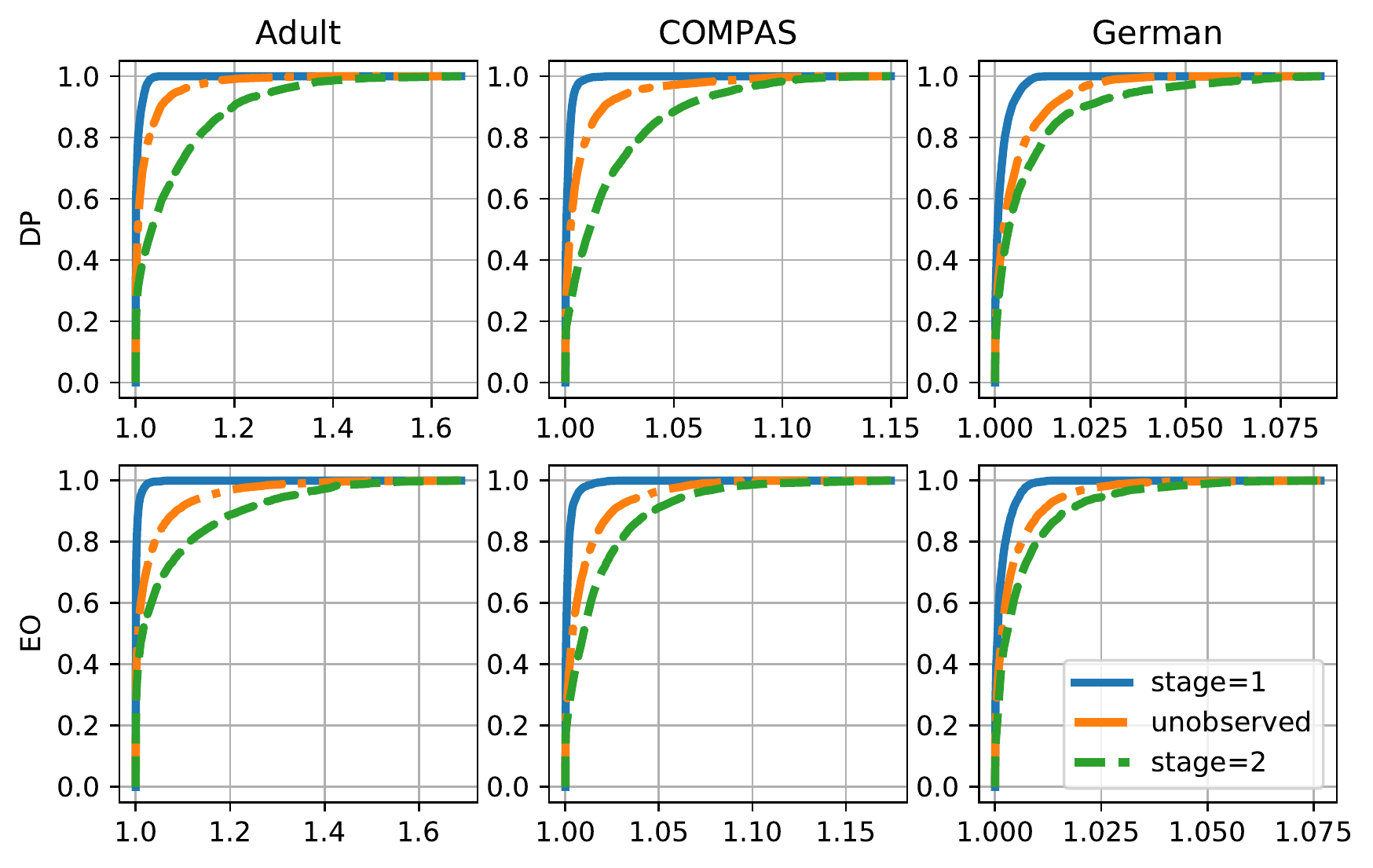}
\caption{The cumulative distribution function of $PoLF$ for $\alpha_2=0.3$ .}
\label{fig: polf-all-def}
\end{figure}

%Figures~\ref{fig:  polf1-polf2-adult}, \ref{fig:  polf1-polf2-compas}, \ref{fig:  polf1-polf2-german} display the joint distribution of $PoLF_1$ ($x$-axis; sensitive attribute $x_s$ is observed at the first stage) and $PoLF_2$ ($y$-axis; sensitive attribute $x_s$ is observed at the second stage) for different data.
%\begin{figure}
%\centering
%\includegraphics[width=\linewidth]{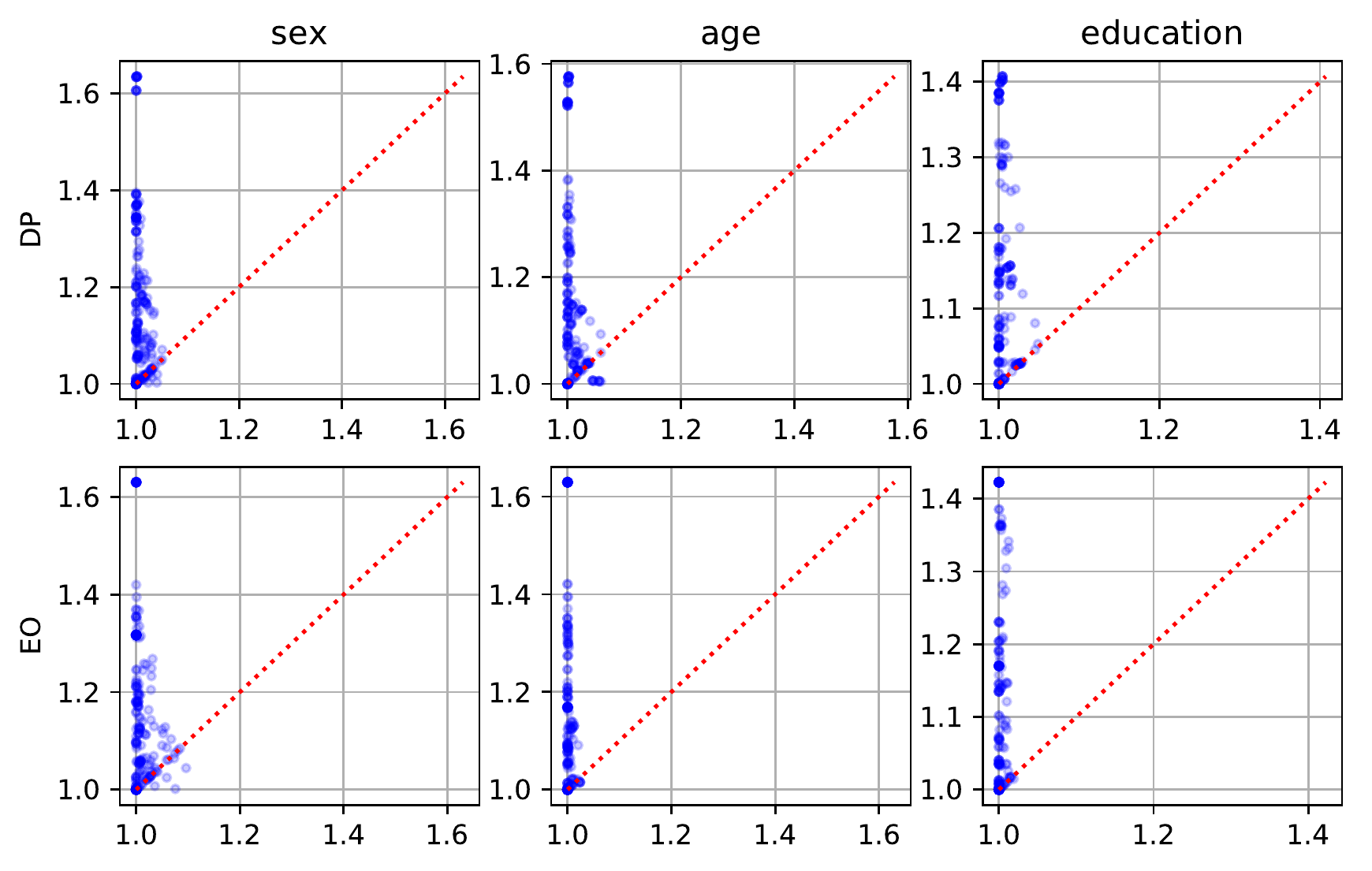}
%\caption{Joint distribution of $PoLF_1$($x$-axis)  and $PoLF_2$($y$-axis) for Adult dataset for $\alpha_2=0.3$.}
%\label{fig: polf1-polf2-adult}
%\end{figure}

%\begin{figure}
%\centering
%\includegraphics[width=\linewidth]{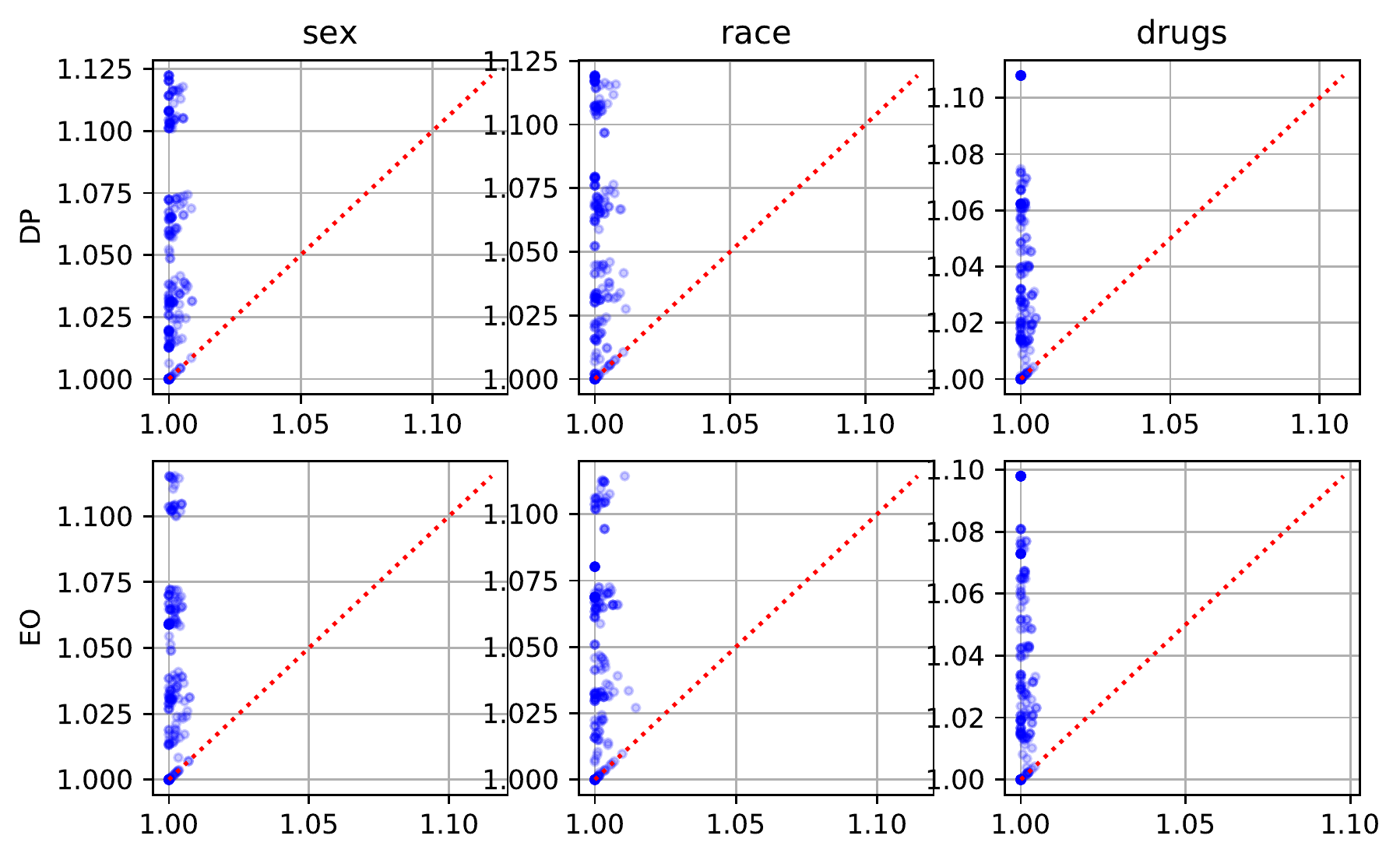}
%\caption{Joint distribution of $PoLF_1$($x$-axis)  and $PoLF_2$($y$-axis) for COMPAS dataset for $\alpha_2=0.3$.}
%\label{fig: polf1-polf2-compas}
%\end{figure}
%
%\begin{figure}
%\centering
%\includegraphics[width=\linewidth]{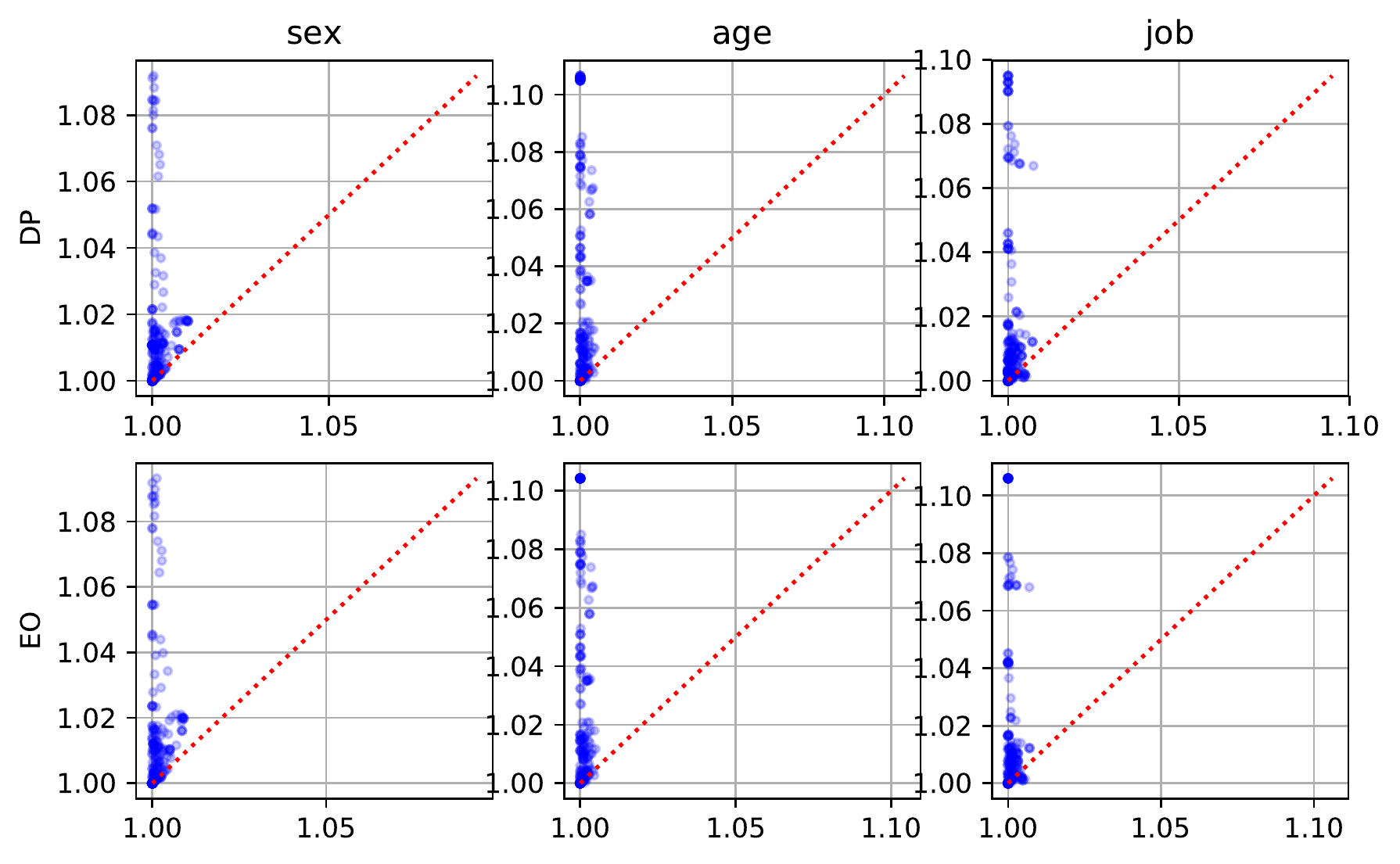}
%\caption{Joint distribution of $PoLF_1$($x$-axis)  and $PoLF_2$($y$-axis)  for German dataset for $\alpha_2=0.3$.}
%\label{fig: polf1-polf2-german}
%\end{figure}

\subsection{The Violation of Local Fairness ($VoLF$)}

Figure~\ref{fig: volf-all} displays the CDF of $VoLF$ as in Figure~\ref{fig: volf-all} but including the results for EO as well. 
\begin{figure}[t!]
\centering
\includegraphics[width=\linewidth]{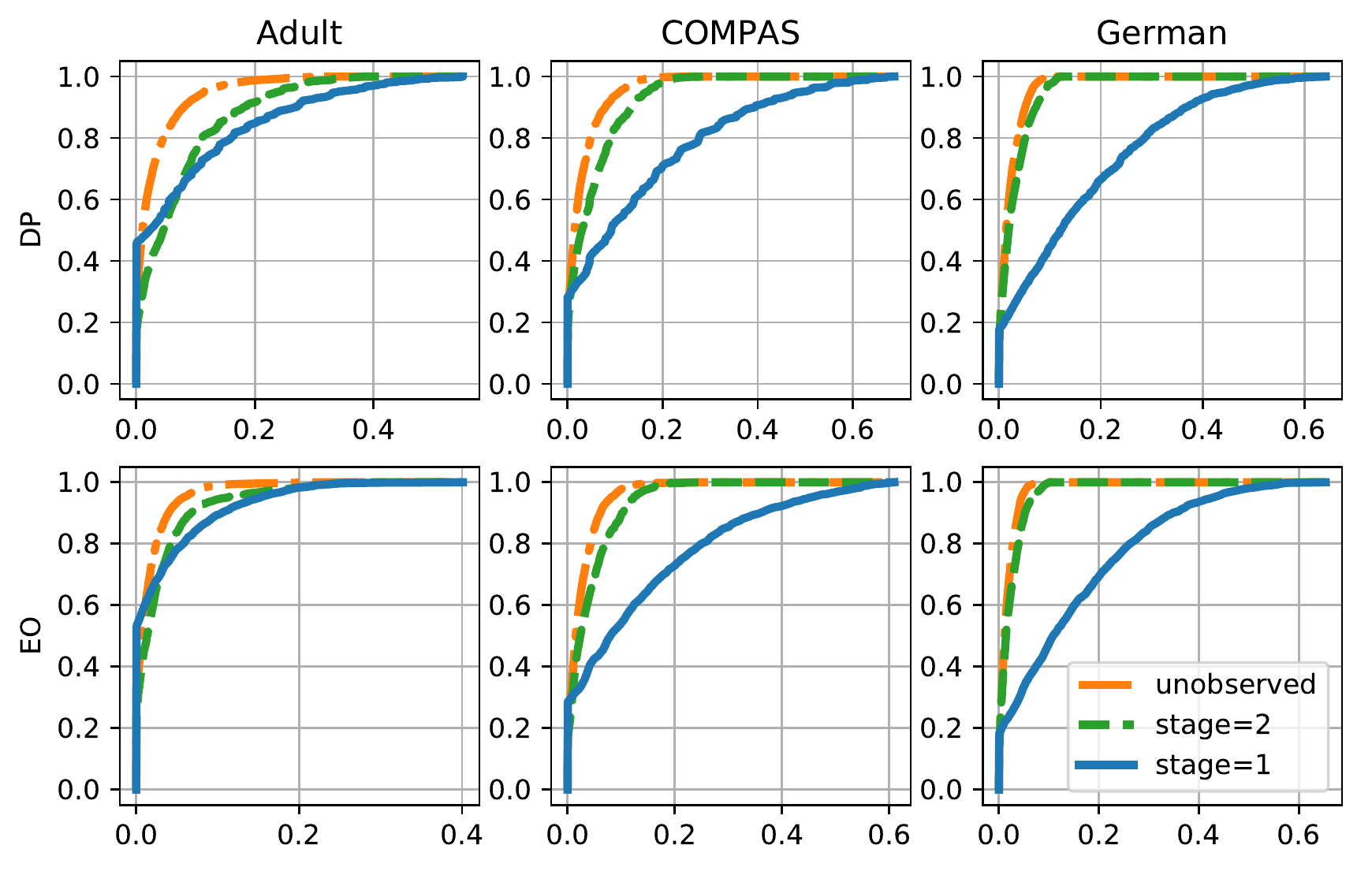}
\caption{The cumulative distribution function of $VoLF$  for $\alpha_2=0.3$.}
\label{fig: volf-all}
\end{figure}

\subsection{$VoLF$ vs $PoLF$ for Various Datasets}

Figures~\ref{fig: joint-compas} and \ref{fig: joint-german} display the joint distribution of $VoLF$ ($y$-axis) and $PoLF$ ($x$-axis) as in Figure~\ref{fig: polf-volf} but for the other two datasets: \texttt{COMPAS} and \texttt{German} respectively.
\begin{figure}[t!]
\centering
\includegraphics[width=\linewidth]{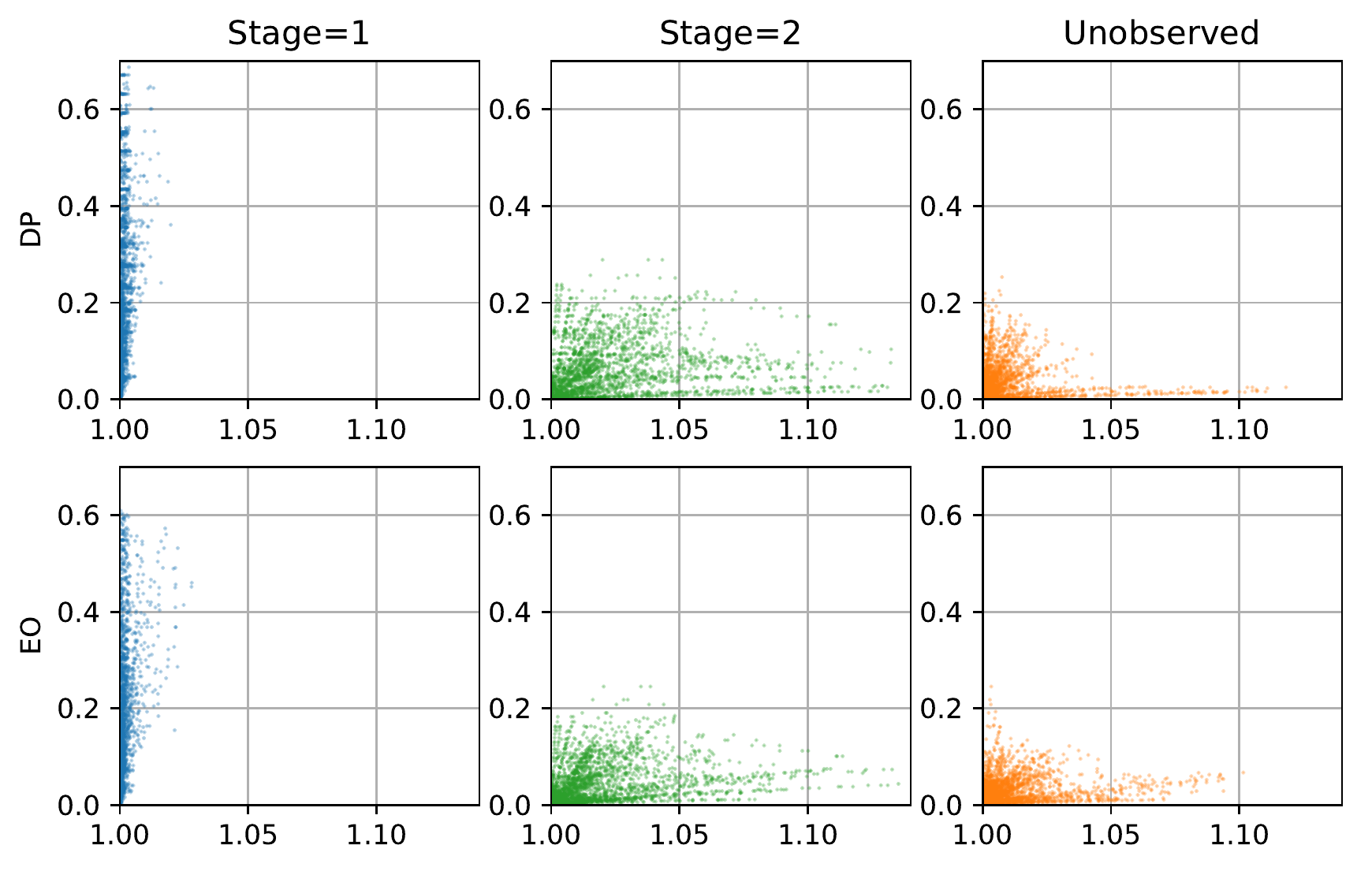}
\caption{Joint distribution of $VoLF$ and $PoLF$ for \texttt{COMPAS} dataset  and $\alpha_2=0.3$.}
\label{fig: joint-compas}
\end{figure}

\begin{figure}
\centering
\includegraphics[width=\linewidth]{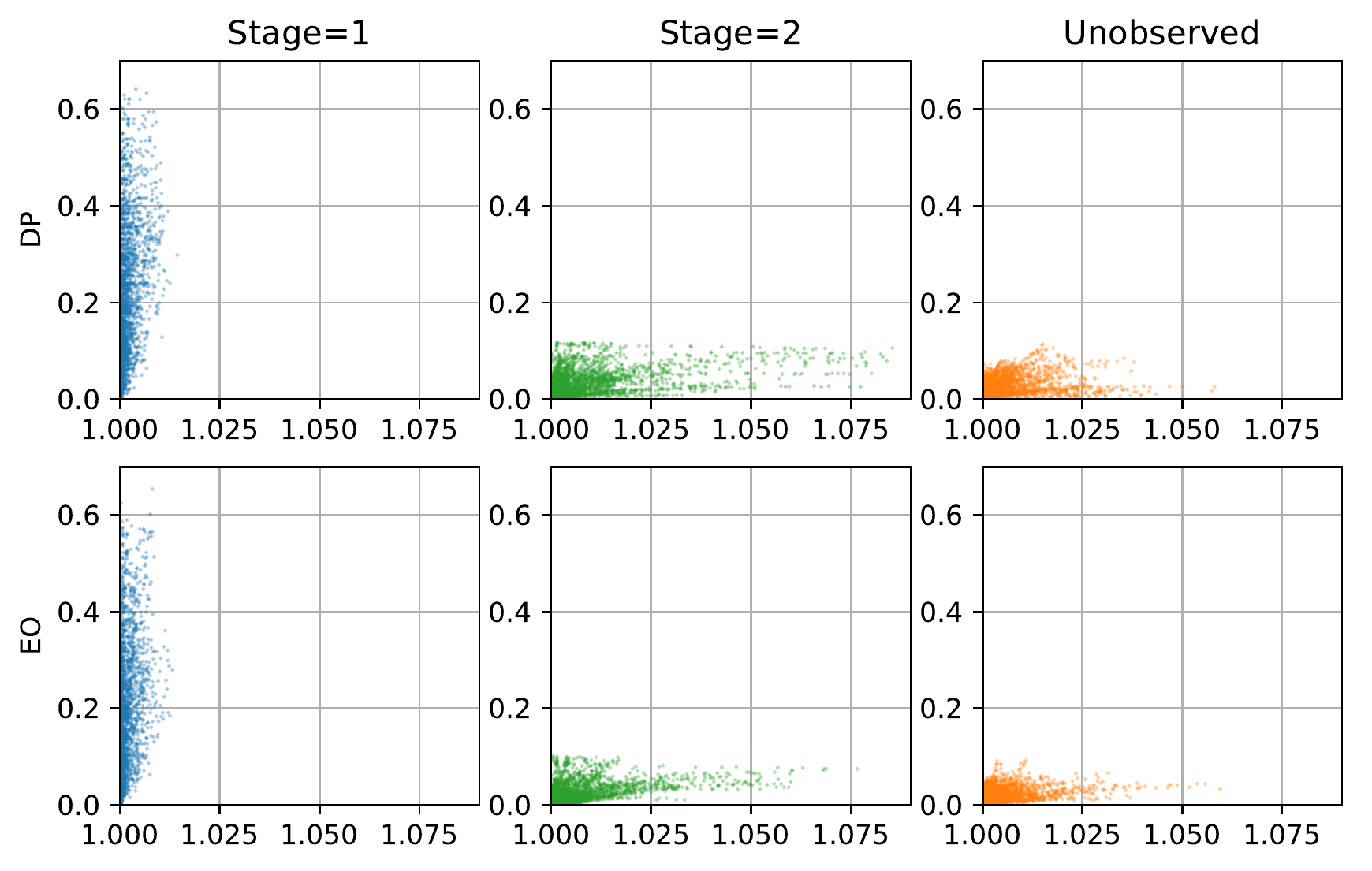}
\caption{Joint distribution of $VoLF$ and $PoLF$ for \texttt{German} dataset  and $\alpha_2=0.3$.}
\label{fig: joint-german}
\end{figure}

\subsection{$PoLF$ of $3$-stage algorithm}

In this subsection we present the results for $PoLF$ of $3$-stage selection algorithm. The procedure to calculate the $PoLF$ values is similar to the one we used for two-stage algorithm. We suppose that we observe only one feature at every stage, we suppose that the one of the rest features is a sensitive $x_s$ and consider the cases when it is observed at first, second or third stage of selection process. We calculate the value of $PoLF$ for every possible 4 feature combinations (three decision variables and one  being sensitive) out of 6 and for every discretized value of $\alpha_1$ and $\alpha_2$, such that $\alpha_3=0.3 \leq \alpha_2 \leq \alpha_1$. Figure \ref{fig: polf-3-all-def} displays the empirical CDFs of $PoLF$ for 3-stage algorithm. The observations are the same as in two-stage case: later the sensitive attribute $x_s$ is revealed, larger is the price of imposing local constraints.
\begin{figure}
\centering
\includegraphics[width=\linewidth]{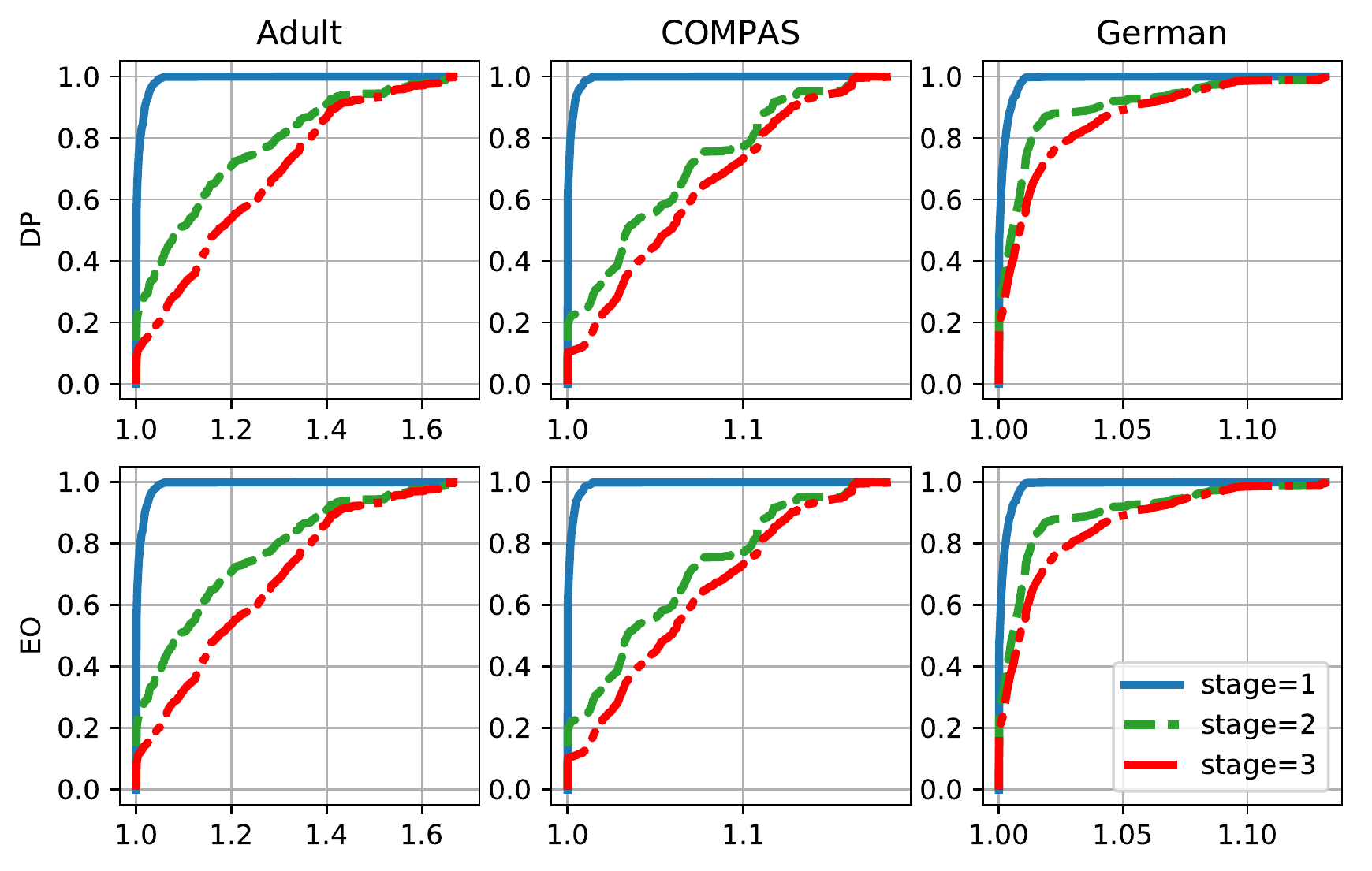}
\caption{The cumulative distribution function of $PoLF$ of $3$-stage algorithm for $\alpha_3=0.3$.}
\label{fig: polf-3-all-def}
\end{figure}

On Figure \ref{fig: polf-3-1vs2} we show the joint distribution of $PoLF$ when the $x_s$ (shown on top of each subfigure) is observed at the first stage (we call it $PoLF_1$) and $PoLF$ when the $x_s$ is observed at the second stage ($PoLF_2$) for \texttt{Adult} dataset. We observe that the value of $PoLF_1$ is sufficiently smaller than the corresponding value of $PoLF_2$.
\begin{figure}
\centering
\includegraphics[width=\linewidth]{}
\caption{Joint distribution of $PoLF_1$ and  $PoLF_2$ of $3$-stage algorithm for \texttt{Adult} dataset and  $\alpha_3=0.3$.}
\label{fig: polf-3-1vs2}
\end{figure}

On Figure \ref{fig: polf-3-2vs3} we show the joint distribution of $PoLF$ when the $x_s$ is observed at the second stage (we call it $PoLF_2$) and $PoLF$ when the $x_s$ is observed at the third stage ($PoLF_3$) for \texttt{Adult} dataset. We again observe that the value of $PoLF_2$ is  smaller than the corresponding value of $PoLF_3$, since the most of the points lie above the diagonal line which is marked as dashed red line.
\begin{figure}
\centering
\includegraphics[width=\linewidth]{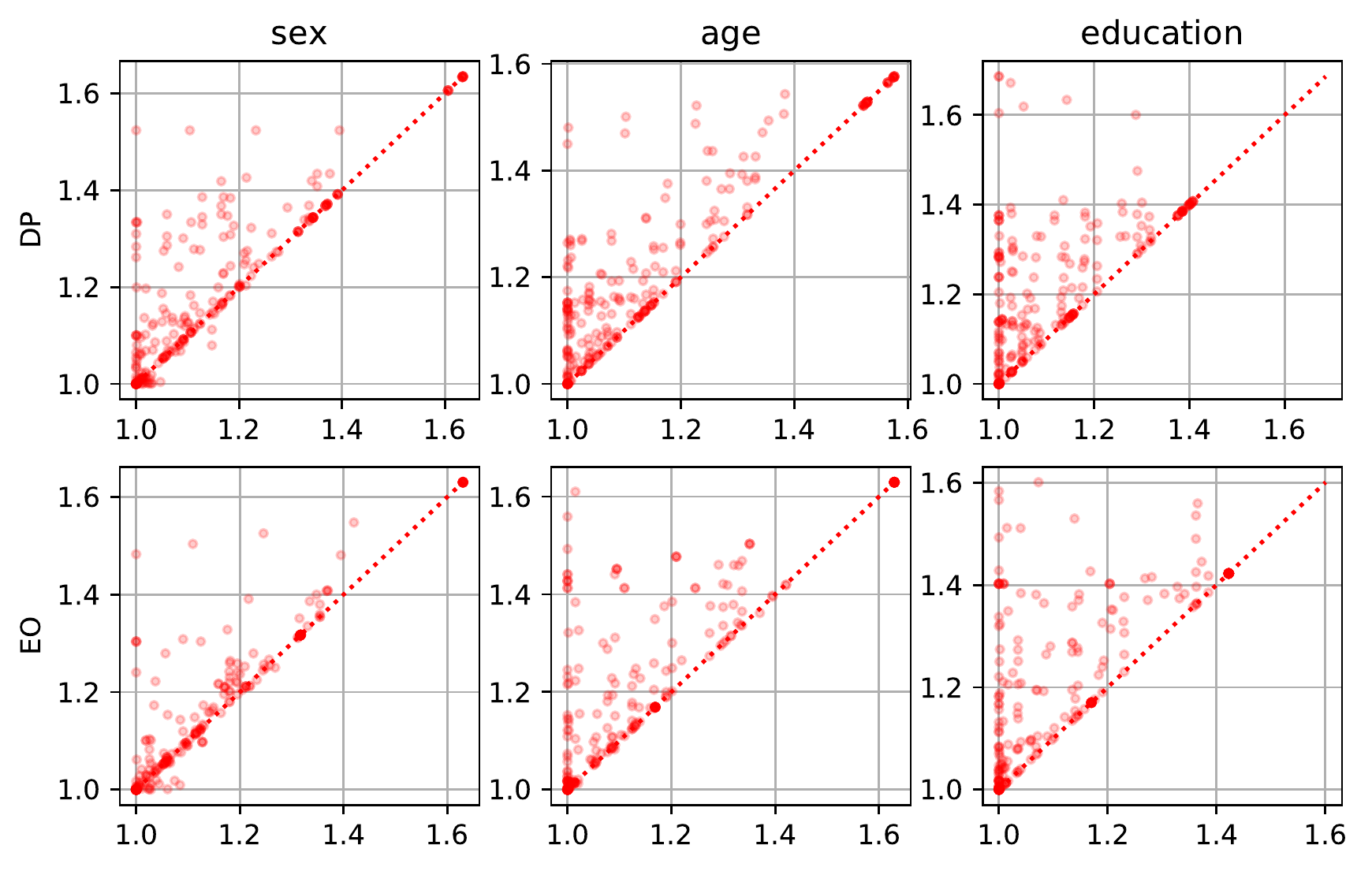}
\caption{Joint distribution of $PoLF_2$ and  $PoLF_3$ of $3$-stage algorithm  for \texttt{Adult} dataset and  $\alpha_3=0.3$.}
\label{fig: polf-3-2vs3}
\end{figure}

On Figures \ref{fig: polf-3-1vs2-compas}-\ref{fig: polf-3-2vs3-german} we display the joint distributions in the same manner as on Figures  \ref{fig: polf-3-1vs2} and  \ref{fig: polf-3-2vs3} but for \texttt{COMPAS} and \texttt{German Credit} datasets.
\begin{figure}
\centering
\includegraphics[width=\linewidth]{}
\caption{Joint distribution of $PoLF_1$ and  $PoLF_2$ of $3$-stage algorithm for \texttt{COMPAS} dataset and  $\alpha_3=0.3$.}
\label{fig: polf-3-1vs2-compas}
\end{figure}

\begin{figure}[t!]
\centering
\includegraphics[width=\linewidth]{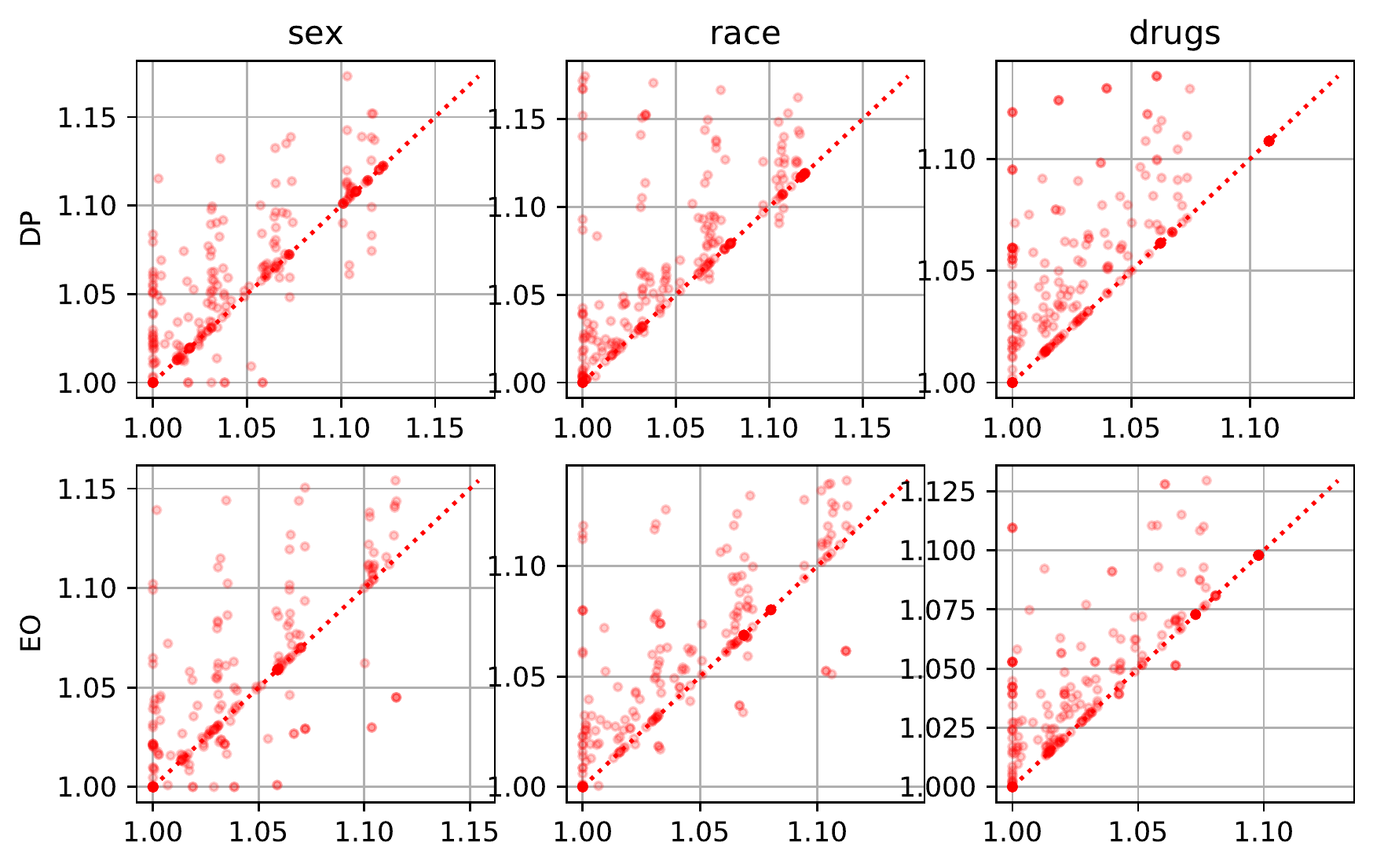}
\caption{Joint distribution of $PoLF_2$ and  $PoLF_3$ of $3$-stage algorithm  for \texttt{COMPAS} dataset and  $\alpha_3=0.3$.}
\label{fig: polf-3-2vs3-compas}
\end{figure}

\begin{figure}[t!]
\centering
\includegraphics[width=\linewidth]{polf1-polf2-german}
\caption{Joint distribution of $PoLF_1$ and  $PoLF_2$ of $3$-stage algorithm for \texttt{German Credit} dataset and  $\alpha_3=0.3$.}
\label{fig: polf-3-1vs2-german}
\end{figure}

\begin{figure}
\centering
\includegraphics[width=\linewidth]{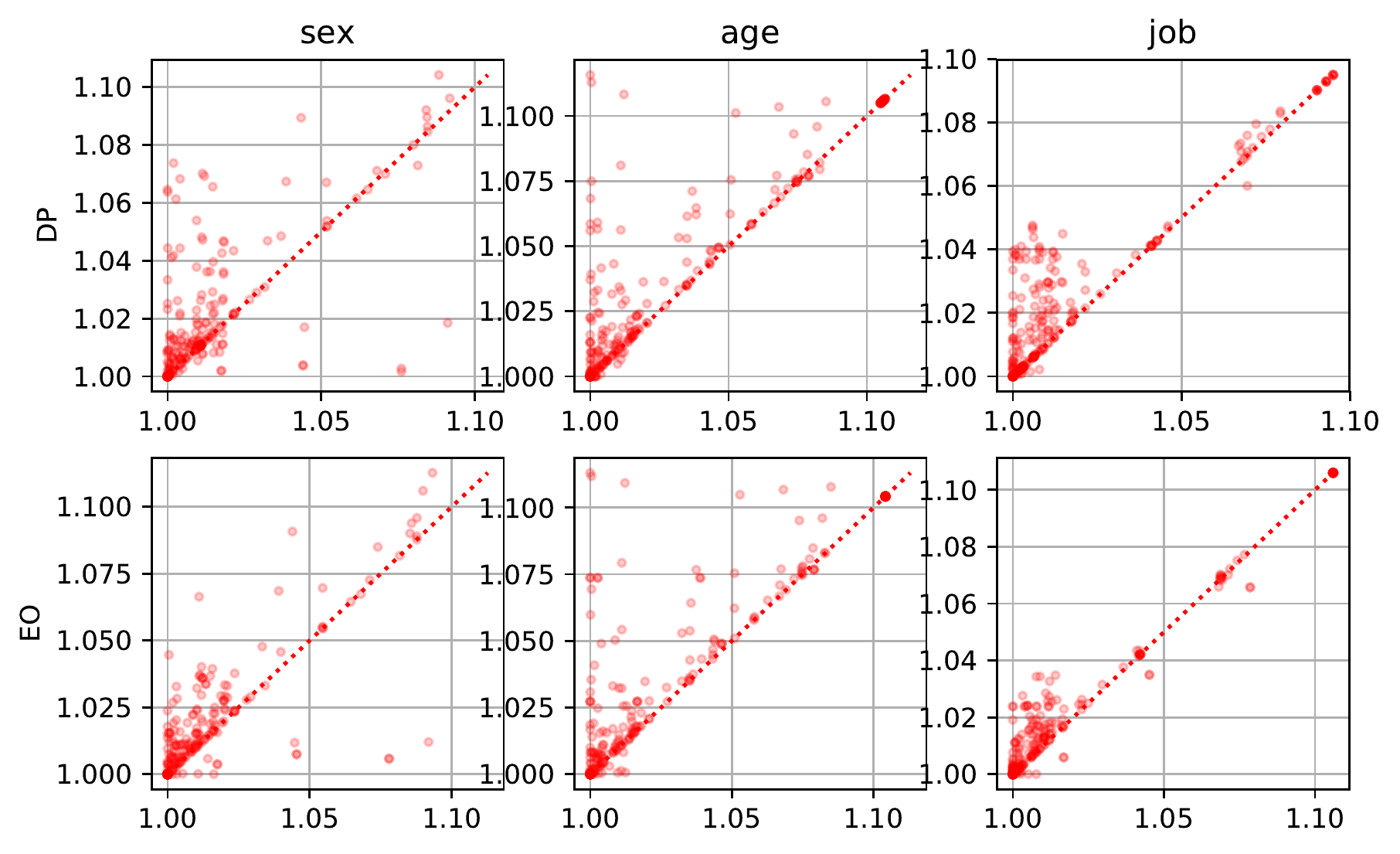}
\caption{Joint distribution of $PoLF_2$ and  $PoLF_3$ of $3$-stage algorithm  for \texttt{German Credit} dataset and  $\alpha_3=0.3$.}
\label{fig: polf-3-2vs3-german}
\end{figure}

\section{Data Description}

As mentioned in the paper, we binarize all features of datasets. In this appendix we describe this step in more detail.

\subsection{\texttt{Adult dataset}}

In the \texttt{Adult} dataset from the UCI repository \cite{uci} there are 48842 candidates, each described by 14 features. The label \textit{income} denotes if candidate gains more than 50.000 dollars annually. For all our experiments we binarize and leave only the 6 following features: \textit{sex} (is male), \textit{age} (is above 35),  \textit{native-country} (from the EU or US), \textit{education} (has Bachelor or Master degree), \textit{hours-per-week} (works more than 35 hours per week) and \textit{relationship} (is married).

\subsection{\texttt{COMPAS} Dataset}

The \texttt{COMPAS} dataset \cite{Propublica} is a dataset that is used to train the COMPAS algorithm. It contains information about prisoners, such as their name, gender, age, race, start of the sentence, end of the sentence, charge description etc. and a label $y$=\textit{recidivism}, that is $y=1$ if person is likely to reoffend and 0, otherwise.

We prepare original \texttt{COMPAS} dataset for our means by selecting statistics only for Caucasian and African-American defendants, leaving only 6 features and binarizing them. The features that we use are following: \textit{sex} (is male), \textit{young} (younger than 25), \textit{old} (older than 45),  \textit{long sentence} (sentence was longer than 30 days), \textit{drugs} (the arrest was due to selling or possessing drugs), \textit{race} (is Caucasian).

\subsection{\texttt{German} Dataset}

The \texttt{German Credit} data from \cite{uci} contains information about applicants for credit. As with other datasets, we binarize feature values. The label feature $y$=\textit{returns} shows if applicant payed for his loan, and we binarize and use 6  features: \textit{job} (is employed), \textit{housing} (owns house),  \textit{sex} (is male) \textit{savings} (greater than 500 DM),  \textit{credit history} (all credits payed back duly), \textit{age} (older than 50).

\section{Utility Maximization In Limit Of $n \rightarrow \infty$}

In this appendix we provide an intuition on multistage selection process in limit of infinitely large $n$.  In short, the optimal candidate selection problem with finite number of candidates $n$ appears to be a $k$-stage stochastic optimization problem which is difficult to solve exactly. By letting the number of candidates $n$ to be infinitely large allows us to reformulate the problem in a much simpler manner such that we are able to find an optimal selection probabilities easily.

To prove the statements in this appendix we will exploit the two following classical results from the probability theory, see \cite{rohatgi}.
\begin{lemma}[Chebyshev-Bienaym\'e inequality]
Let $X$  be a  random variable, then 
$$P\left( \left| X - EX \right| \geq \varepsilon \right) \leq \frac{Var (X)}{\varepsilon^2}.$$
\end{lemma}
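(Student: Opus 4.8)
The plan is to obtain the bound from the elementary observation that the variance, being the expectation of the nonnegative random variable $(X - \E X)^2$, can only be made smaller if we discard its contribution away from the tail event. First I would fix $\varepsilon > 0$, introduce the event $A = \{|X - \E X| \geq \varepsilon\}$ together with its indicator $\mathbf{1}_A$, and record the pointwise inequality
$$(X - \E X)^2 \geq \varepsilon^2 \, \mathbf{1}_A,$$
which holds everywhere: on $A$ the left-hand side is at least $\varepsilon^2$, while off $A$ the right-hand side vanishes and the left-hand side is nonnegative.

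Taking expectations of this pointwise inequality and using monotonicity and linearity of the expectation then gives
$$\mathrm{Var}(X) = \E\!\left[(X - \E X)^2\right] \geq \varepsilon^2 \, \E[\mathbf{1}_A] = \varepsilon^2 \, P(A),$$
and dividing through by $\varepsilon^2 > 0$ yields exactly $P(|X - \E X| \geq \varepsilon) \leq \mathrm{Var}(X)/\varepsilon^2$, as claimed. Equivalently, one may first invoke Markov's inequality---for a nonnegative random variable $Y$ and $a > 0$, $P(Y \geq a) \leq \E[Y]/a$---and apply it to $Y = (X - \E X)^2$ with $a = \varepsilon^2$, noting that the events $\{|X - \E X| \geq \varepsilon\}$ and $\{(X - \E X)^2 \geq \varepsilon^2\}$ coincide.

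Since this is a classical result, there is essentially no genuine obstacle: the entire content is the single tail-truncation step bounding $(X - \E X)^2$ below by $\varepsilon^2 \mathbf{1}_A$. The only point meriting a word of care is ensuring the variance is finite so that the expectations are well defined; under the implicit assumption that $\mathrm{Var}(X) < \infty$ the argument is immediate, and if $\mathrm{Var}(X) = \infty$ the bound holds trivially. I would therefore present the direct one-line argument above rather than routing through Markov's inequality separately.
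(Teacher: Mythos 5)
Your proof is correct: the pointwise bound $(X - \E X)^2 \geq \varepsilon^2\,\mathbf{1}_A$ followed by monotonicity of expectation is the canonical argument, and your remarks on the infinite-variance case and the equivalence with Markov's inequality are accurate. The paper itself gives no proof of this lemma---it is quoted as a classical result with a citation to a probability textbook---so there is nothing to compare against; your argument is complete and standard.
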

\begin{lemma}[Properties of convergence in probability ]
Let $X_n$ and $Y_n$ be sequences of random variables.
\begin{enumerate}
\item If  $X_n \xrightarrow{P} X$ and $a$ is a constant, then $a X_n \xrightarrow{P} a  X$.
\item If $X_n \xrightarrow{P} X$ and $Y_n \xrightarrow{P} Y$, then $X_n + Y_n \xrightarrow{P}  X + Y$.
\item If  $X_n \xrightarrow{P} X$ , then $1 / X_n \xrightarrow{P} 1 / X$.
%\item If $X_n \xrightarrow{P} a$ and $Y \xrightarrow{P} b$, then $X_n \cdot Y_n \xrightarrow{P}  a \cdot b$.
\item  If $X_n \xrightarrow{L} X$ and  $|X_n - Y_n| \xrightarrow{P} 0$, then $ Y_n \xrightarrow{L} X$.
\end{enumerate}
\label{lemma: convergence properties}
\end{lemma}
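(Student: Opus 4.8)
The plan is to verify the four statements separately, in the order given; all four are classical, and only the definition of convergence in probability (together with elementary set inclusions) is needed for the first three, whereas the last requires relating convergence in distribution to convergence in probability and is the genuinely delicate step. For statement (1) I would dispose of $a=0$ trivially and, for $a\neq0$, observe that for every $\varepsilon>0$
\[
P\bigl(|aX_n-aX|\geq\varepsilon\bigr)=P\bigl(|X_n-X|\geq\varepsilon/|a|\bigr)\longrightarrow0
\]
as $n\to\infty$, which is exactly $aX_n\xrightarrow{P}aX$; note that the Chebyshev--Bienaym\'e inequality is not even needed here. For statement (2) the triangle inequality yields the inclusion
\[
\{|X_n+Y_n-(X+Y)|\geq\varepsilon\}\subseteq\{|X_n-X|\geq\varepsilon/2\}\cup\{|Y_n-Y|\geq\varepsilon/2\},
\]
so a union bound bounds $P(|X_n+Y_n-(X+Y)|\geq\varepsilon)$ by $P(|X_n-X|\geq\varepsilon/2)+P(|Y_n-Y|\geq\varepsilon/2)$, and both terms vanish by hypothesis.

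For statement (3) the starting point is the identity $1/X_n-1/X=(X-X_n)/(X_nX)$, so the task reduces to bounding the denominator away from $0$; this implicitly presumes that $1/X$ is well defined, i.e. $P(X=0)=0$. I would fix $\eta>0$ and note that on the event $\{|X|>\eta\}\cap\{|X_n-X|<\eta/2\}$ one has $|X_n|>\eta/2$, hence $|1/X_n-1/X|\leq 2|X_n-X|/\eta^{2}$. Splitting $P(|1/X_n-1/X|\geq\varepsilon)$ over this good event and its two complementary pieces $\{|X_n-X|\geq\eta/2\}$ and $\{|X|\leq\eta\}$, then letting $n\to\infty$ (which kills the first two using the hypothesis) and finally $\eta\to0$ (which kills the last using $P(|X|\leq\eta)\to P(X=0)=0$), gives $1/X_n\xrightarrow{P}1/X$. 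Equivalently, one may invoke the continuous mapping theorem applied to $g(x)=1/x$, continuous on $\mathbb{R}\setminus\{0\}$.

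The hard part is statement (4), where $\xrightarrow{L}$ denotes convergence in distribution and must be tied to the convergence in probability of $X_n-Y_n$. I would argue through cumulative distribution functions: write $F$ for the distribution function of $X$ and fix a continuity point $t$ of $F$. For any $\varepsilon>0$ one has
\[
P(Y_n\leq t)\leq P(X_n\leq t+\varepsilon)+P(|X_n-Y_n|\geq\varepsilon),
\]
and symmetrically $P(Y_n\leq t)\geq P(X_n\leq t-\varepsilon)-P(|X_n-Y_n|\geq\varepsilon)$. Choosing $t\pm\varepsilon$ to be continuity points of $F$, letting $n\to\infty$ with $X_n\xrightarrow{L}X$ and $|X_n-Y_n|\xrightarrow{P}0$ gives $\limsup_n P(Y_n\leq t)\leq F(t+\varepsilon)$ and $\liminf_n P(Y_n\leq t)\geq F(t-\varepsilon)$; letting $\varepsilon\to0$ and using continuity of $F$ at $t$ then forces $\lim_n P(Y_n\leq t)=F(t)$, i.e. $Y_n\xrightarrow{L}X$. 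The step requiring the most care is the juggling of the two one-sided estimates and the restriction to continuity points $t\pm\varepsilon$, since $F$ may possess countably many jumps; this bookkeeping is where I expect the principal difficulty to lie.
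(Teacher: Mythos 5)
The paper itself offers no proof of this lemma: it is stated as a pair of ``classical results from probability theory'' and dispatched with a citation to a textbook (Rohatgi), so there is no internal argument to compare yours against. Your four proofs are the standard textbook ones and all of them are correct: (1) by rescaling $\varepsilon$, (2) by the triangle inequality and a union bound, (3) by bounding $|X_n|$ away from zero on a high-probability event and sending $\eta\to 0$, and (4) by the usual sandwich $P(X_n\leq t-\varepsilon)-P(|X_n-Y_n|\geq\varepsilon)\leq P(Y_n\leq t)\leq P(X_n\leq t+\varepsilon)+P(|X_n-Y_n|\geq\varepsilon)$ at continuity points. Two details you handle are worth keeping: you make explicit the hypothesis $P(X=0)=0$ that item (3) silently requires (the paper's statement omits it, and it does hold where the lemma is applied, since the relevant limits are strictly positive budgets and probabilities), and in (4) you correctly restrict $t\pm\varepsilon$ to continuity points of $F$, which is legitimate because $F$ has at most countably many jumps. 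No gaps.
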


The following lemma gives us the limit on the proportion of selected at stage $i$ candidates as $n \rightarrow \infty$.
\begin{lemma} Let by $n^{(i)}_{x_1 \dots x_{d_i}}$ denote the number of candidates having features $x_1 \dots x_{d_i}$ that are selected at the stage $i$, then for $n \rightarrow \infty$:
$$\frac{n^{(i)}_{x_1\dots x_{d_{i}}} }{n} \xrightarrow{L} p_{x_1 \dots x_{d_i}} \prod_{j=1}^{i} p^{(j|j-1)}_{x_1 \dots x_{d_j}}.$$
\label{lemma: num selected}
\end{lemma}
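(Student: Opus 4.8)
The plan is to fix a feature combination $x_1 \dots x_{d_i}$ and rewrite the count $n^{(i)}_{x_1 \dots x_{d_i}}$ as a sum of $n$ independent, identically distributed indicator variables, so that the claim reduces to a weak law of large numbers that follows directly from the Chebyshev--Bienaym\'e inequality stated above. Concretely, for each candidate $m \in \{1, \dots, n\}$ let $Z_m$ be the indicator of the event that the first $d_i$ features of candidate $m$ equal $x_1 \dots x_{d_i}$ \emph{and} that candidate $m$ is selected at every stage $1, \dots, i$. Since each candidate draws its full feature vector independently from the joint distribution, and since in the probabilistic selection model the per-stage acceptance coin flips are drawn independently across candidates, the variables $Z_1, \dots, Z_n$ are i.i.d.\ Bernoulli, and by construction $n^{(i)}_{x_1 \dots x_{d_i}} = \sum_{m=1}^n Z_m$.

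First I would compute $\E[Z_m]$. The key observation is that for every stage $j \le i$ the acceptance probability $p^{(j|j-1)}_{x_1 \dots x_{d_j}}$ depends only on the first $d_j$ features, and $d_j \le d_i$; hence, conditionally on the first $d_i$ features being equal to $x_1 \dots x_{d_i}$, the survival through stage $i$ is a product of independent acceptance events whose success probabilities are fixed constants. The event defining $Z_m$ therefore factorizes, giving $\E[Z_m] = p_{x_1 \dots x_{d_i}} \prod_{j=1}^i p^{(j|j-1)}_{x_1 \dots x_{d_j}}$, where $p_{x_1 \dots x_{d_i}}$ denotes the marginal probability of observing the first $d_i$ features. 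This is exactly the target limit.

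Next, since each $Z_m$ takes values in $[0,1]$ its variance is at most $1/4$, so $\mathrm{Var}(n^{(i)}_{x_1 \dots x_{d_i}}/n) = \mathrm{Var}(Z_1)/n \le 1/(4n)$. Applying the Chebyshev--Bienaym\'e inequality to the random variable $X = n^{(i)}_{x_1 \dots x_{d_i}}/n$, whose mean is $\E[Z_1]$, gives for every $\varepsilon > 0$ that $P(|n^{(i)}_{x_1 \dots x_{d_i}}/n - \E[Z_1]| \ge \varepsilon) \le 1/(4 n \varepsilon^2) \to 0$ as $n \to \infty$. Hence $n^{(i)}_{x_1 \dots x_{d_i}}/n$ converges in probability to the constant $\E[Z_1]$, and convergence in probability to a constant implies convergence in law, which yields the stated result.

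The step I expect to require the most care is the justification of the i.i.d.\ structure and the factorization of $\E[Z_m]$, i.e.\ making precise that the candidates are processed independently (the budget constraints hold only \emph{in expectation}, so the selection couples no two candidates) and that survival through stage $i$ is an independent product of per-stage acceptances, each depending only on already-revealed features. Once this independence is established, the remainder is a one-line variance bound. An alternative would be an induction on $i$ that expresses $n^{(i)}$ from $n^{(i-1)}$ and invokes the convergence-properties lemma for products and ratios of convergent sequences; but the direct sum-of-indicators argument avoids the bookkeeping of how the newly revealed features $x_{d_{i-1}+1} \dots x_{d_i}$ split the survivors of the previous stage, and is therefore the route I would take.
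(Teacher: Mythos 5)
Your proof is correct, but it takes a genuinely different route from the paper. The paper proceeds by induction on the stage $i$: it conditions the count at each stage on the count at the previous one (so that $n^{(i+1)}_{x_1\dots x_{d_{i+1}}}$ given $n^{(i)}_{x_1\dots x_{d_{i+1}}}$ is binomial), applies the Chebyshev--Bienaym\'e inequality at every step to show the normalized difference vanishes in probability, and then chains the limits together using the properties of convergence in probability and in law (products by constants, sums, and the Slutsky-type property~4). You instead collapse each candidate's entire trajectory --- feature draw plus all $i$ acceptance coin flips --- into a single Bernoulli indicator $Z_m$, observe that the $Z_m$ are i.i.d.\ because the probabilistic selection model flips coins independently per candidate and the budget constraints bind only in expectation, compute $\E[Z_m]$ by the factorization you describe, and finish with one application of Chebyshev. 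Your key justification (that nothing couples two candidates, so the $Z_m$ are genuinely i.i.d.) is exactly right in this model and is the same independence the paper implicitly relies on when it asserts the conditional binomial distributions. What your approach buys is brevity and the avoidance of the induction bookkeeping and the auxiliary convergence lemma; what the paper's induction buys is an explicit handle on the intermediate quantities $n^{(i)}_{x_1\dots x_{d_{i+1}}}/n$ (after new features are revealed but before the next selection), which it reuses when discussing the budget $B_n(i)$ and the precision limit. Both arguments are valid.
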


Before proving the above lemma  let us define the budget $B_n(i)$ at the stage $i$ as
$B_n(i) = \frac{1}{n} \sum_{x_1 \dots x_{d_i}} n^{(i)}_{x_1 \dots x_{d_i}}.$ Then using property 2 from Lemma \ref{lemma: convergence properties} and Lemma \ref{lemma: num selected} we obtain that
 $B_n(i) \xrightarrow{L} \sum_{x_1\dots x_{d_i}} p_{x_1\dots x_{d_i}} \prod_{j=1}^i p^{(j|j-1)}_{x_1 \dots x_{d_j}}$. 
 
The precision is the proportion of good candidates among selected then using Lemma \ref{lemma: convergence properties}, the argument in Lemma \ref{lemma: num selected} and fact the the final stage selection size is fixed to $\alpha_k$, the precision converges in law to $\frac{1}{\alpha_k} \sum_{x_1\dots x_{d}} p_{x_1\dots x_{d}} p^{y=1}_{x_1\dots x_{d}} \prod_{j=1}^k p^{(j|j-1)}_{x_1 \dots x_{d_j}}$ as $n$ goes to infinity.
Hence, the equations \eqref{eqppv}--\eqref{eq1selection} hold as the number of candidates $n \rightarrow \infty$.
Let us prove Lemma \ref{lemma: num selected} by induction on stage number $i$.
\begin{proof}

\emph{Base of induction.} Before we do any selection:
\begin{align*}
n^{(0)}_{x_1\dots x_{d_1}} \sim \text{Bin}\left(n, p_{x_1 \dots x_{d_1}}\right),
\end{align*}
then using Chebyshev--Bienaym\'e inequality:
\begin{align*}
P\left(\left| \frac{n^{(0)}_{x_1\dots x_{d_1}} }{n}  - \frac{n \cdot p_{x_1 \dots x_{d_1}}}{n} \right| \geq \varepsilon  \right) &\leq \frac{n p_{x_1 \dots x_{d_1}} (1 - p_{x_1 \dots x_{d_1}}) }{\varepsilon^2  n^2 }\\
& \leq \frac{1}{\varepsilon^2 n}
\end{align*}
hence $\frac{n^{(0)}_{x_1\dots x_{d_1}} }{n} \xrightarrow{P} p_{x_1 \dots x_{d_1}}, \; n \rightarrow \infty$. 

After, when we perform the selection at the first stage:
\begin{align*}
n^{(1)}_{x_1\dots x_{d_1}} | n^{(0)}_{x_1\dots x_{d_1}} \sim \text{Bin}\left(n^{(0)}_{x_1\dots x_{d_1}},  p^{(1|0)}_{x_{1} \dots x_{d_1} }\right).
\end{align*}
\begin{align*}
P&\left(\left| \frac{n^{(1)}_{x_1\dots x_{d_1}} }{n}  - \frac{n^{(0)}_{x_1\dots x_{d_1}}}{n}p^{(1|0)}_{x_{1} \dots x_{d_1} } \right| \geq \varepsilon  \right) \leq \\
 &\leq \frac{n^{(0)}_{x_1\dots x_{d_1}} p^{(1|0)}_{x_{1} \dots x_{d_1} }  (1 - p^{(1|0)}_{x_{1} \dots x_{d_1} }  )}{\varepsilon^2 \cdot n^2 } \leq \frac{1}{\varepsilon^2 n},
\end{align*}
so using properties 1 and  4 from Lemma \ref{lemma: convergence properties}, we obtain that $\frac{n^{(1)}_{x_1\dots x_{d_1}} }{n} \xrightarrow{L} p_{x_{1} \dots x_{d_1} }  p^{(1|0)}_{x_{1} \dots x_{d_1} }$.

\emph{Induction Step.} Let us consider the stage $i+1$. By the assumption of induction:
$$\frac{n^{(i)}_{x_1 \dots x_{d_i}}}{n} \xrightarrow{L} p_{x_1\dots x_{d_i}} \prod_{j=1}^i p^{(j|j-1)}_{x_1\dots x_{d_j}}.$$
After making the selection at the stage $i$ we observe the new features $d_{i} + 1, \dots, d_{i+1}$, so
$$n^{(i)}_{x_1\dots x_{d_{i+1}}} | n^{(i)}_{x_1\dots x_{d_i}} \sim \text{Bin}\left(n^{(i)}_{x_1\dots x_{d_i}},p_{x_{d_{i}+1} \dots x_{d_{i+1}}  | x_1 \dots x_{d_i}}\right),$$
where $p_{x_{d_{i}+1} \dots x_{d_{i+1}}  | x_1 \dots x_{d_i}} := P(x_{d_{i}+1} \dots x_{d_{i+1}}  | x_1 \dots x_{d_i}).$
Then
\begin{align*}
P&\left(\left| \frac{n^{(i)}_{x_1\dots x_{d_{i+1}}} }{n}  - \frac{n^{(i)}_{x_1\dots x_{d_i}}}{n}p_{x_{d_{i}+1} \dots x_{d_{i+1}}  | x_1 \dots x_{d_i}} \right| \geq \varepsilon  \right) \leq \\
&\leq \frac{n^{(i)}_{x_1\dots x_{d_i}}  p_{x_{d_{i}+1} \dots x_{d_{i+1}}  | x_1 \dots x_{d_i}} (1- p_{x_{d_{i}+1} \dots x_{d_{i+1}}  | x_1 \dots x_{d_i}})}{\varepsilon^2 \cdot n^2 }\\
&\leq \frac{1}{\varepsilon^2 n},
\end{align*}
hence 
$ \frac{n^{(i)}_{x_1\dots x_{d_{i+1}}} }{n} \xrightarrow{L} p_{x_1 \dots x_{d_{i+1}}} \prod_{j=1}^{i} p^{(j|j-1)}_{x_1 \dots x_{d_j}}$.

When we perform the selection at the stage $i+1$:
\begin{align*}
n^{(i+1)}_{x_1\dots x_{d_{i+1}}} | n^{(i)}_{x_1\dots x_{d_{i+1}}} \sim \text{Bin}\left(n^{(i)}_{x_1 \dots x_{d_{i+1}}}, p^{(i+1|i)}_{x_1\dots x_{d_i+1}}\right)
\end{align*}
then again using Chebyshev-Bienaym\'e inequality for $\frac{n^{(i+1)}_{x_1\dots x_{d_{i+1}}} }{n}$:
\begin{align*}
P&\left(\left| \frac{n^{(i+1)}_{x_1\dots x_{d_{i+1}}} }{n}  - \frac{n^{(i)}_{x_1\dots x_{d_{i+1}}} \cdot p^{(i+1|i)}_{x_1\dots x_{d_{i+1}}}}{n} \right| \geq \varepsilon  \right) \leq\\
&\leq \frac{n^{(i)}_{x_1\dots x_{d_{i+1}}} p^{(i+1|i)}_{x_1\dots x_{d_{i+1}}}(1-p^{(i+1|i)}_{x_1\dots x_{d_{i+1}}}) }{\varepsilon^2 \cdot n^2 } \leq \frac{1}{\varepsilon^2 n},
\end{align*}
so 
$\left| \frac{n^{(i+1)}_{x_1\dots x_{d_{i+1}}} }{n} -  \frac{n^{(i)}_{x_1\dots x_{d_{i+1}}}\cdot p^{(i+1|i)}_{x_1\dots x_{d_{i+1}}} }{n} \right|  \xrightarrow{P} 0, \; n \rightarrow \infty $ and finally:
$$\frac{n^{(i+1)}_{x_1\dots x_{d_{i+1}}} }{n} \xrightarrow{L} p_{x_1 \dots x_{d_{i+1}}} \prod_{j=1}^{i+1} p^{(j|j-1)}_{x_1 \dots x_{d_j}}.$$
\end{proof}

\end{appendices}

}{}

\end{document}